\title{Neural Multivariate Regression: Qualitative Insights from the Unconstrained Feature Model}
\author{%
  George Andriopoulos$^{1}$\thanks{Corresponding author: \texttt{ga73@nyu.edu}} \quad Soyuj Jung Basnet$^{3}$ \quad Juan Guevara $^{4}$ \quad Li Guo$^{2}$ \quad Keith Ross$^{1}$\AND
  \text{\normalfont $^1$ New York University Abu Dhabi \quad
$^2$ New York University Shanghai}\\ $^3$ New York University \quad $^4$ MBZUAI
}
\theoremstyle{plain}
\newtheorem{theorem}{Theorem}[section]
\newtheorem{corollary}[theorem]{Corollary}
\theoremstyle{definition}
\newtheorem{definition}[theorem]{Definition}
\theoremstyle{remark}
\newtheorem{remark}[theorem]{Remark}
\newcommand{\bP}{{\mathbf P}}
\newcommand{\bW}{{\mathbf W}}
\newcommand{\bY}{{\mathbf Y}}
\newcommand{\bR}{{\mathbf R}}
\newcommand{\bA}{{\mathbf A}}
\newcommand{\bH}{{\mathbf H}}
\newcommand{\bI}{{\mathbf I}}
\newcommand{\bZ}{{\mathbf Z}}
\newcommand{\bC}{{\mathbf C}}
\newcommand{\bV}{{\mathbf V}}
\newcommand{\bh}{{\mathbf h}}
\newcommand{\bzero}{{\mathbf 0}}
\newcommand{\by}{{\mathbf y}}
\newcommand{\byb}{{\bar{\mathbf y}}}
\newcommand{\bYb}{{\bar{\mathbf Y}}}
\newcommand{\bx}{{\mathbf x}}
\newcommand{\bb}{{\mathbf b}}
\newcommand{\bc}{{\mathbf c}}
\newcommand{\bw}{{\mathbf w}}
\newcommand{\bSigma}{{\mathbf \Sigma}}
\newcommand{\byi}{{\mathbf y}_i}
\newcommand{\bxi}{{\mathbf x}_i}
\newcommand{\lH}{\lambda_{\bH}}
\newcommand{\lW}{\lambda_{\bW}}
\newcommand{\lw}{\lambda_{\bw}}
\newcommand{\Rd}{\mathbb{R}^d}
\renewcommand{\eqref}[1]{(\ref{#1})}
\begin{document}

\maketitle

\begin{abstract}
The Unconstrained Feature Model (UFM) is a mathematical framework that enables closed-form approximations for minimal training loss and related performance measures in deep neural networks (DNNs). This paper leverages the UFM to provide qualitative insights into neural multivariate regression, a critical task in imitation learning, robotics, and reinforcement learning. Specifically, we address two key questions: (1) How do multi-task models compare to multiple single-task models in terms of training performance? (2) Can whitening and normalizing regression targets improve training performance? The UFM theory predicts that multi-task models achieve strictly smaller training MSE than multiple single-task models when the same or stronger regularization is applied to the latter, and our empirical results confirm these findings. Regarding whitening and normalizing regression targets, the UFM theory predicts that they reduce training MSE when the average variance across the target dimensions is less than one, and our empirical results once again confirm these findings.  These findings highlight the UFM as a powerful framework for deriving actionable insights into DNN design and data pre-processing strategies.
\end{abstract}

\section{Introduction}

Deep neural networks (DNNs) have become a cornerstone of modern machine learning, enabling transformative advancements in fields such as computer vision, natural language processing, and robotics. These models, often comprising millions to trillions of parameters, are trained by minimizing regularized loss functions over a high-dimensional parameter space. However, the non-convexity and high dimensionality of these loss functions make it practically impossible to derive closed-form expressions for their minimum value or for performance metrics, such as cross-entropy loss or mean-squared error (MSE), evaluated at a global minimum. This limitation has impeded theoretical exploration and limited our understanding of how design choices influence DNN behavior.

A recent breakthrough in this domain is the Unconstrained Feature Model (UFM) and the closely related Layer-Peeled Model \citep{fang2021exploring, mixon2020neural}. A typical DNN consists of a nonlinear feature extractor followed by a final linear layer. Inspired by the universal approximation theorem \citep{cybenko1989approximation, hornik1989multilayer}, the UFM assumes that the feature extractor can map any set of training examples to any desired set of feature vectors. The UFM framework further simplifies the problem by replacing L2-regularization of the feature extractor parameters with  L2-regularization of the feature vectors themselves. These assumptions lead to a new optimization problem that, while still non-convex, is mathematically tractable. In particular, the UFM allows for the derivation of closed-form expressions for minimal training loss, providing a powerful theoretical tool to analyze DNN behavior.

In this paper, we explore whether the closed-form expressions derived from the UFM can yield qualitative insights into the behavior of DNNs. We focus specifically on neural multivariate regression, where the DNN predicts a vector of targets. This task is central to several important applications, including imitation learning in autonomous driving and robotics, as well as deep reinforcement learning.

For multivariate regression with MSE loss, the UFM decomposes the training MSE into two distinct components: a term linear in the regularization constant; a term that depends on the eigenvalues of the sample covariance matrix of the target data. This decomposition highlights how both regularization and data structure influence training performance. 

Building on these basic UFM results, we investigate two fundamental questions in multivariate regression:
\begin{enumerate}
    \item {\bf Single multi-task model versus multiple single-task models:} Multivariate regression can be approached by training a single multi-task model that predicts all targets simultaneously or by training multiple single-task models, each dedicated to a specific target type. Multi-task models are significantly more efficient in terms of computation and memory. Under the UFM assumption, can we mathematically show that one approach has superior performance over the other?
    \item {\bf Whitening and normalizing regression targets:} Whitening transforms data to have a covariance equal to the identity matrix, while normalization adjusts variances to be equal to one while preserving correlations. These techniques have been extensively applied to inputs and intermediate features but rarely to regression targets. Under the UFM assumption, can we mathematically show that whitening or normalizing the targets improves performance?  
\end{enumerate}
	
To respond to these questions, we combine theoretical insights from the UFM with extensive empirical analyses. Our empirical evaluations are conducted on four datasets: three robotic locomotion datasets and one autonomous driving dataset. Specifically:
\begin{itemize}
    \item For the multi-task model versus multiple single-task models problem, we use UFM theory to show that multi-task models achieve strictly smaller training MSE than the single-task models when using  stronger regularization for the latter. Even with equivalent regularization, multi-task models are theoretically guaranteed to have training MSE less than or equal to that of single-task models. These qualitative results are then confirmed empirically: multi-task models consistently outperform single-task models across all regularization settings in terms of training MSE. 
    \item UFM theory and empirical evidence demonstrate that whitening and normalizing the targets have an important effect on performance. When applied, these techniques adjust the training MSE in a manner dependent on the inherent variance of the target data. Notably, they improve training performance when the average variance per target dimension is less than one, but harm training performance when the converse holds. This highlights the importance of examining the variance structure of target data when considering the pre-processing strategies of whitening and normalization. 
\end{itemize}

Beyond these specific results, our work demonstrates the broader potential of the UFM as a theoretical and practical tool. By enabling closed-form solutions, the UFM provides an opportunity to gain important qualitative insights into DNNs.  Our findings highlight how the UFM can guide design decisions such as choosing between multi-task and single-task models, and on choosing data pre-processing strategies such as whitening and normalization.

\section{Related work}

The development of deep learning has primarily been driven by heuristics and empirical experience, with limited progress in establishing a solid theoretical framework. The primary challenge lies in the highly non-convex nature of the loss landscape, which complicates optimization and theoretical analysis. Among existing approaches, Neural Tangent Kernel (NTK) theory \citep{arora2019fine, du2018gradient, du2019gradient, jacot2018neural, zou2020gradient} provides a valuable tool for understanding optimization and convergence behavior in neural networks within the infinite-width regime. However, NTK focuses on the early stages of training and neglects the rich nonlinear feature learning that characterizes practical neural networks. 

In contrast, the Unconstrained Feature Model (UFM) \citep{mixon2020neural} and the Layer-Peeled Model \citep{fang2021exploring} offer an alternative perspective that emphasizes the nonlinearity of feature representations. 
Unlike NTK, which focuses on parameter dynamics, UFM assumes universal feature representations and treats the last-layer features as free parameters. This distinctive formulation allows UFM to capture phenomena that NTK cannot explain, such as neural collapse \citep{papyan2020prevalence, weinan2022emergence, zhu2021geometric}, where class features converge to their means, and class means form a simplex equiangular tight frame (ETF).

Research with the UFM includes training on imbalanced data \citep{hong2023neural, thrampoulidis2022imbalance, yanneural, yang2022inducing}, using normalized features \citep{yaras2022neural}, and using various loss functions such as mean-squared error (MSE) \citep{han2021neural, zhou2022optimization}, label smoothing, and focal losses \citep{guo2024cross, zhou2022all}. Additionally, UFM has been applied to analyze multi-label classification \citep{li2023neural}, scenarios involving a large number of classes \citep{jiang2023generalized}, and multivariate regression tasks \citep{andriopoulos2024prevalence}. By examining the landscape of the loss function, UFM provides valuable insights into the optimization and convergence behavior of neural networks \citep{yaras2022neural, zhou2022optimization, zhu2021geometric}. 
Recent research has extended the classical UFM framework to investigate the propagation of neural collapse beyond the last layer to earlier layers of DNNs, i.e., UFM with two layers connected by nonlinearity in \citep{tirer2022extended}, UFM with multiple linear layers in \citep{dang2023neural}, deep UFM (DUFM) with nonlinear activations in the context of classification \citep{sukenik2024deep, sukenik2024neural}. Such extensions aim to generalize UFM principles to more complex architectures, thereby bridging the gap between theoretical analysis and real-world DNN implementations.  

There is a growing body of literature that explores trade-offs between multi-task learning (MTL) and multiple single-task learning (STL) for computer vision to train models that can perform multiple vision tasks (e.g., object detection, segmentation, depth estimation) simultaneously \citep{kendall2018multi, misra2016cross, vandenhende2021multi}, and also for reinforcement learning to train a single agent to perform multiple tasks, leveraging shared representations and knowledge transfer across tasks \citep{rusu2015policy, teh2017distral, yu2020meta}. 
Several works explicitly compare MLT and SLT, analyzing their strengths and weaknesses \citep{fifty2021efficiently, ruder2017overview}.

In parallel, normalization and whitening techniques have been widely studied in machine learning. Beyond their conventional application to raw input data \citep{lecun1998gradient}, these techniques have also been applied in intermediate layers to not only normalize \citep{ioffe2015batch}, but to also decorrelate features in order to stabilize training, reduce complexity in latent space representation, and improve convergence \citep{huang2018decorrelated, siarohin2018whitening, vincent2010stacked}. \cite{huang2019iterative} also proposed iterative methods for approximating full whitening in deep layers with the aim of improving optimization efficiency. Furthermore, whitening features helps to reduce domain shifts to align distributions across source and target domains, aiding in transfer learning \cite{sun2016deep}.

Although, there has been significant work on the UFM, 
to the best of our knowledge, this is the first paper that uses the UFM model explicitly in order to mathematically address 
the theoretical behavior of training MSE loss in multi-task regression vs single-tasks regressions, and when using 
target whitening and normalization.  


\section{Approximations motivated by the UFM} \label{ufmapproxsec}

We consider the multivariate regression problem with $M$ training examples $\{(\bxi,\byi), i=1,\ldots,M \}$ with input $\bxi\in \mathbb{R}^D$ and target $\byi\in \mathbb{R}^n$. For univariate regression, $n=1$.
For the regression task, the DNN takes as input an example $\bx \in \mathbb{R}^D$ and produces an output $\by = f(\bx) \in \mathbb{R}^n$. For most DNNs, including those used in this paper, this mapping takes the form
$f_{\theta,\bW,\bb}(\bx)=\bW \bh_{\mathbf{\theta}}(\bx) + \bb$,
where $\bh_{\mathbf{\theta}}(\cdot): \mathbb{R}^D\to \Rd$ is the nonlinear feature extractor consisting of several nonlinear layers, $\bW$ is a $n \times d$ matrix representing the final linear layer in the model, and $\bb \in \mathbb{R}^n$ is the bias vector. 
The parameters ${\mathbf{\theta}}$, $\bW$, and $\bb$ are all trainable.

We typically train the DNN using gradient descent to minimize the regularized L2 loss: 
\begin{align} \label{standard_loss}
   \mathcal{L}_{\mbox{params}}(\theta, \bW,\bb) := 
    \frac{1}{2M} \sum_{i=1}^{M} ||f_{\theta,\bW,\bb}(\bxi) - \byi||_2^2 
    + \frac{\lambda_{\theta}}{2} ||\mathbf{\theta}||_2^2 +  \frac{\lW}{2}||\bW||_F^2,
\end{align}
where $||\cdot||_2$ and $||\cdot||_F$ denote the $L_2$-norm and the Frobenius norm respectively. As commonly done in practice, in our experiments we set all the regularization parameters to the same value, which we refer to as the weight decay parameter $\lambda_{WD}$, that is, we set $\lambda_{\theta} = \lW =\lambda_{WD}$.

In this paper, we consider a modified version  of the standard problem, namely, to train the DNN to minimize
\begin{align} \label{modified_loss}
    \mathcal{L}_{\mbox{features}}(\theta, \bW,\bb) :=
    \frac{1}{2M} \sum_{i=1}^{M} ||f_{\theta,\bW,\bb}(\bxi) - \byi||_2^2
    + \frac{\lH}{2M} \sum_{i=1}^{M} ||\bh_{\theta}(\bx_i) ||_2^2  + \frac{\lW}{2}||\bW||_F^2,
\end{align}
where $\lH$ and $\lW$ are non-negative regularization parameters. 
We refer to the standard loss function (\ref{standard_loss}) as the {\em parameter-regularized loss function} and to the modified loss function (\ref{modified_loss}) as the {\em feature-regularized loss function}. Although the two problems are not the same, by regularizing the features, we are implicitly constraining the internal parameters $\theta$.

Our goal is to derive mathematically motivated approximations for the training error related with  (\ref{modified_loss}) and then use these approximations to gain qualitative insights into fundamental issues in neural multivariate regression. To this end, we consider  the \textit{Unconstrained Feature Model (UFM)-loss function} \citep{fang2021exploring, mixon2020neural} which is defined as follows: 
\begin{align} \label{formofloss}
   \mathcal{L}(\bH, \bW,\bb) 
     := \frac{1}{2M} ||\bW \bH + \bb \mathbf{1}_M^{T} - \bY||_F^2 
     + \frac{\lH}{2M} ||\bH||_{F}^2 
     + \frac{\lW}{2} ||\bW||_F^2,
\end{align}
$\bH := [\bh_1 \cdots \bh_M]\in \mathbb{R}^{d\times M}$, $\bY := [\by_1 \cdots \by_M]\in \mathbb{R}^{n\times M}$. Note that the UFM-loss function solely depends on $\bH$, $\bW$ and $\bb$ and does not depend on the inputs or the parameters $\theta$. It is easily seen that $\min_{\bH, \bW, \bb} \mathcal{L}(\bH, \bW,\bb)\le \min_{\theta, \bW, \bb} \mathcal{L}_{\mbox{features}}(\theta, \bW,\bb)$. The minimal values of the two loss functions are equal if the feature extractor is so expressive that any feature vector configuration is attainable for the inputs in the training set. 
This simple observation is very powerful since minimizing the UFM-loss function is mathematically tractable and can be solved in closed form without requiring any training.

\subsection{Closed-form expressions}

Let  $\bSigma$ denote the $n \times n$ sample covariance matrix corresponding to the targets $\{\byi, i=1,\ldots,M \}$: $\bSigma = M^{-1}(\bY - \bYb) (\bY - \bYb)^{T}$, where $\bYb = [\byb \cdots \byb]$, and $\byb = M^{-1} \sum_{i=1}^M \byi$.  
Throughout this paper, we make the natural assumption that $\bY$ and $\bSigma$ have full rank. Thus $\bSigma$ has a positive definite square root, which we denote by $\bSigma^{1/2}$. We define the \textit{training MSE} for the UFM as:
\begin{equation}
    \text{MSE}(\bH, \bW, \bb):=\frac{1}{M}||\bW \bH + \bb \mathbf{1}_M^T-\bY||_F^2.
\end{equation}
 Reorder the eigenvalues of $\bSigma$ so that 
$\lambda_{\max} := \lambda_1 \geq \lambda_2 \geq \cdots \geq \lambda_n := \lambda_{\min}> 0$.
Let $c := \lH \lW$ and
let $j^{*} := \max \{ j : \lambda_j \geq c \}$ with the convention $j^{*}=0$ when the set in question is the empty set. When $n=1$, let $\sigma^2$ denote the variance of the 1-d targets over the $M$ samples. 
For any $p\times q$ matrix $\bC$ with columns $\bc_1, \bc_2,\ldots,\bc_q$, we denote $[\bC]_j := [\bc_1 \; \bc_2 \cdots\bc_j \; \bzero \cdots \bzero]$. 
Our main analytical results will be based on the closed-form solutions given below.

\begin{theorem} \label{gendim} 
Suppose $(\bH^*,\bW^*,\bb^*)$ minimizes the UFM-loss $\mathcal{L}(\bH, \bW,\bb)$ given by \eqref{formofloss}. Then,
\begin{equation} \label{loss_train}
\mathcal{L}(\bH^*, \bW^*, \bb^*)= \textnormal{MSE}(\bH^*, \bW^*, \bb^*) + \sqrt{c} \sum_{i=1}^{n} \eta_i,
\end{equation}
where $\eta_i$ is the $i$-th diagonal entry of $[\bSigma^{1/2}-\sqrt{c} \bI_n]_{j^{*}}$, 
and
\begin{equation} \label{MSE_train}
\textnormal{MSE}(\bH^*, \bW^*, \bb^*)=j^{*} c + \sum_{i=j^{*}+1}^n \lambda_i.
\end{equation}
If $n=1$, we have that
\begin{equation} \label{uniMSE}
\textnormal{MSE}(\bH^*, \bw^*,  b^*)=
\begin{cases}
c, &\text{ if } c\le \sigma^2,
\\
\sigma^2, &\text{ if } c>\sigma^2.
\end{cases}
\end{equation}
\end{theorem}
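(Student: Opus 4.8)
The plan is to reduce the non-convex factored problem in $(\bH,\bW,\bb)$ to a convex nuclear-norm-penalized least-squares problem in the single matrix $\bZ := \bW\bH$, which can then be solved in closed form by singular value thresholding. First I would eliminate the bias: setting $\nabla_{\bb}\mathcal{L}=0$ gives $\bb^{*}=\byb-\bW\bar{\bh}$ with $\bar{\bh}=M^{-1}\bH\mathbf{1}_M$, so the residual becomes $\bW(\bH-\bar{\bH})-(\bY-\bYb)$ and the data term depends only on the centered features. Since $\|\bH\|_F^2=\|\bH-\bar{\bH}\|_F^2+\|\bar{\bH}\|_F^2$, the feature penalty forces $\bar{\bh}=\bzero$ at the optimum, so the features are centered and the target may be replaced by $\widetilde{\bY}:=\bY-\bYb$. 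Because an orthogonal change of basis $\bY\mapsto\mathbf{U}^{T}\bY$, with $\bSigma=\mathbf{U}\,\mathrm{diag}(\lambda_1,\dots,\lambda_n)\,\mathbf{U}^{T}$, leaves all Frobenius norms, the MSE, and the spectrum of $\bSigma$ invariant after absorbing $\mathbf{U}$ into $\bW$ and $\bb$, I would then assume without loss of generality that $\bSigma$ is diagonal with sorted eigenvalues; this is precisely what makes the diagonal entries of $[\bSigma^{1/2}-\sqrt{c}\,\bI_n]_{j^{*}}$ equal $\sqrt{\lambda_i}-\sqrt{c}$ for $i\le j^{*}$ and $0$ otherwise.

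The crux is reducing the doubly-factored penalty to a nuclear norm. Using AM-GM followed by the inequality $\|\bW\bH\|_*\le\|\bW\|_F\|\bH\|_F$, I would establish
\[
\min_{\bW\bH=\bZ}\Big(\tfrac{\lH}{2M}\|\bH\|_F^2+\tfrac{\lW}{2}\|\bW\|_F^2\Big)=\sqrt{c/M}\,\|\bZ\|_*,
\]
with the bound attained by a balanced factorization $\bW=a\,\mathbf{U}_{\bZ}\mathbf{D}_{\bZ}^{1/2}$, $\bH=a^{-1}\mathbf{D}_{\bZ}^{1/2}\mathbf{V}_{\bZ}^{T}$ read off from the SVD $\bZ=\mathbf{U}_{\bZ}\mathbf{D}_{\bZ}\mathbf{V}_{\bZ}^{T}$, where the scalar $a$ is chosen to equalize the two penalties. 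Substituting, the whole problem collapses to the convex program $\min_{\bZ}\tfrac{1}{2M}\|\bZ-\widetilde{\bY}\|_F^2+\sqrt{c/M}\,\|\bZ\|_*$ over matrices of rank at most $d$.

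Finally I would solve this program by the singular value thresholding (proximal) formula: writing $\widetilde{\bY}=\mathbf{U}\mathbf{D}\mathbf{V}^{T}$ with singular values $s_i=\sqrt{M\lambda_i}$, the optimizer $\bZ^{*}$ shares the singular vectors of $\widetilde{\bY}$ and has singular values $\max(s_i-\sqrt{cM},0)$, which are nonzero exactly when $\lambda_i\ge c$, i.e.\ for $i\le j^{*}$. Reading off the residual gives $\mathrm{MSE}=M^{-1}\sum_i(z_i^{*}-s_i)^2=j^{*}c+\sum_{i>j^{*}}\lambda_i$, which is \eqref{MSE_train}, while plugging the thresholded singular values into the penalty gives $\sqrt{c/M}\,\|\bZ^{*}\|_*=\sqrt{c}\sum_{i\le j^{*}}(\sqrt{\lambda_i}-\sqrt{c})=\sqrt{c}\sum_i\eta_i$, so that the two contributions reproduce \eqref{loss_train}; the case $n=1$ is the single-singular-value specialization yielding \eqref{uniMSE}.

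The main obstacle I anticipate is the nuclear-norm reduction together with its attainment: the factored objective is non-convex, so I must argue both that $\sqrt{c/M}\,\|\bZ\|_*$ is a valid lower bound for every factorization and that it is realized by a concrete $(\bW,\bH)$ whose inner dimension does not exceed $d$. Since the thresholded solution has rank $j^{*}\le n$, this is exactly where the (implicit) assumption $d\ge n$ enters, guaranteeing that the rank constraint coming from $\bZ=\bW\bH$ is inactive and hence that the convex optimum is genuinely attainable in the original variables.
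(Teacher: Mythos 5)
Your proposal is correct, and it is essentially a self-contained re-derivation of the step that the paper outsources: the paper's proof centers the targets, invokes Lemma B.1 of \citep{zhou2022optimization} as a black box to get $\textnormal{MSE}(\bH^*,\bW^*,\bb^*)=\frac{1}{M}\sum_{i=1}^n([\eta_i-\sqrt{Mc}]_+-\eta_i)^2$ in terms of the singular values $\eta_i$ of $\bY-\bYb$, and then spends its effort on the similarity argument showing $\sqrt{\lambda_i}=\eta_i/\sqrt{M}$ before substituting. You instead prove the content of that lemma directly: bias elimination and feature centering, the variational identity $\min_{\bW\bH=\bZ}\bigl(\tfrac{\lH}{2M}\|\bH\|_F^2+\tfrac{\lW}{2}\|\bW\|_F^2\bigr)=\sqrt{c/M}\,\|\bZ\|_*$ via AM--GM and $\|\bW\bH\|_*\le\|\bW\|_F\|\bH\|_F$ with a balanced factorization attaining it, followed by singular value thresholding of the centered target matrix. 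Your approach buys transparency (the reader sees exactly why the threshold is $\sqrt{cM}$ and where $j^*$ comes from) and you are right to flag $d\ge n$ as the condition making the rank constraint inactive; the paper's approach buys brevity at the cost of hiding the mechanism in a citation. Two small points: your identification $\sum_i\eta_i=\sum_{i\le j^*}(\sqrt{\lambda_i}-\sqrt{c})$ for the regularization term in \eqref{loss_train} implicitly reads the truncation $[\bSigma^{1/2}-\sqrt{c}\,\bI_n]_{j^*}$ in the eigenbasis of $\bSigma$ (your WLOG rotation), which is the only interpretation under which the theorem's trace formula matches the thresholded nuclear norm --- the paper's own appendix proof never actually verifies \eqref{loss_train}, so your derivation is if anything more complete; and at the boundary $\lambda_i=c$ the thresholded singular value vanishes but contributes $c=\lambda_i$ either way, so the tie-breaking convention in $j^*$ is immaterial, as your formulas correctly reflect.
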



\section{Multi-task vs multiple single-task models}
In this section, we leverage the closed-form solutions provided in Theorem \ref{gendim} to study a fundamental problem in multivariate regression: which is better, a single multi-task model with $n$ tasks (target types) or $n$ dedicated single-task models?  Multi-task regression provides several advantages in terms of computation, memory efficiency, training time, and hyper-parameter tuning. Multi-task regression trains a single DNN to predict multiple outputs simultaneously, enabling weight sharing and the reuse of learned features across tasks. With $n$ dedicated single-task models, we need roughly $n$ times the number of parameters, and we must separately train and perform hyper-parameter tuning for each of the $n$ models. Also, dedicated single-task models are less efficient during inference since the single-task approach requires running the input through each of the $n$ models. 

But how does multi-task regression compare to $n$ single-task regressions in terms of the MSE training performance? We will first address this question mathematically through the lens of the UFM framework. We will then supplement the mathematical results with empirical findings.

\subsection{Insights from the UFM framework} \label{ufmfram}
Consider $n$ univariate regression problems with training sets 
$
    \{(\bx_j, \by_j^{(i)}),j=1,...,M\},
$
where $\by_j^{(i)}$ corresponds to the target value in the $i$-th dimension of the $j$-th training example.  For each one of these univariate problems, consider the corresponding single-task UFM, each with regularization parameters $\tilde{\lambda}_{\mathbf{W}}$ and $\tilde{\lambda}_{\mathbf{H}}$. 
Define $\tilde{c}:=\tilde{\lambda}_{\mathbf{H}} \tilde{\lambda}_{\mathbf{W}}$. 
Let $(\bH^{(i)}, \bw^{(i)}, b^{(i)})$ denote an optimal solution for the $i$-th such single-task model, and let $\text{MSE}^{(i)}(\bH^{(i)}, \bw^{(i)}, b^{(i)})$ denote its corresponding MSE.  The total MSE across the $n$ single-task models is then given by
{\footnotesize %
\begin{align}
\text{MSE}(\text{n-single}, \tilde{c})
:=\sum_{i=1}^n \text{MSE}^{(i)}(\bH^{(i)}, \bw^{(i)}, b^{(i)}).
\end{align}
}

Let $\sigma_i^2$ denote the variance of the targets for the $i$-th single task regression problem: $\sigma_i^2:=M^{-1} \sum_{j=1}^M (\by_j^{(i)}-\byb^{(i)})^2$. Re-order the indices so that $\sigma_1^2\ge \sigma_2^2\ge \cdots \ge \sigma_n^2$. 
Define  $k^{*}:=\max \{j: \sigma_j^2\ge \tilde{c}\}$.  The corollary below follows directly from case $n=1$ of Theorem \ref{gendim}.
\begin{corollary} \label{msensingle} 
The total MSE across the $n$ single-task problems is given by
\begin{equation}
\textnormal{MSE}(\text{n-single}, \tilde{c}) = k^{*} \tilde{c}+\sum_{i=k^{*}+1}^n \sigma_i^2.
\end{equation}
\end{corollary}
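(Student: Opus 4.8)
The plan is to apply the univariate formula \eqref{uniMSE} from Theorem \ref{gendim} to each of the $n$ single-task problems individually and then sum the resulting MSEs. First I would observe that each single-task UFM is precisely an instance of the $n=1$ case of Theorem \ref{gendim}, with the shared regularization product $\tilde{c} = \tilde{\lambda}_{\mathbf{H}} \tilde{\lambda}_{\mathbf{W}}$ playing the role of $c$ and with $\sigma_i^2$ playing the role of $\sigma^2$. Consequently, for the $i$-th problem, \eqref{uniMSE} gives $\textnormal{MSE}^{(i)} = \tilde{c}$ whenever $\tilde{c} \le \sigma_i^2$ and $\textnormal{MSE}^{(i)} = \sigma_i^2$ whenever $\tilde{c} > \sigma_i^2$.

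Next I would use the ordering $\sigma_1^2 \ge \sigma_2^2 \ge \cdots \ge \sigma_n^2$ together with the definition $k^{*} = \max\{j : \sigma_j^2 \ge \tilde{c}\}$ to split the index set $\{1,\ldots,n\}$ into two contiguous blocks. For every $i \le k^{*}$ we have $\sigma_i^2 \ge \tilde{c}$, so the first branch of \eqref{uniMSE} applies and $\textnormal{MSE}^{(i)} = \tilde{c}$; for every $i > k^{*}$ we have $\sigma_i^2 < \tilde{c}$, so the second branch applies and $\textnormal{MSE}^{(i)} = \sigma_i^2$. Summing the two blocks yields
\begin{equation}
\textnormal{MSE}(\text{n-single}, \tilde{c}) = \sum_{i=1}^n \textnormal{MSE}^{(i)} = \sum_{i=1}^{k^{*}} \tilde{c} + \sum_{i=k^{*}+1}^n \sigma_i^2 = k^{*} \tilde{c} + \sum_{i=k^{*}+1}^n \sigma_i^2,
\end{equation}
which is the claimed identity.

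This argument is essentially immediate, so I do not anticipate any genuine analytical obstacle; the only point requiring a moment of care is the boundary case $\sigma_i^2 = \tilde{c}$. The definition of $k^{*}$ uses the inequality $\sigma_j^2 \ge \tilde{c}$, which assigns such an index to the first block and contributes $\tilde{c}$ to the total. This is fully consistent with \eqref{uniMSE}, whose two branches coincide at $c = \sigma^2$ (both equal $\tilde{c} = \sigma_i^2$ there), so the partition of the sum is well defined regardless of how ties among the variances are resolved.
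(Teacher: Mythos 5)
Your proposal is correct and matches the paper's own justification, which simply notes that the corollary follows directly from the $n=1$ case of Theorem \ref{gendim} applied to each single-task problem and summed using the ordering of the $\sigma_i^2$ and the definition of $k^{*}$. Your remark on the tie case $\sigma_i^2 = \tilde{c}$ is a sensible extra check and does not change the result.
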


We now present the main result of this section. 

\begin{theorem} \label{singlevsmulti}
Let $\text{MSE}(\text{multi}, c)$ be a shorthand of \eqref{MSE_train}.

$(i)$ Suppose $\tilde{c} = c$. Then, 
\[
\textnormal{MSE}(\text{multi}, c) \leq \textnormal{MSE}(\text{n-single}, c).
\]
Furthermore, for $\lambda_{\min} < c < \lambda_{\max}$ and $j^*<k^*$, the inequality is strict, while for  $0<c<\lambda_{\min}$ or $c>\lambda_{\max}$  the inequality holds with equality.

$(ii)$ Suppose $c<\tilde{c}$. Then,
\[
\textnormal{MSE}(\text{multi}, c) \leq \textnormal{MSE}(\text{n-single}, \tilde{c}),
\]
Furthermore, for $c< \min\{\tilde{c}, \lambda_{\max} \}$ the inequality is strict, and
if $\lambda_{\max} < c < \tilde{c}$ the inequality holds with equality.

\end{theorem}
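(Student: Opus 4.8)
The plan is to reduce both sides to sums of a single concave transform of, respectively, the eigenvalues and the diagonal entries of $\bSigma$, and then exploit the fact that the eigenvalues majorize the diagonal. First I would rewrite the two closed forms uniformly. From \eqref{MSE_train}, since $\lambda_i \geq c$ exactly for $i \leq j^{*}$, one has $\text{MSE}(\text{multi}, c) = \sum_{i=1}^n \min(\lambda_i, c)$; likewise Corollary \ref{msensingle} gives $\text{MSE}(\text{n-single}, \tilde{c}) = \sum_{i=1}^n \min(\sigma_i^2, \tilde{c})$. The key structural observation is that the variances $\sigma_1^2 \geq \cdots \geq \sigma_n^2$ are precisely the sorted diagonal entries of $\bSigma$, while $\lambda_1 \geq \cdots \geq \lambda_n$ are its eigenvalues, so Schur's majorization theorem yields $\sum_{i=1}^m \lambda_i \geq \sum_{i=1}^m \sigma_i^2$ for every $m$, with equality at $m=n$ (both equal $\mathrm{tr}(\bSigma)$). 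I would also record the Rayleigh bound $\lambda_{\min} \leq \sigma_i^2 \leq \lambda_{\max}$ for each $i$, which pins down the equality regimes.

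For part $(i)$ (so $\tilde{c}=c$), using $\min(t,c)=t-(t-c)^+$ together with $\sum_i \lambda_i = \sum_i \sigma_i^2 = \mathrm{tr}(\bSigma)$, the claim $\text{MSE}(\text{multi}, c) \leq \text{MSE}(\text{n-single}, c)$ is equivalent to $\sum_{i=1}^{j^{*}}(\lambda_i - c) \geq \sum_{i=1}^{k^{*}}(\sigma_i^2 - c)$. I would prove this by splitting on $j^{*}$ versus $k^{*}$. When $j^{*}\geq k^{*}$, I append the nonnegative terms $\lambda_i - c$ for $k^{*}<i\leq j^{*}$ and then apply partial-sum majorization at $m=k^{*}$. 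When $j^{*}<k^{*}$, I instead write $\sum_{i=1}^{j^{*}}(\lambda_i - c) = \sum_{i=1}^{k^{*}}(\lambda_i - c) + \sum_{i=j^{*}+1}^{k^{*}}(c - \lambda_i)$; every summand $c-\lambda_i$ is strictly positive because $\lambda_i<c$ for $i>j^{*}$, and partial-sum majorization at $m=k^{*}$ finishes the bound, yielding the strict inequality asserted for $j^{*}<k^{*}$. The equality regimes follow from the Rayleigh bound: if $c>\lambda_{\max}$ then every $\lambda_i$ and $\sigma_i^2$ is below $c$, so both sides equal $\mathrm{tr}(\bSigma)$; if $c<\lambda_{\min}$ then every $\lambda_i$ and $\sigma_i^2$ exceeds $c$, so both sides equal $nc$.

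For part $(ii)$, I would chain part $(i)$ with monotonicity in the threshold. Since $c<\tilde{c}$ and $t\mapsto \min(\sigma_i^2,t)$ is nondecreasing, $\text{MSE}(\text{n-single}, c) \leq \text{MSE}(\text{n-single}, \tilde{c})$, and combining with part $(i)$ gives $\text{MSE}(\text{multi}, c) \leq \text{MSE}(\text{n-single}, c) \leq \text{MSE}(\text{n-single}, \tilde{c})$. For the strict regime $c<\min\{\tilde{c},\lambda_{\max}\}$ I would argue by cases: if some $\sigma_i^2>c$ then the threshold step is already strict because $\min(\sigma_i^2,c)=c<\min(\sigma_i^2,\tilde{c})$; if instead all $\sigma_i^2\leq c$, then $\text{MSE}(\text{n-single}, \tilde{c}) = \mathrm{tr}(\bSigma)$ while $\text{MSE}(\text{multi}, c)=\sum_i \min(\lambda_i,c) < \mathrm{tr}(\bSigma)$ because $\lambda_{\max}>c$ forces at least one strictly reduced term. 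The boundary case $\lambda_{\max}<c<\tilde{c}$ gives equality, since then all eigenvalues and, by the Rayleigh bound, all variances lie below both thresholds, so both sides equal $\mathrm{tr}(\bSigma)$.

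The main obstacle I anticipate is the bookkeeping caused by the two truncation indices $j^{*}$ and $k^{*}$ not matching: the inequality I actually need is between partial sums of different lengths, so the clean majorization statement $\sum_{i=1}^m \lambda_i \geq \sum_{i=1}^m \sigma_i^2$ cannot be applied termwise and must be supplemented by the sign analysis of $\lambda_i-c$ across the index gap. Equivalently, this is exactly Karamata's inequality for the concave map $t\mapsto\min(t,c)$ applied to majorized sequences, and the care lies in tracking where the inequality is forced to be strict versus flat.
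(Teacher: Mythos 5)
Your proof is correct, and it reaches the result by a genuinely different route from the paper's. Both arguments hinge on the same key lemma---the Schur--Horn majorization of the diagonal of $\bSigma$ by its spectrum, together with its consequence $\lambda_{\min}\le\sigma_i^2\le\lambda_{\max}$---but you package the two closed forms uniformly as $\sum_i\min(\lambda_i,c)$ and $\sum_i\min(\sigma_i^2,\tilde{c})$, reduce part $(i)$ to the single comparison $\sum_{i=1}^{j^{*}}(\lambda_i-c)\ge\sum_{i=1}^{k^{*}}(\sigma_i^2-c)$ with a two-way split on $j^{*}$ versus $k^{*}$, and then obtain part $(ii)$ from part $(i)$ by monotonicity of $t\mapsto\min(\sigma_i^2,t)$ in the threshold. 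The paper instead runs six explicit cases on the position of $c$ and $\tilde{c}$ relative to $\lambda_{\min}$ and $\lambda_{\max}$ (with a further subcase split on $j^{*}$ vs.\ $k^{*}$), computing the MSE difference directly in each regime. Your version is shorter and more modular, makes the underlying mechanism transparent (it is exactly Karamata's inequality for the concave map $t\mapsto\min(t,c)$ applied to a majorized pair of sequences), and cleanly isolates where strictness comes from: the strictly positive gap terms $c-\lambda_i$ over $j^{*}<i\le k^{*}$ in part $(i)$, and either a strict threshold increase or the term $\min(\lambda_{\max},c)<\lambda_{\max}$ in part $(ii)$. What the paper's case-by-case computation buys is an explicit formula for the MSE gap in each parameter regime, which maps one-to-one onto the equality and strictness clauses quoted in the theorem statement; your argument recovers all of those clauses, but as corollaries of one unified inequality rather than as separate computations.
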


Theorem \ref{singlevsmulti} states that if the multi-task regularization constant $c$ is not greater than the single-task regularization constant $\tilde{c}$, then under the UFM approximation, the MSE training error for the multi-task model is always less than that of the $n$ single-task models. Furthermore, the theorem provides refined information for when the inequality is strict or actually an equality. Thus, not only is the multi-task approach more efficient in terms of memory, training and inference computation, it also has lower training MSE in situations of practical interest under the UFM assumption. We briefly note that the case of $c > \tilde{c}$ remains an open problem for future research. 

\subsection{Experimental results: multi-task vs multiple single-task models} \label{sec:4.2}

\textbf{Datasets.}  Our empirical experiments utilize the Swimmer, Reacher, and Hopper datasets, derived from MuJoCo \citep{ brockman2016openai, mujoco, towers_gymnasium_2023}, a physics engine designed for simulating continuous multi-joint robotic control. These datasets have been widely used as benchmarks in deep reinforcement learning research. Each dataset consists of raw robotic states as inputs and corresponding robotic actions as targets. To adapt them for imitation learning, we reduce their size by selecting a subset of episodes.
In addition, we use the CARLA dataset, sourced from the CARLA Simulator—an open-source platform for autonomous driving research. Specifically, we utilize an expert-driven offline dataset \citep{Codevilla2018}, where input images from vehicle-mounted cameras are recorded alongside corresponding expert driving actions as the vehicle navigates a simulated environment. For convenience, we consider a simplified 2D version, which includes only speed and steering angle.
Table \ref{tab:data} summarizes the datasets, including target dimensions and spectral properties. 

\begin{table}[h!]
\caption{ \textbf{Spectral Properties of Datasets}.  $\lambda_{\min}$, $\lambda_{\max}$, and $\bar{\lambda}$ denote the minimum, maximum, and average eigenvalues of the target variable's covariance matrix. $\mathbf{\tilde{\lambda}_{\min}}$ and $\mathbf{\tilde{\lambda}_{\max}}$ represent the minimum and maximum eigenvalues of the target variable's correlation matrix. }
\begin{center}
\begin{small}
\begin{tabular}{cccccccc}
\toprule
\textbf{Dataset} & $n$ & $\mathbf{\lambda_{\min}}$ & $\mathbf{\tilde{\lambda}_{\min}}$ & $\mathbf{\lambda_{\max}}$ & $\mathbf{\tilde{\lambda}_{\max}}$ & $\mathbf{\bar{\lambda}}$ \\
\midrule

Reacher &  2  & 0.010 & 0.991 & 0.012 & 1.009 & 0.011 \\

Swimmer &  2  & 0.276 & 0.756 & 0.466 & 1.244 & 0.371 \\

Hopper &  3  & 0.215 & 0.782 & 0.442 & 1.258 & 0.345 \\

CARLA 2D & 2 & 0.024 & 0.996 & 209.097 & 1.004 & 104.561\\

\bottomrule

\end{tabular}
\end{small}
\end{center}
\label{tab:data}
\end{table}

\textbf{Experimental settings.} 
For the Swimmer, Reacher, and Hopper datasets, we employed a four-layer MLP (with the last layer being the linear layer) as the policy network for the prediction task. Each layer consisted of 256 nodes, aligning with the conventional model architecture in most reinforcement learning research \citep{tarasov2022corl}. 
For CARLA 2D, we employed ResNet18 \citep{he2016deep} as the backbone model. To minimize the impact of extraneous factors, we applied standard pre-processing techniques without data augmentation. \looseness=-1

All experimental results were averaged over random seeds, with variance across seeds represented by error bars. While weight decay is generally set to small values in practice, we tested a range of values from $1\mathrm{e}{-5}$ to $1\mathrm{e}{-1}$ to thoroughly explore its effect on model training MSE. For each weight decay value, the model was trained for a fixed number of epochs, and training MSE was recorded. The number of training epochs was adjusted based on dataset size, with smaller datasets requiring more epochs to reach convergence. The full experimental setup is provided in Appendix \ref{sec:a_exp}.

\begin{figure*}[htbp]
    \centering
    \begin{tabular}{c c c c c}
        & \textbf{Reacher} & \textbf{Swimmer} & \textbf{Hopper} & \textbf{CARLA 2D} \\
        \raisebox{1.5em}{\rotatebox{90}{\small \textbf{Training MSE}}} &
        \includegraphics[width=0.205\linewidth]{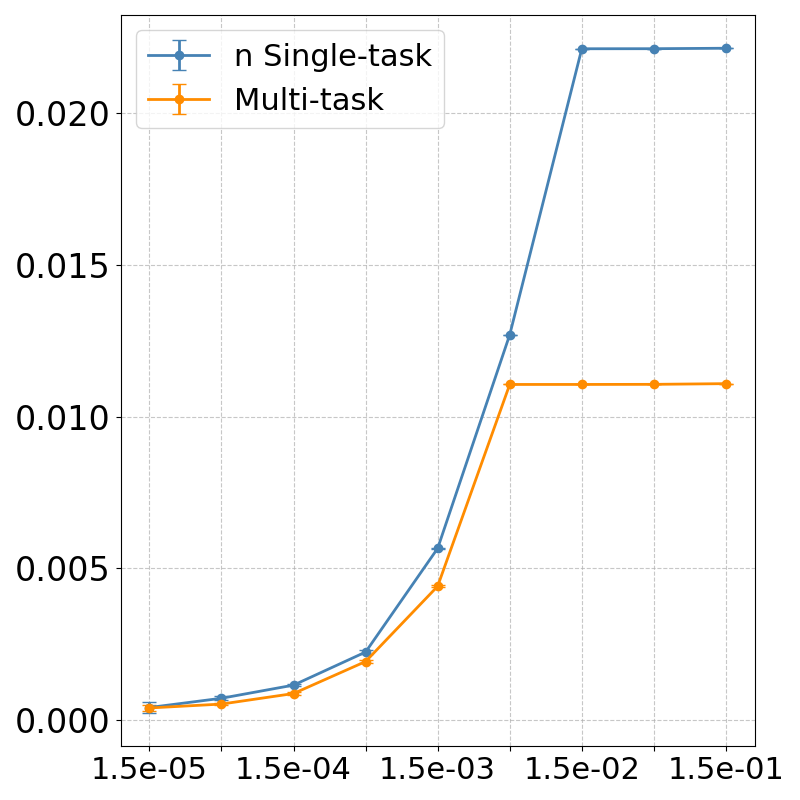} &
        \includegraphics[width=0.205\linewidth]{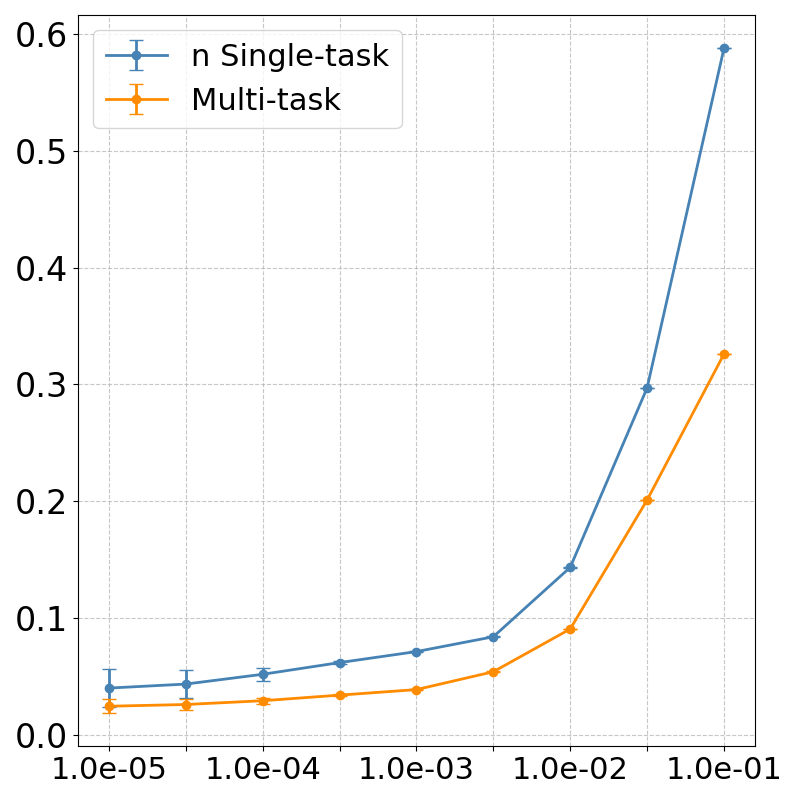} &
        \includegraphics[width=0.205\linewidth]{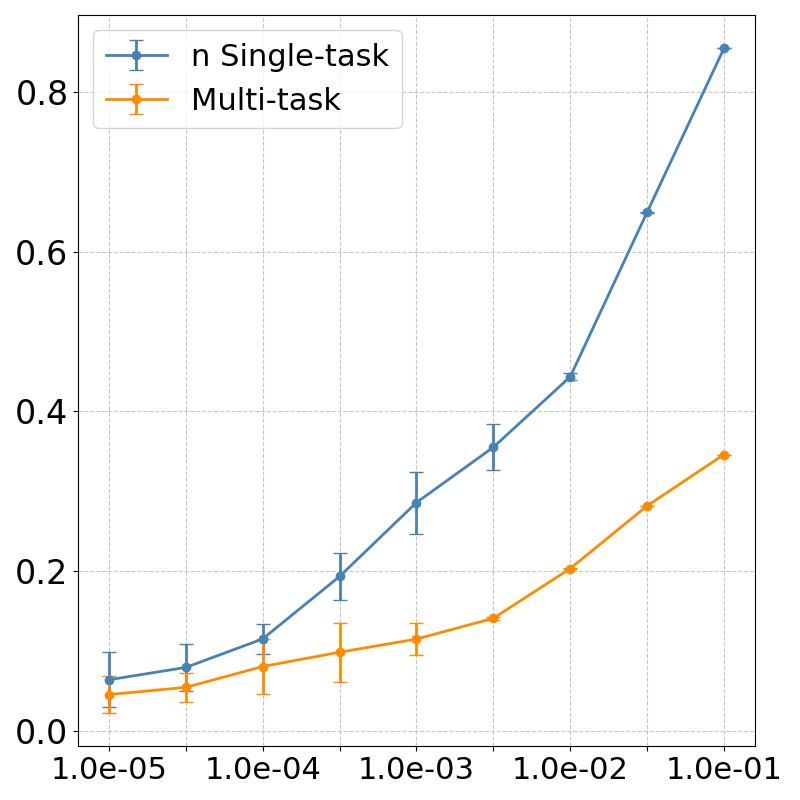} &
        \includegraphics[width=0.205\linewidth]{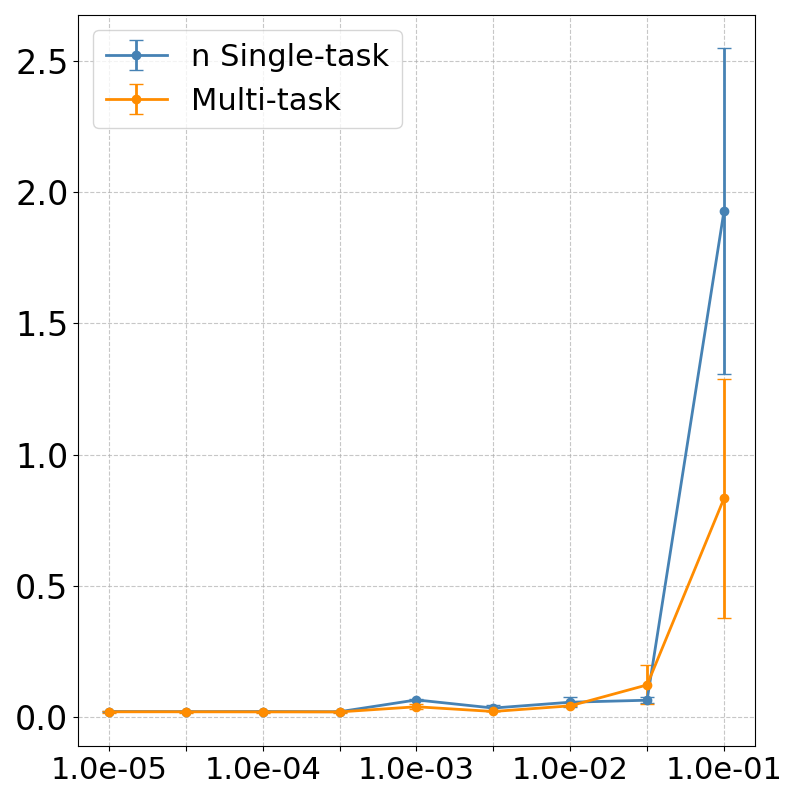}
        \\
        & $\lambda_{WD}$ & $\lambda_{WD}$ & $\lambda_{WD}$ & $\lambda_{WD}$
    \end{tabular}
    \caption{Comparison of the training error of a single multi-task model with that of multiple single task models for different weight decay values after training with the standard parameter-regularized loss function.}
    \label{fig:multi-task-vs-n-single-task-case1}
\end{figure*}

\textbf{Empirical results}. Figure \ref{fig:multi-task-vs-n-single-task-case1} shows, regardless of the value of weight decay, that the multi-task model has lower training MSE than the multiple single-task models. 
These empirical results are consistent with our results in Theorem \ref{singlevsmulti}, 
which established that under the UFM assumption, multi-task models achieve smaller training MSE when using equivalent or stronger regularization for single-task models. These empirical results are invariable to changes in the choice of architecture, cf. Figure \ref{fig:multi-task-vs-n-single-task-case1} with Figures \ref{fig:arch_vs_tasks_reacher}-\ref{fig:arch_vs_tasks_hopper} in Appendix \ref{additexpres}. Moreover, the trends for test MSE align closely with those shown for training MSE in Figure \ref{fig:multi-task-vs-n-single-task-case1}, see Figure \ref{fig:test-mse-multi} in Appendix \ref{additexpres}. We mitigate a theoretical explanation of this alignment in Appendix \ref{genboundappendix}.

We have also performed experiments directly comparing the training MSE predicted by the UFM and the empirical MSE obtained by training the standard parameter-regularized model. We found that the UFM tends to underestimate the empirical MSE values. A challenging direction for future research is to refine the UFM model so that it provides more accurate estimates while remaining mathematically tractable.    

\section{Target whitening and normalization} \label{tarwhitenandnorm}

\textbf{Whitening and decorrelation}. In statistical analysis, whitening (or sphering) refers to a common pre-processing step to transform random variables to orthogonality. A whitening transformation is a linear transformation that converts a random vector with a known covariance matrix into a new random vector of the same dimension and with a covariance matrix given by the identity matrix. Orthogonality among random vectors greatly simplifies multivariate data analysis both from a computational as well as from a statistical standpoint. Whitening is employed mostly in pre-processing but is also part of modeling \citep{hao2015sparsifying, zuber2009gene}. 

In this paper we consider whitening the targets using a natural and common form of whitening called zero-phase component analysis (ZCA). In ZCA, in order to whiten a target vector $\by_i$, we simply subtract from $\by_i$ the mean $\byb$ and then ''divide'' by the square root of the sample covariance matrix. More specifically, 
    \begin{equation}
    \bY^{ZCA}=\bSigma^{-1/2} (\bY-\bYb).
    \end{equation}
\textbf{Connection with UFM}. Importantly, after examining closely the global minima for the UFM-loss function, a significant implication arises. The residual errors will be zero mean and uncorrelated across the $n$ target dimensions, with each variance equal to $c$. More specifically, by \citep{andriopoulos2024prevalence}[Corollary 4.2(v)], the residual error is proportional to the ZCA-whitened targets:
\begin{equation}
\mathbf{E}:=\bW \bH+\bb \mathbf{1}_M^T-\bY=-\sqrt{c} \bY^{ZCA}.
\end{equation}
\textbf{Training MSE with whitened targets}. The emergence of ZCA whitening in the UFM model, as well as the fact that ZCA whitening guarantees no loss of information between the unprocessed and the transformed targets (see the discussion in Appendix \ref{disc}), motivates us to investigate how whitening of the targets will affect the training MSE, from both the UFM and empirical perspectives. For the UFM analysis, we consider the following whitening process: 
\begin{itemize}
    \item First, we whiten the targets using $\bY^{ZCA}=\bSigma^{-1/2} (\bY-\bYb)$.
    \item Next, we obtain the optimal $\tilde{\bW}$ and $\tilde{\bH}$ for the UFM-loss using $\bY^{ZCA}$. 
    \item We then obtain the associated predictions for the whitened training data $\tilde{\bY}:= \tilde{\bW} \tilde{\bH}$.
    \item We then de-whiten these predictions: 
    $\hat{\bY}:=[\bSigma^{1/2}] \tilde{\bY} + \bYb$.
    \item Finally, we calculate the MSE for the whitening approach as 
    \begin{equation}
    \text{MSE}(\text{de-whiten}):=M^{-1} ||\hat{\bY} - \bY||_F^2.
    \end{equation}
\end{itemize}

Following the procedure described in the bullet points above, our next theorem provides a closed-form expression for $\text{MSE(de-whiten)}$. 

\begin{theorem} \label{whitentrain1}
\begin{equation}
\textnormal{MSE}(\textnormal{de-whiten}) = 
\min\{c,1\} \displaystyle \sum_{i=1}^n \lambda_i.
\end{equation}
\end{theorem}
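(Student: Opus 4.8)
The plan is to apply Theorem~\ref{gendim} to the \emph{whitened} regression problem and then transport the resulting residual back through the de-whitening map $\bSigma^{1/2}$. The first observation is that the whitened targets $\bY^{ZCA} = \bSigma^{-1/2}(\bY - \bYb)$ have zero column-mean and sample covariance $M^{-1}\bY^{ZCA}(\bY^{ZCA})^{T} = \bSigma^{-1/2}\bSigma\bSigma^{-1/2} = \bI_n$. Hence every eigenvalue of the covariance seen by the inner UFM equals $1$, and the quantity $j^{*}$ of Theorem~\ref{gendim} for this inner problem is $n$ when $c \le 1$ and $0$ when $c > 1$. Since the whitened targets are already centered, the optimal bias vanishes, so $\tilde{\bY} = \tilde{\bW}\tilde{\bH}$ is the full prediction and its residual is $\tilde{\mathbf{E}} := \tilde{\bY} - \bY^{ZCA}$.

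The key step is to pin down $\tilde{\bY}$ in closed form. When $c \le 1$ all eigenvalues of the inner covariance exceed $c$, so the residual identity $\mathbf{E} = -\sqrt{c}\,\bY^{ZCA}$ quoted from \citep{andriopoulos2024prevalence} applies to the inner problem; because whitened data is its own ZCA transform, this gives $\tilde{\mathbf{E}} = -\sqrt{c}\,\bY^{ZCA}$ and hence $\tilde{\bY} = (1 - \sqrt{c})\bY^{ZCA}$. When $c > 1$ we have $j^{*}=0$, so the optimal prediction retains none of the eigen-directions and collapses to the (zero) mean, i.e.\ $\tilde{\bY} = 0$ and $\tilde{\mathbf{E}} = -\bY^{ZCA}$. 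Both regimes unify as $\tilde{\mathbf{E}} = -\min\{\sqrt{c},1\}\,\bY^{ZCA}$.

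It then remains to de-whiten and take norms. Using the identity $\bSigma^{1/2}\bY^{ZCA} = \bY - \bYb$, the de-whitened prediction is $\hat{\bY} = \bSigma^{1/2}\tilde{\bY} + \bYb = (1 - \min\{\sqrt{c},1\})(\bY - \bYb) + \bYb$, so the de-whitened residual telescopes to $\hat{\bY} - \bY = -\min\{\sqrt{c},1\}\,(\bY - \bYb)$. Taking Frobenius norms and using $M^{-1}\|\bY - \bYb\|_F^2 = \operatorname{tr}(\bSigma) = \sum_{i=1}^{n}\lambda_i$ together with $(\min\{\sqrt{c},1\})^2 = \min\{c,1\}$ yields $\text{MSE}(\text{de-whiten}) = \min\{c,1\}\sum_{i=1}^{n}\lambda_i$, as claimed.

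The main obstacle I anticipate is justifying the closed form of $\tilde{\bY}$ in the $c > 1$ regime, where the cited residual identity no longer holds: there one must argue directly from Theorem~\ref{gendim} (via $j^{*}=0$) that the optimal prediction is the constant mean and that no information survives de-whitening. A secondary technical point is the degeneracy of the whitened covariance — all eigenvalues coincide at $1$, so the eigenbasis is not unique — but this is harmless, since both candidate expressions for $\tilde{\bY}$ are basis-independent scalar multiples of $\bY^{ZCA}$.
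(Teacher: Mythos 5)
Your argument is essentially the paper's own proof: both observe that the whitened targets have identity covariance, obtain the inner prediction $\tilde{\bY}=(1-\sqrt{c})\,\bY^{ZCA}$ for $c<1$ and $\tilde{\bY}=\mathbf{0}$ for $c\geq 1$, de-whiten to get $\hat{\bY}-\bY=-\min\{\sqrt{c},1\}(\bY-\bYb)$, and take traces of $c\,\bSigma$ (resp.\ $\bSigma$). One correction to the obstacle you flag for $c>1$: you cannot close that case ``directly from Theorem~\ref{gendim} via $j^{*}=0$,'' because that theorem only reports the minimal MSE value, not the minimizer, and the de-whitening map $\bSigma^{1/2}$ is not an isometry --- so $\|\bSigma^{1/2}\tilde{\mathbf{E}}\|_F$ depends on the directional content of the residual, not merely on its Frobenius norm. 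The paper supplies the missing piece by invoking the global-minima characterization of \cite{andriopoulos2024prevalence} (Theorem 4.1 there), which states that for $c\geq\lambda_{\max}$ of the inner covariance (here $=1$) the unique global minimum is $(\tilde{\bH},\tilde{\bW})=(\mathbf{0},\mathbf{0})$; the same citation (rather than the Corollary 4.2(v) residual identity) is also how the paper handles $c<1$, though the two are equivalent there. With that substitution your proof is complete and matches the paper's.
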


We now examine how $\text{MSE}(\text{de-whiten})$ compares to the training MSE with the unprocessed targets.

\begin{theorem} \label{whitentrain2}
        $(i)$ Suppose $0<c\le 1$. If 
        \begin{equation} \label{newcond}
        \sum_{i=1}^{j^*} \lambda_i - j^* < c^{-1} (1-c)  \sum_{i=j^*+1}^n \lambda_i,
        \end{equation}
        we have that 
        \[
        \text{MSE}(\text{de-whiten})<\text{MSE}(\text{no-whitening}).
        \]
        When the converse inequality of \eqref{newcond} holds, then 
        \[
        \text{MSE}(\text{de-whiten}) > \text{MSE}(\text{no-whitening}).
        \]
        When the inequality of \eqref{newcond} is replaced with equality, then
        the two MSEs are equal.

        In the special case when $c < \lambda_{\min}$, i.e., $j^*=n$, 
        the condition in \eqref{newcond} reduces to $\bar{\lambda}:=\sum_{i=1}^n \lambda_i/n<1$.
        
        $(ii)$ Suppose $c > 1$. Then, 
        \[
        \text{MSE}(\text{de-whiten})\ge \text{MSE}(\text{no-whitening}),
        \]
        which is an equality if and only if $\lambda_i=c$, for all $i\le j^*$.
\end{theorem}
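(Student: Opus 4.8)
The plan is to reduce everything to the two closed-form expressions already in hand and then compare them by elementary algebra. From Theorem~\ref{whitentrain1} we have $\text{MSE}(\text{de-whiten}) = \min\{c,1\}\sum_{i=1}^n \lambda_i$, and from \eqref{MSE_train} (using the shorthand $\text{MSE}(\text{no-whitening}) = \text{MSE}(\text{multi},c)$) we have $\text{MSE}(\text{no-whitening}) = j^* c + \sum_{i=j^*+1}^n \lambda_i$. The whole theorem is just a sign analysis of the difference $D := \text{MSE}(\text{de-whiten}) - \text{MSE}(\text{no-whitening})$ in the two regimes $c \le 1$ and $c > 1$.

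For part $(i)$, where $0 < c \le 1$, I would substitute $\min\{c,1\} = c$ and split $\sum_{i=1}^n \lambda_i = \sum_{i=1}^{j^*}\lambda_i + \sum_{i=j^*+1}^n\lambda_i$. The linear term $j^* c$ then cancels against part of $c\sum_{i=1}^{j^*}\lambda_i$, yielding $D = c\bigl(\sum_{i=1}^{j^*}\lambda_i - j^*\bigr) - (1-c)\sum_{i=j^*+1}^n \lambda_i$. Since $c > 0$, dividing by $c$ shows $D < 0$ exactly when \eqref{newcond} holds, $D = 0$ under equality in \eqref{newcond}, and $D > 0$ under the reverse inequality, which matches the three stated cases. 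The special case $c < \lambda_{\min}$ forces $j^* = n$, so the right-hand sum in \eqref{newcond} is empty and the condition collapses to $\sum_{i=1}^n \lambda_i < n$, i.e.\ $\bar{\lambda} < 1$.

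For part $(ii)$, where $c > 1$, I would substitute $\min\{c,1\} = 1$, so the $\sum_{i=j^*+1}^n \lambda_i$ terms cancel and $D = \sum_{i=1}^{j^*}(\lambda_i - c)$. By the definition $j^* = \max\{j : \lambda_j \ge c\}$, each summand satisfies $\lambda_i - c \ge 0$, hence $D \ge 0$, with equality precisely when $\lambda_i = c$ for every $i \le j^*$ (vacuously true, and hence equality, when $j^* = 0$, which is the regime $c > \lambda_{\max}$).

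There is no real analytic obstacle here; the argument is a short computation once the two closed forms are available. The only points demanding care are bookkeeping ones: handling the empty-sum conventions when $j^* = 0$ or $j^* = n$, and confirming that dividing the inequality by $c$ in part $(i)$ preserves its direction, which it does since $c > 0$. I would make sure the case split in part $(i)$ cleanly matches the strict, reverse-strict, and equality versions of \eqref{newcond}.
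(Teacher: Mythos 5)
Your proposal is correct and follows essentially the same route as the paper's proof: both substitute the closed forms from Theorem \ref{whitentrain1} and \eqref{MSE_train}, arrive at the identical difference $c\bigl(\sum_{i=1}^{j^*}\lambda_i - j^*\bigr) + (c-1)\sum_{i=j^*+1}^n \lambda_i$ in part $(i)$ and $\sum_{i=1}^{j^*}(\lambda_i - c)$ in part $(ii)$, and conclude by the same sign analysis. Your added care about empty-sum conventions at $j^*=0$ and $j^*=n$ is a minor but welcome refinement over the paper's write-up.
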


Theorem \ref{whitentrain2} offers valuable qualitative insights into the target whitening approach. In many practical situations, we have $\lambda_{\min} <1$ and $\bar{\lambda} <1$, as is the case with the Reacher, Swimmer, and Hopper training datasets (see Table \ref{tab:data}). Furthermore, test error is typically minimized with relatively small values of weight decay, corresponding to $c <1$. In such cases, Theorem \ref{whitentrain2} tells us that $\text{MSE}(\text{de-whiten})<\text{MSE}(\text{no-whitening})$, meaning that, under the UFM approximation, training with whitened targets  strictly reduces the training MSE. Conversely, if $\bar{\lambda} > 1$, as is the case for our CARLA 2D dataset (see Table \ref{tab:data}), Theorem \ref{whitentrain2} tells us that $\text{MSE}(\text{de-whiten}) > \text{MSE}(\text{no-whitening})$. In this scenario, training with whitened targets increases the training MSE. This duality highlights the critical role of $\bar{\lambda}$ in determining whether whitening improves or worsens training performance.

\textbf{Normalization}.
Another transformation closely related to whitening is normalization, which sets variances to 1 but leaves correlations intact:
\begin{equation}
    \bY^{nrm}=\bV^{-1/2} (\bY-\bYb),
\end{equation}
where $\bV$ is the diagonal matrix $\bV:=\text{diag}(\sigma_1^2,...,\sigma_n^2)$, and $\sigma_j^2$ denotes the variance of the $j$th target component. 
    
As we did previously for whitening, under the lens of the UFM theory, we can examine the effect of normalizing the targets with respect to the training MSE. To this end, we follow the same procedure as for whitening, but instead of using $\bY^{ZCA}$ we use $\bY^{nrm}$. Denote $\text{MSE}(\text{de-normalize})$ for the resulting de-normalized training MSE. Following this procedure, our next theorem provides a closed-form expression for $\text{MSE(de-normalization)}$.  

Let $\tilde{\lambda}_{\min}$, $\tilde{\lambda}_{\max}$ be the min and max eigenvalues of the sample correlation matrix of the target data given by $\bP=\bV^{-1/2} \bSigma \bV^{-1/2}$. Note that $0\le \tilde{\lambda}_{\min}\le 1$, and $\tilde{\lambda}_{\max}\ge 1$.

\begin{theorem} \label{stdtrain1}
\begin{equation}
\textnormal{MSE}(\textnormal{de-normalize}) = 
\begin{cases}
c \displaystyle \sum_{i=1}^n \lambda_i, &\text{if } 0<c < \tilde{\lambda}_{\min},
\\
\displaystyle \sum_{i=1}^n \lambda_i, &\text{if } c>\tilde{\lambda}_{\max}.
\end{cases}
\end{equation}
\end{theorem}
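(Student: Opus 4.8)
The plan is to reduce the de-normalization error to a rescaled Frobenius norm of the normalized UFM residual and then feed in the closed-form residual structure from Theorem \ref{gendim} together with the ZCA-proportionality identity. The first observation I would make is that, since $\bY^{nrm} = \bV^{-1/2}(\bY - \bYb)$ is already centered, its sample covariance matrix is exactly the target correlation matrix $\bP = \bV^{-1/2}\bSigma\bV^{-1/2}$, whose eigenvalues are $\tilde{\lambda}_1 \geq \cdots \geq \tilde{\lambda}_n$. Hence the UFM applied to $\bY^{nrm}$ is a bona fide multivariate UFM with covariance $\bP$, its optimal bias vanishes (centered targets), and its prediction and residual are $\tilde{\bY} = \tilde{\bW}\tilde{\bH}$ and $\tilde{\mathbf{E}} := \tilde{\bW}\tilde{\bH} - \bY^{nrm}$. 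Crucially, the active/inactive threshold is now governed by $\tilde{\lambda}_{\min}, \tilde{\lambda}_{\max}$ rather than $\lambda_{\min}, \lambda_{\max}$.

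Next I would rewrite the de-normalized residual. Using $\bY - \bYb = \bV^{1/2}\bY^{nrm}$, we obtain $\hat{\bY} - \bY = \bV^{1/2}\tilde{\bY} - (\bY - \bYb) = \bV^{1/2}\tilde{\mathbf{E}}$, so that $\text{MSE}(\text{de-normalize}) = M^{-1}\|\bV^{1/2}\tilde{\mathbf{E}}\|_F^2$. Throughout the computation I would rely on the elementary identity $M^{-1}\|\bA(\bY - \bYb)\|_F^2 = \operatorname{tr}(\bA\bSigma\bA^T)$, which follows directly from $(\bY-\bYb)(\bY-\bYb)^T = M\bSigma$.

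For the first case, $0 < c < \tilde{\lambda}_{\min}$, every eigenvalue of $\bP$ exceeds $c$, so $\tilde{j}^* = n$ and the normalized analogue of the identity $\mathbf{E} = -\sqrt{c}\,\bY^{ZCA}$ from \citep{andriopoulos2024prevalence} applies, giving $\tilde{\mathbf{E}} = -\sqrt{c}\,\bP^{-1/2}\bY^{nrm} = -\sqrt{c}\,\bP^{-1/2}\bV^{-1/2}(\bY - \bYb)$. Substituting $\bA = \bV^{1/2}\bP^{-1/2}\bV^{-1/2}$ into the trace identity and using $\bV^{-1/2}\bSigma\bV^{-1/2} = \bP$ together with $\bP^{-1/2}\bP\bP^{-1/2} = \bI_n$ collapses the expression to $c\,\operatorname{tr}(\bV) = c\sum_{i=1}^n \lambda_i$, where the last equality holds because $\operatorname{tr}(\bV) = \operatorname{tr}(\bSigma)$. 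For the second case, $c > \tilde{\lambda}_{\max}$, every eigenvalue of $\bP$ lies below $c$, so $\tilde{j}^* = 0$ and the optimal normalized prediction vanishes, $\tilde{\bW}\tilde{\bH} = \mathbf{0}$; therefore $\bV^{1/2}\tilde{\mathbf{E}} = -\bV^{1/2}\bY^{nrm} = -(\bY - \bYb)$ and $\text{MSE}(\text{de-normalize}) = M^{-1}\|\bY - \bYb\|_F^2 = \operatorname{tr}(\bSigma) = \sum_{i=1}^n \lambda_i$.

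The main obstacle I anticipate is the faithful transfer of the residual-structure results to the normalized problem: one must confirm that the covariance of $\bY^{nrm}$ is genuinely $\bP$ (so the thresholds move to $\tilde{\lambda}_{\min}, \tilde{\lambda}_{\max}$), and then manipulate the non-commuting products $\bV^{1/2}$, $\bP^{\pm 1/2}$, and $\bSigma$ inside the trace carefully enough that the cancellation $\bP^{-1/2}\bP\bP^{-1/2} = \bI_n$ is legitimate rather than an illegal reordering. I also note that the intermediate regime $\tilde{\lambda}_{\min} \le c \le \tilde{\lambda}_{\max}$ only partially activates the spectrum and does not collapse to a single clean expression, which is presumably why the statement restricts to the two extreme regimes.
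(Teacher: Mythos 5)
Your proposal is correct and follows essentially the same route as the paper's proof of this theorem (Theorem \ref{standtrain} in the appendix): both identify the covariance of $\bY^{nrm}$ as $\bP$, obtain the normalized residual $-\sqrt{c}\,\bP^{-1/2}\bY^{nrm}$ in the regime $c<\tilde{\lambda}_{\min}$ from the results of \citep{andriopoulos2024prevalence}, and collapse $\bV^{1/2}\bP^{-1/2}\bV^{-1/2}\bSigma\bV^{-1/2}\bP^{-1/2}\bV^{1/2}$ to $\bV$ before taking traces, with the zero-minimizer argument handling $c>\tilde{\lambda}_{\max}$. The only cosmetic difference is that you invoke the residual-proportionality corollary directly rather than writing out the explicit form of the optimal $(\bar{\bW},\bar{\bH})$ as the paper does.
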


Let us now examine how $\text{MSE}(\text{de-normalize})$ compares to  $\text{MSE}(\text{de-whiten})$ and to the training MSE with the unprocessed targets. The next theorem follows directly from Theorems \ref{whitentrain1}-\ref{stdtrain1}.

\begin{theorem} \label{stdtrain2}
        $(i)$ Suppose $0<c<\min\{\lambda_{\min},\tilde{\lambda}_{\min}\}$. Then,
        \[
        \text{MSE}(\text{de-normalize})=\text{MSE}(\text{de-whiten}).
        \]
        Furthermore, if $\bar{\lambda}<1$, then $\text{MSE}(\text{de-normalize}) <  \text{MSE}$,
where $\text{MSE}$ is the training MSE using unprocessed targets.  If $\bar{\lambda}> 1$, then $\text{MSE}(\text{de-normalize}) >   \text{MSE}$. If  $\bar{\lambda}=1$, then $\text{MSE}(\text{de-normalize}) =   \text{MSE}$. 

(ii) Suppose $c>\tilde{\lambda}_{\max}$. Then,
\[
\text{MSE}(\text{de-normalize})=\text{MSE}(\text{de-whiten}).
\]
Furthermore, $\text{MSE}(\text{de-normalize}) \ge \text{MSE}.$
\end{theorem}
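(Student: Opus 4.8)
The plan is to read Theorem~\ref{stdtrain2} as an immediate corollary of the three closed forms already in hand: the de-whitening MSE $\min\{c,1\}\sum_{i=1}^n\lambda_i$ from Theorem~\ref{whitentrain1}, the de-normalization MSE from Theorem~\ref{stdtrain1}, and the unprocessed training MSE $j^*c+\sum_{i=j^*+1}^n\lambda_i$ from \eqref{MSE_train}. For each part I first locate $c$ relative to the thresholds $\tilde\lambda_{\min}$, $\tilde\lambda_{\max}$, $\lambda_{\min}$, and $1$, substitute the branch of each closed form that is active in that range, and then reduce the resulting comparison to an elementary inequality. The only structural facts I need beyond the three theorems are $0\le\tilde\lambda_{\min}\le 1$ and $\tilde\lambda_{\max}\ge 1$, which hold because $\bP$ is a sample correlation matrix (unit diagonal, hence trace $n$ and average eigenvalue $1$).

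For part $(i)$, assume $0<c<\min\{\lambda_{\min},\tilde\lambda_{\min}\}$. Since $c<\tilde\lambda_{\min}$, the first case of Theorem~\ref{stdtrain1} gives $\text{MSE}(\text{de-normalize})=c\sum_{i=1}^n\lambda_i$; and since $\tilde\lambda_{\min}\le 1$ forces $c<1$, we have $\min\{c,1\}=c$ in Theorem~\ref{whitentrain1}, so $\text{MSE}(\text{de-whiten})=c\sum_{i=1}^n\lambda_i$ as well, establishing the claimed equality. To compare with the unprocessed MSE, note $c<\lambda_{\min}$ makes $\lambda_j>c$ for every $j$, hence $j^*=n$ and \eqref{MSE_train} collapses to $\text{MSE}=nc$. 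Dividing the comparison $c\sum_{i=1}^n\lambda_i$ versus $nc$ by $c>0$ reduces it to $\sum_{i=1}^n\lambda_i$ versus $n$, i.e.\ $\bar\lambda$ versus $1$, which yields the three stated cases $\bar\lambda<1$, $\bar\lambda>1$, $\bar\lambda=1$ at once.

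For part $(ii)$, assume $c>\tilde\lambda_{\max}$. The second case of Theorem~\ref{stdtrain1} gives $\text{MSE}(\text{de-normalize})=\sum_{i=1}^n\lambda_i$, and since $\tilde\lambda_{\max}\ge 1$ forces $c>1$ we get $\min\{c,1\}=1$ and hence $\text{MSE}(\text{de-whiten})=\sum_{i=1}^n\lambda_i$, giving the equality. For the inequality against the unprocessed MSE, I form the difference $\sum_{i=1}^n\lambda_i-\bigl(j^*c+\sum_{i=j^*+1}^n\lambda_i\bigr)=\sum_{i=1}^{j^*}(\lambda_i-c)$, each of whose terms is nonnegative because $\lambda_i\ge c$ for $i\le j^*$ by the definition of $j^*$; thus $\text{MSE}(\text{de-normalize})\ge\text{MSE}$. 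Equivalently this is just Theorem~\ref{whitentrain2}$(ii)$ transported through the equality $\text{MSE}(\text{de-normalize})=\text{MSE}(\text{de-whiten})$, with equality exactly when $\lambda_i=c$ for all $i\le j^*$.

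Since every step is a substitution or a one-line inequality, there is no real analytic obstacle; the one point that needs care is bookkeeping on the thresholds, specifically invoking $\tilde\lambda_{\min}\le 1\le\tilde\lambda_{\max}$ to guarantee that the $\min\{c,1\}$ in the de-whitening formula collapses to the same branch selected by $\tilde\lambda_{\min}$ and $\tilde\lambda_{\max}$ in the de-normalization formula. That alignment is exactly what makes the two preprocessing MSEs coincide on these ranges and is the hinge of the whole argument.
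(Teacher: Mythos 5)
Your proposal is correct and matches the paper's approach: the paper states that Theorem \ref{stdtrain2} follows directly from Theorems \ref{whitentrain1}--\ref{stdtrain1}, and your argument simply makes explicit the substitutions and the one-line comparisons (via $j^*=n$ when $c<\lambda_{\min}$, and via $\sum_{i=1}^{j^*}(\lambda_i-c)\ge 0$ when $c>\tilde{\lambda}_{\max}$) that the paper leaves implicit. The bookkeeping on $\tilde{\lambda}_{\min}\le 1\le\tilde{\lambda}_{\max}$ is exactly the right hinge and is handled correctly.
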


Theorem \ref{stdtrain2} provides important qualitative insights into the target normalization approach. In practical scenarios, such as with all of our 4 training datasets, we observe that $\lambda_{\min}<1$ ($\tilde{\lambda}_{\min}\le 1$ for any arbitrary correlation matrix), as indicated in Table \ref{tab:data}. Additionally, test error is typically minimized with small weight decay values, corresponding to $c<1$. Under these conditions, $\text{MSE}(\text{de-normalize})=\text{MSE}(\text{de-whiten})$, implying that under the UFM approximation, one can use either target whitening or normalization. As in Theorem \ref{whitentrain2}, Theorem \ref{stdtrain2} also highlights the crucial role of $\bar{\lambda}$ in determining whether normalization improves or worsens training performance. We also note that Theorem \ref{stdtrain2} does not provide a characterization for
the case $\min\{\lambda_{\min},\tilde{\lambda}_{\min}\} < c < \tilde{\lambda}_{\max}$.
We leave this as an open research problem. 

\begin{figure*}[htbp]
    \centering
    \begin{tabular}{c c c c c}
        & \textbf{Reacher} & \textbf{Swimmer} & \textbf{Hopper} & \textbf{CARLA 2D} \\
        \raisebox{1.5em}{\rotatebox{90}{\small \textbf{Training MSE}}} &
        \includegraphics[width=0.205\linewidth]{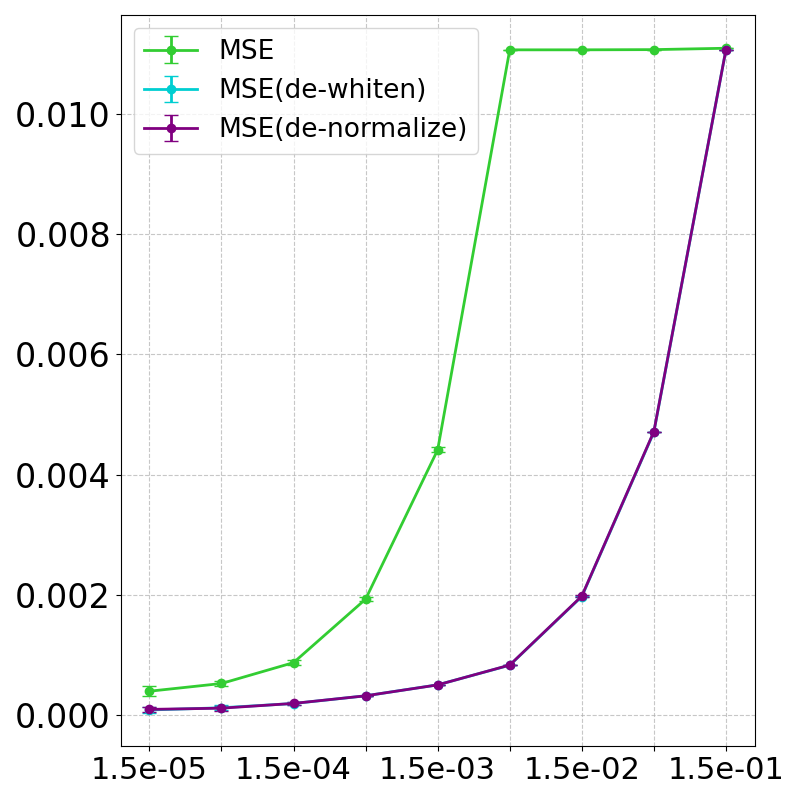} &
        \includegraphics[width=0.205\linewidth]{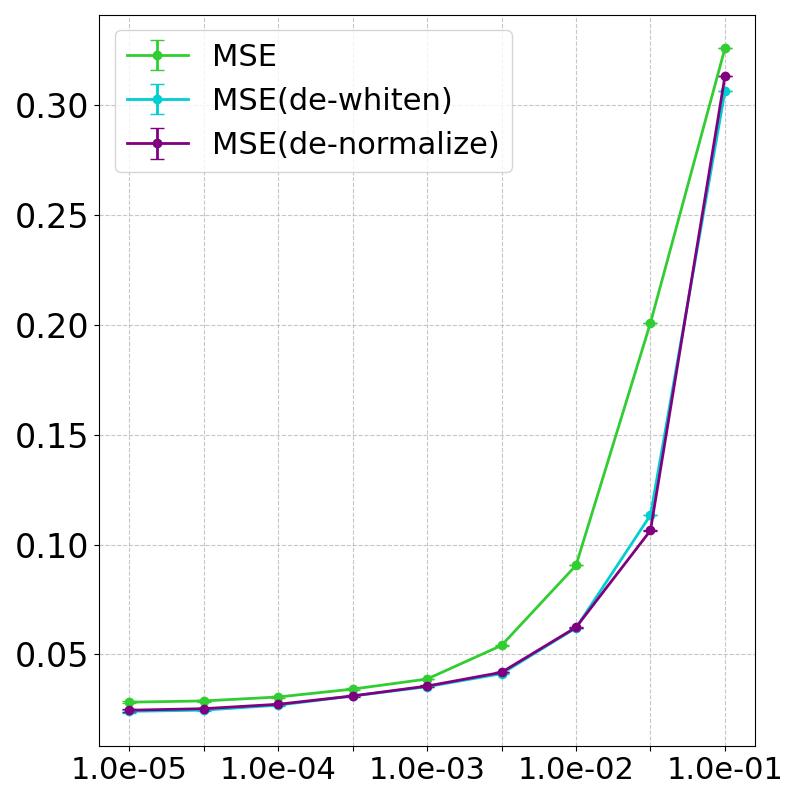} &
        \includegraphics[width=0.205\linewidth]{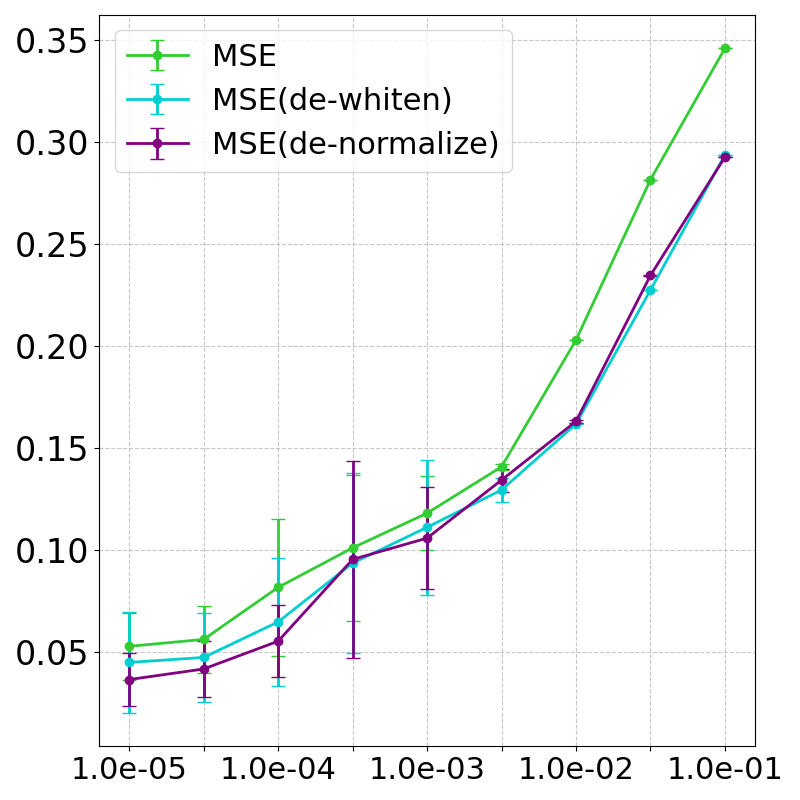} &
        \includegraphics[width=0.205\linewidth]{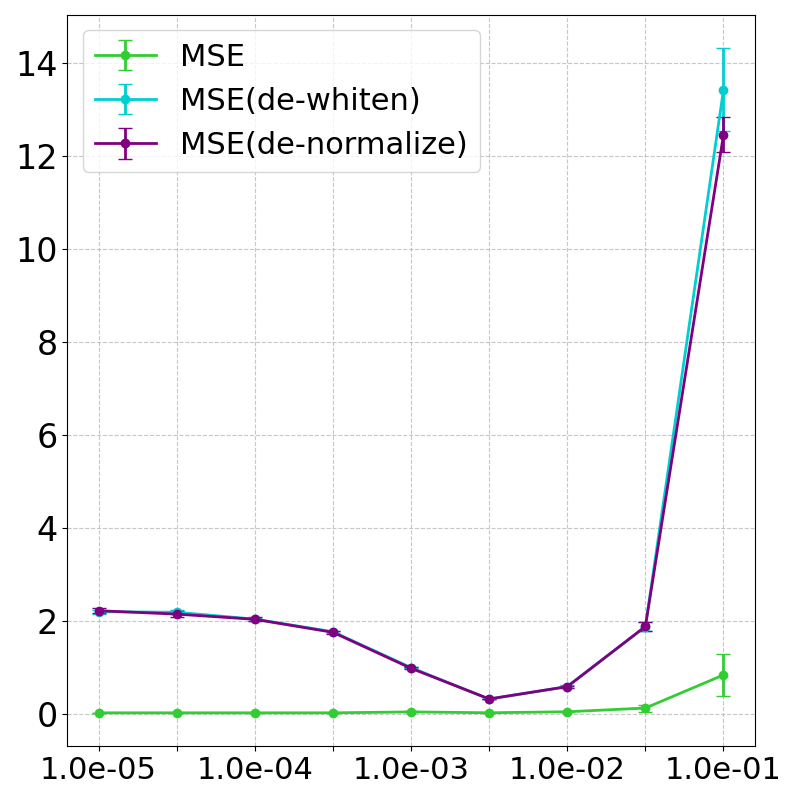} \\
        & $\lambda_{WD}$ & $\lambda_{WD}$ & $\lambda_{WD}$ & $\lambda_{WD}$
    \end{tabular}
    \caption{Comparison of the effect that target whitening and normalization have on training error for different weight decay values after training with the standard parameter-regularized loss function. The green curve (in short MSE) records the training error for different weight decay values after training with the original unprocessed targets.}
    \label{fig:case1}
\end{figure*}

\subsection{Experimental results: target whitening and normalization} \label{sec:5.1}

To validate the theoretical results, we conduct empirical experiments on three MuJoCo datasets and CARLA 2D. We adopt the same network structures as in Section \ref{sec:4.2}. For different weight decay values $\lambda_{WD}$, we evaluate three settings: (1) using the raw target $y$ as a baseline, (2) using the whitened targets, and (3) using the normalized targets.  When whitening or normalizing, we train the model with the transformed $y$ and then apply the inverse transformation (de-whitening or de-normalizing) to compute the training MSE.

Figure \ref{fig:case1} presents the experimental results, which turn out to closely align with the theory.  First, as the theory suggests, there is minimal difference in training MSE between whitening and normalizing. Second, the impact of whitening/normalizing on training MSE depends on the average variance over target dimensions, $\bar{\lambda}$. Whitening/normalizing reduces the training MSE when $\bar{\lambda} < 1$, as is the case in the three MuJoCo datasets. In contrast, when $\bar{\lambda} > 1$, as is the case in the CARLA 2D dataset, whitening/normalizing leads to an increase in training MSE. Once again, these empirical results are invariable to changes in the choice of architecture, cf. Figure \ref{fig:case1} with Figures \ref{fig:whiten_vs_norm_reacher}-\ref{fig:whiten_vs_norm_hopper} in Appendix \ref{additexpres}.  Moreover, the trends for test MSE align closely with those shown for training MSE in Figure \ref{fig:case1}, see Figure \ref{fig:test-mse-whitening} in Appendix \ref{additexpres}. These results not only validate the UFM theory but also provide practical insights for selecting appropriate data pre-processing strategies in machine learning pipelines.

\section{Conclusion} \label{conclsec}

This paper demonstrates the power of the Unconstrained Feature Model (UFM) as a theoretical tool for understanding neural multivariate regression tasks. By deriving and analyzing closed-form expressions for training mean-squared error (MSE), we explored two critical problems: multi-task versus single-task regression models and the benefits of target whitening and normalization. We found that for both of these problems, the UFM was applicable and provided novel qualitative insights. Our experiments then confirmed the correctness of the UFM predictions. Beyond these specific insights, the broader significance of this work lies in its demonstration of how the UFM can bridge theory and practice. 
The UFM’s ability to yield qualitative insights into key design choices, such as model architecture and data pre-processing strategies, establishes it as a valuable tool for advancing the design and optimization of DNNs. Additionally, developing tractable models within the UFM framework that extend to understanding and predicting generalization performance would be a valuable direction for further research.



\bibliography{example_paper}
\bibliographystyle{plain}
\clearpage

\appendix

\section{Experimental details}
\label{sec:a_exp}

\subsection{MuJoCo}
\label{appendix:mujoco}

The datasets Reacher and Swimmer are sourced from an open-source repository \citep{gallouedec2024jack} and consist of expert data generated by a policy trained using Proximal Policy Optimization (PPO)  \citep{schulman2017proximal}. For Hopper, the dataset is part of the D4RL benchmark \citep{fu2020d4rl}, which is widely recognized in offline reinforcement learning research. Table \ref{table:mujoco_mlp} provides a summary of all model hyper-parameters and experimental configurations used in Sections \ref{sec:4.2} and \ref{sec:5.1}. In all experiments, the models are trained until their weights converge. Additional details regarding the MuJoCo datasets and dataset-specific hyper-parameter settings are provided below.

\begin{table}[h!]
\caption{Hyper-parameter settings for experiments with weight decay on MuJoCo datasets.}
\label{table:mujoco_mlp}
\begin{center}
\begin{tabular}{cll}
\toprule
    & \textbf{Hyper-parameter} & \textbf{Value}  \\
\midrule
    & Number of hidden layers   & $3$ \\
  Model Architecture  & Hidden layer dimension & $256$  \\ 
    & Activation function & ReLU  \\ 
    & Number of linear projection layer ($\bW$) & 1 \\
\midrule
    & Epochs & 2e5, Reacher \\
    &   & 2e5, Swimmer \\
    &   & 4e4, Hopper\\
    & Batch size & 256 \\
    & Optimizer & SGD \\
Training    & Learning rate & 1e-2 \\
    & Seeds & 0, 1, 2 \\
    & Compute resources & NVIDIA A100 8358 80GB \\
    & Number of compute workers & 4 \\
    & Requested compute memory & 16 GB \\
    & Approximate average execution time & 5 hours\\    
\bottomrule
\end{tabular}
\end{center}
\end{table}

\textbf{MuJoCo environment descriptions}. We utilize expert data from previous work \citep{gallouedec2024jack, fu2020d4rl} for the Reacher, Swimmer, and Hopper environments. The Reacher environment features a two-jointed robotic arm, where the objective is to control the arm’s tip to reach a randomly placed target in a 2D plane. The Swimmer environment consists of a three-segment robot connected by two rotors, designed to propel itself forward as quickly as possible. Similarly, the Hopper environment is a single-legged robot with four connected body parts, aiming to hop forward efficiently. In all three cases, the robots are controlled by applying torques to their joints, which serve as the action inputs. To construct the datasets, an online reinforcement learning algorithm was used to train expert policies \citep{gallouedec2024jack,fu2020d4rl}. These expert policies were then deployed in the environments to generate offline datasets, consisting of state-action pairs where the states $\bx_i$ encompass the robot’s positions, angles, velocities, and angular velocities, while the targets $\byi$ correspond to the torques applied to the joints.

\textbf{Low data regime}. Training neural networks with expert state-action data using regularized regression is commonly known as \textit{imitation learning}. Following standard practices for MuJoCo environments \citep{tarasov2022corl}, we employ relatively small multi-layer perceptron (MLP) architectures. Since the goal in imitation learning is to achieve strong performance with minimal expert data, we train models using only a fraction of the available datasets. Specifically, we use 20 expert demonstrations (episodes) for Reacher, 1 for Swimmer, and 10 for Hopper, translating to datasets of 1,000, 1,000, and 10,000 samples respectively. Additionally, for each environment, we construct a validation set that contains $20\%$ of the size of the training data.
\looseness=-1

\subsection{CARLA}
The CARLA dataset is created by capturing the vehicle's surroundings through automotive cameras while a human driver controls the vehicle in a simulated urban environment \citep{Codevilla2018}. The recorded images represent the vehicle’s states $\bx_i$, and the expert driver’s control inputs, including speed and steering angles, are treated as the actions $\by_i \in[0, 85]\times  [-1, 1]$ in the dataset. A model trained on this data is expected to navigate the vehicle safely within the virtual environment.

For feature extraction from images, we use ResNet-18 \citep{he2016deep} as the backbone model. Since a large number of images is required to train a robust feature extractor from visual inputs \citep{he2016deep, sun2017revisiting}, the entire dataset is utilized for training. To adapt the ResNet architecture, which was initially designed for classification tasks, to a regression setting, we replace the final classification layer with a fully connected layer that outputs continuous values corresponding to the targets. The experimental setup for CARLA is detailed in Table \ref{table:carla_resnet18}.

\begin{table}[h!]
\caption{Hyper-parameters of ResNet for CARLA dataset.}
\label{table:carla_resnet18}
\begin{center}
\begin{tabular}{cll}
\toprule
    & \textbf{Hyper-parameter} & \textbf{Value}  \\
\midrule
    & Backbone of hidden layers & ResNet18 \\
  
 Architecture    & Last layer hidden dim & $512$  \\ 
\midrule
    & Epochs & 100 \\
    & Batch size & 512 \\
    & Optimizer & SGD \\
    & Momentum & 0.9 \\
Training    & Learning rate & 0.001 \\
     
    & Seeds & 0, 1\\
    & Compute resources & NVIDIA A100 8358 80GB \\
    & Number of compute workers & 8 \\
    & Requested compute memory & 200 GB \\
    & Approximate average execution time & 42 hours\\
    
\bottomrule
\end{tabular}
\end{center}
\end{table} 

\subsection{Inter-task correlations}

Our chosen datasets exhibit a range of task correlations. Table \ref{table:datasets} below includes the Pearson correlation coefficient between the $i$-th and $j$-th target components for $i\neq j$. When the target dimension is 2, there is one correlation value between the two target components; when the target dimension is 3, there are three correlation values between the three target components. From Table \ref{table:datasets}, we observe that the target components in CARLA 2D and Reacher are nearly uncorrelated, whereas those in Hopper and Swimmer exhibit stronger correlations. This demonstrates that multi-task learning's advantages in our experiments are not solely attributable to high-correlation scenarios. While strongly correlated tasks do provide a ``best-case" benchmark, our results also highlight settings where task relationships are weak - underscoring multi-task learning's ability to leverage even limited shared structure.  

\begin{table}[h!]
\caption{Overview of datasets employed in our analysis.}
\label{table:datasets}
\begin{center}
\begin{tabular}{cccccc}
\toprule
\textbf{Dataset} & \textbf{Data Size} & \textbf{Input Type} & \textbf{Target Dimension} $n$ & \textbf{Target Correlation} \\
\midrule
Swimmer & 1K & raw state & 2 & -0.244 \\

Reacher & 10K & raw state & 2 & -0.00933 \\

Hopper & 10K & raw state & 3  & [-0.215, -0.090, 0.059]\\
\midrule
CARLA 2D & 600K & RGB image & 2 & -0.0055\\

\bottomrule
\end{tabular}
\end{center}
\end{table} 

\section{Additional experimental results} \label{additexpres}

\subsection{Summary of train MSE empirical results across a broad range of architectures}

The UFM formulation depends on the idea that the nonlinear feature extractor is flexible enough to be capable of approximating any function. To explore what happens when this part of the network has dramatically reduced or increased capacity, we cover in depth how the training of the networks impacts the experimental results of Section \ref{sec:4.2} and Section \ref{sec:5.1}. 

To provide further analysis of how the size of neural networks influences the empirical results of Figure \ref{fig:multi-task-vs-n-single-task-case1} and Figure \ref{fig:case1}, we have repeated our experiments across multiple architectures, covering a broad range of widths and depths. For this ablation, we selected the MuJoCo environments of Reacher, Swimmer, and Hopper. The results summarized in Figures \ref{fig:arch_vs_tasks_reacher}-\ref{fig:arch_vs_tasks_hopper} are organized from left to right and top to bottom based on the number of parameters of the neural networks. We find that, regardless of the architecture, our experimental results align with the predictions of Theorem \ref{singlevsmulti} regarding multi-task and single-task models. Similarly, in Figures \ref{fig:whiten_vs_norm_reacher}-\ref{fig:whiten_vs_norm_hopper}, we summarize experiments across a broad range of architectures to test the claims of Theorem \ref{stdtrain2} regarding training with whitened and normalized targets as opposed to training with raw targets. The results remain consistent.

Thus, our choice of datasets with input data ranging from raw robotic states (vectors) to images and training networks ranging from shallow MLPs to the deep and wide ResNet architecture, strikes a balance, and demonstrates the soundness of our empirical findings for low and high capacity networks.

\newpage

\begin{figure}[htbp]
  \centering
  \includegraphics[width=1\linewidth]{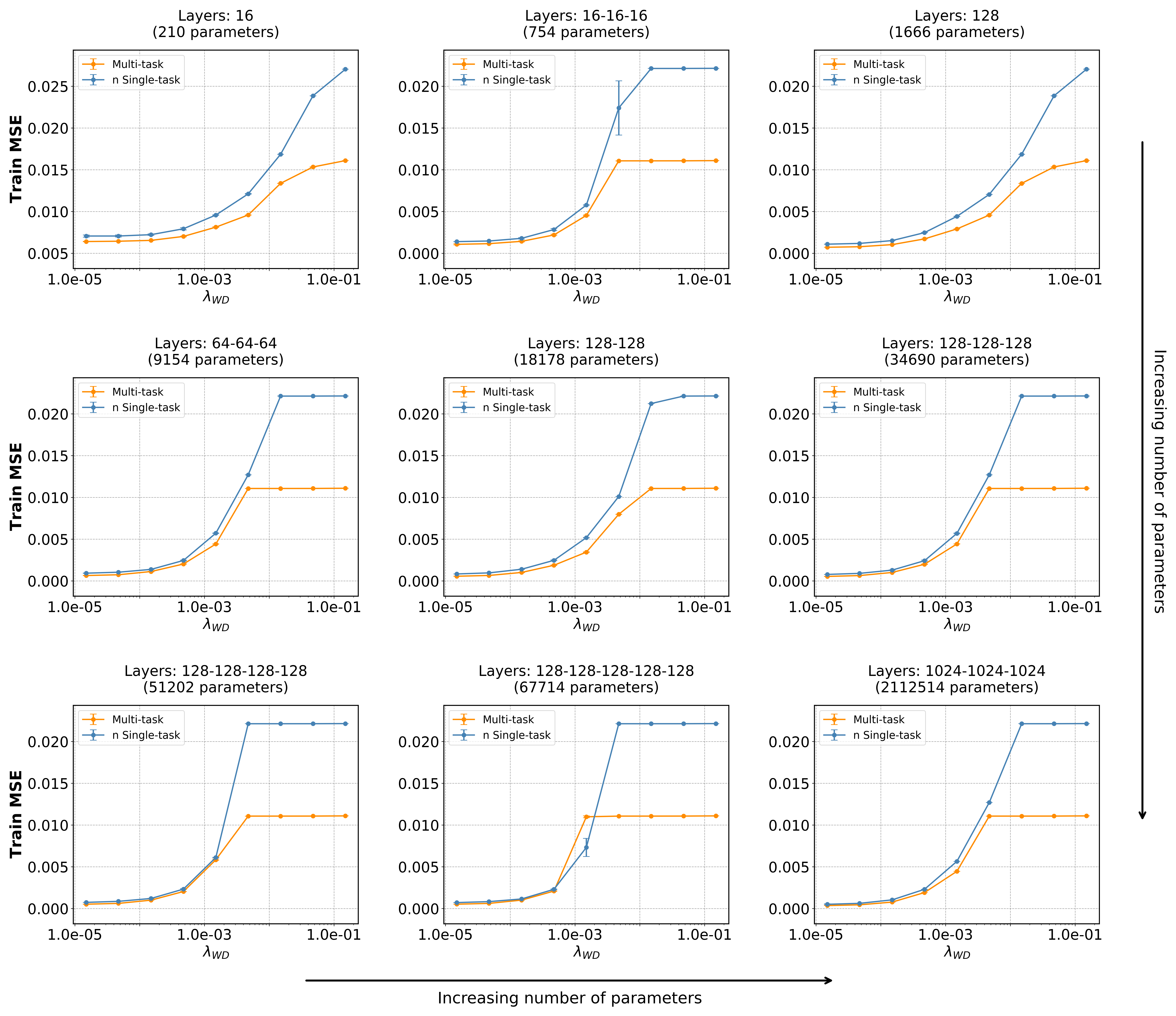}
  \caption{\textbf{Effect of network architecture on multi‑ vs.\ single‑task training: Reacher.} Comparison of the training error of a single multi-task model with that of multiple single task models for different weight decay values after training with the standard parameter-regularized loss function across different architectures. The architectures are denoted by by their layer sizes (input and output layers omitted for simplicity). The number of parameters increases from left to right and from top to bottom.}
  \label{fig:arch_vs_tasks_reacher}
\end{figure}

\begin{figure}[htbp]
  \centering
  \includegraphics[width=1\linewidth]{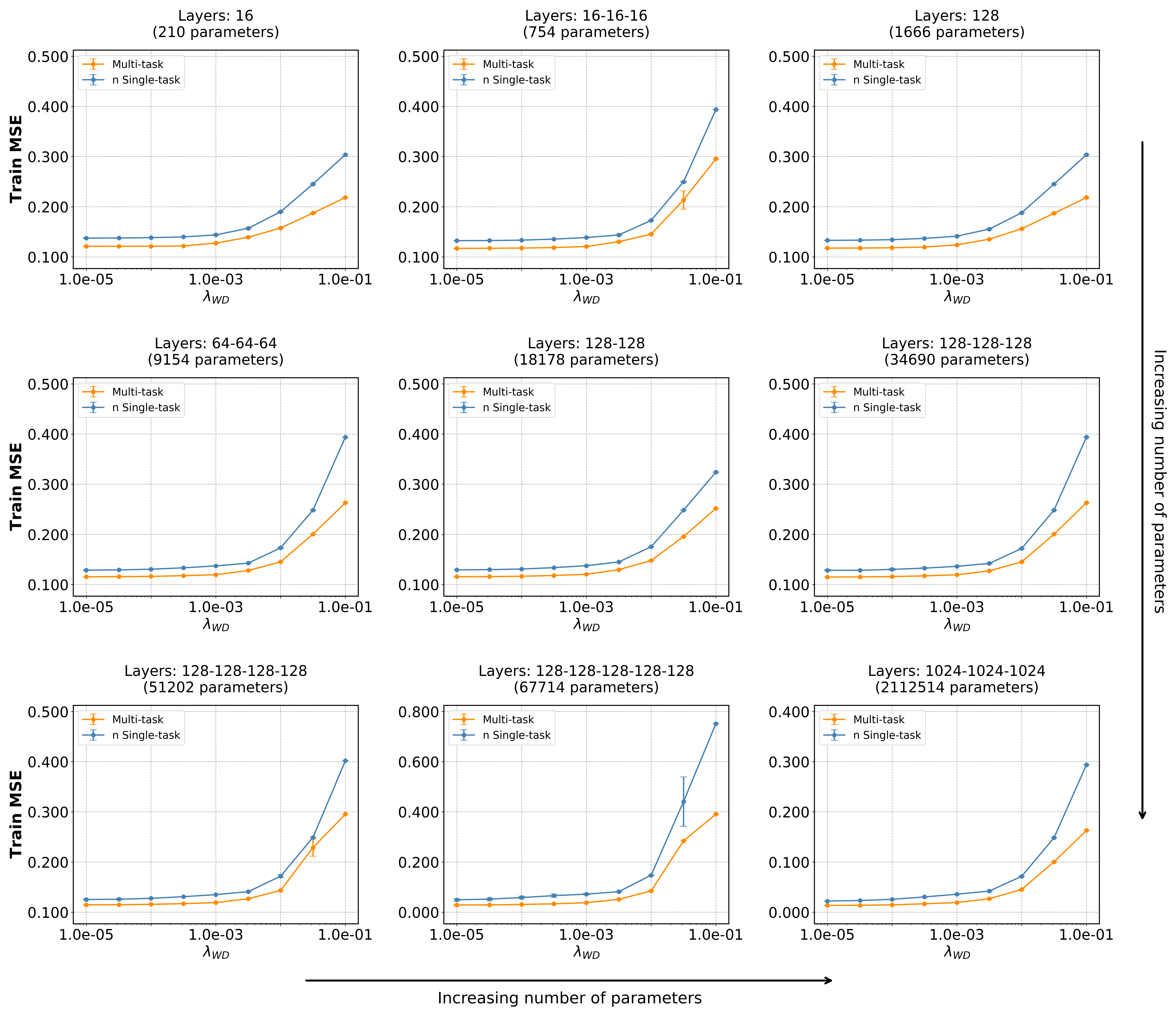}
  \caption{\textbf{Effect of network architecture on multi‑ vs.\ single‑task training: Swimmer.} Comparison of the training error of a single multi-task model with that of multiple single task models for different weight decay values after training with the standard parameter-regularized loss function across different architectures. The architectures are denoted by by their layer sizes (input and output layers omitted for simplicity). The number of parameters increases from left to right and from top to bottom.}
  \label{fig:arch_vs_tasks_swimmer}
\end{figure}

\begin{figure}[htbp]
  \centering
  \includegraphics[width=1\linewidth]{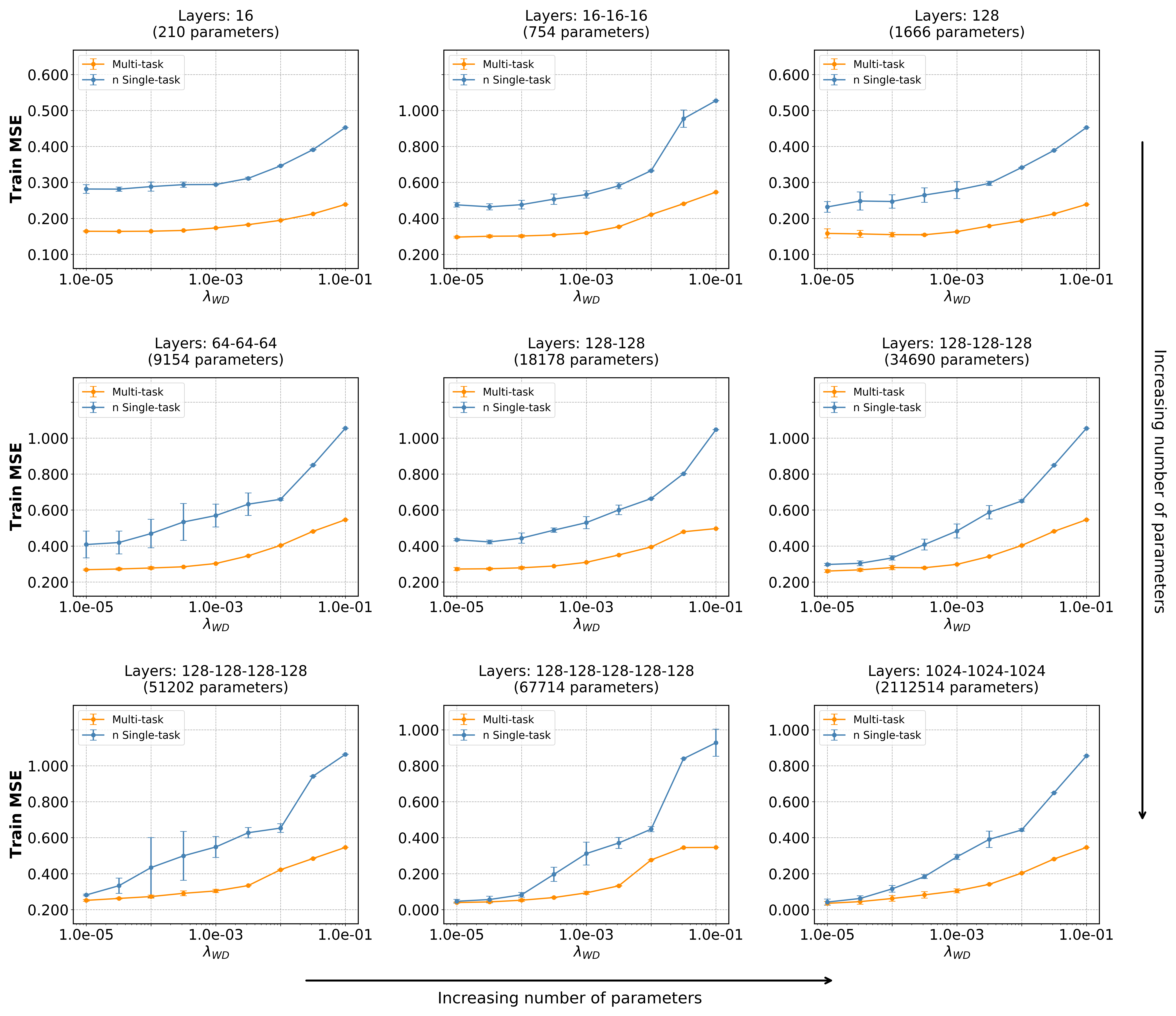}
  \caption{\textbf{Effect of network architecture on multi‑ vs.\ single‑task training: Hopper.} Comparison of the training error of a single multi-task model with that of multiple single task models for different weight decay values after training with the standard parameter-regularized loss function across different architectures. The architectures are denoted by by their layer sizes (input and output layers omitted for simplicity). The number of parameters increases from left to right and from top to bottom.}
  \label{fig:arch_vs_tasks_hopper}
\end{figure}

\newpage  

\begin{figure}[htbp]
  \centering
  \includegraphics[width=1\linewidth]{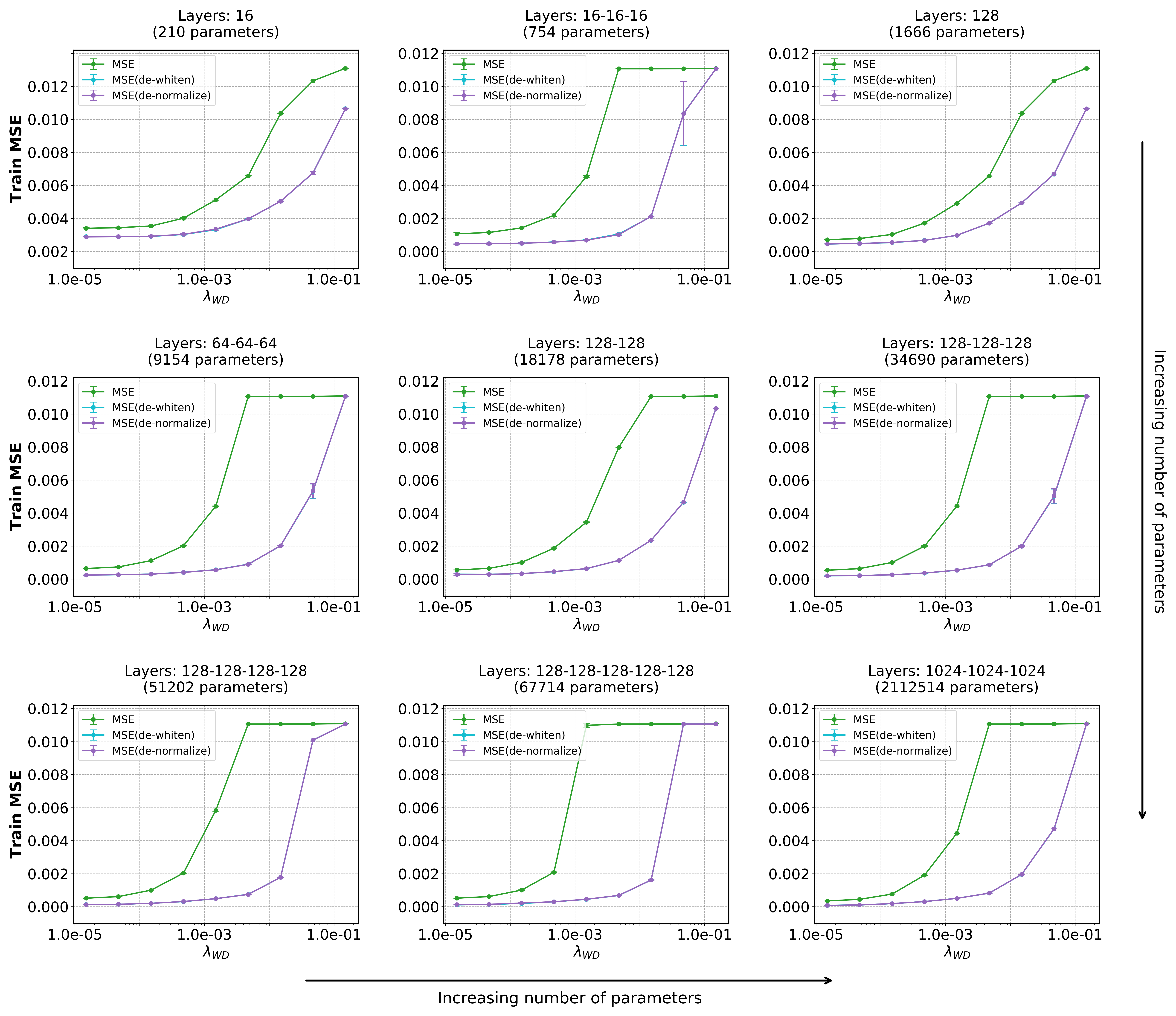}
  \caption{\textbf{Whitening vs.\ normalization vs.\ raw targets (Swimmer).}  
           Training‑MSE comparisons for the Reacher environment. Comparison of the effect that target whitening and normalization have on training error for different weight decay values after training with the standard parameter-regularized loss function across different architectures. The green curve (in short MSE) records the training error for different weight decay values after training with the original unprocessed targets. The architectures are denoted by their layer sizes (input and output layers omitted for simplicity). The number of parameters increases from left to right and from top to bottom.}
  \label{fig:whiten_vs_norm_reacher}
\end{figure}

\begin{figure}[htbp]
  \centering
  \includegraphics[width=1\linewidth]{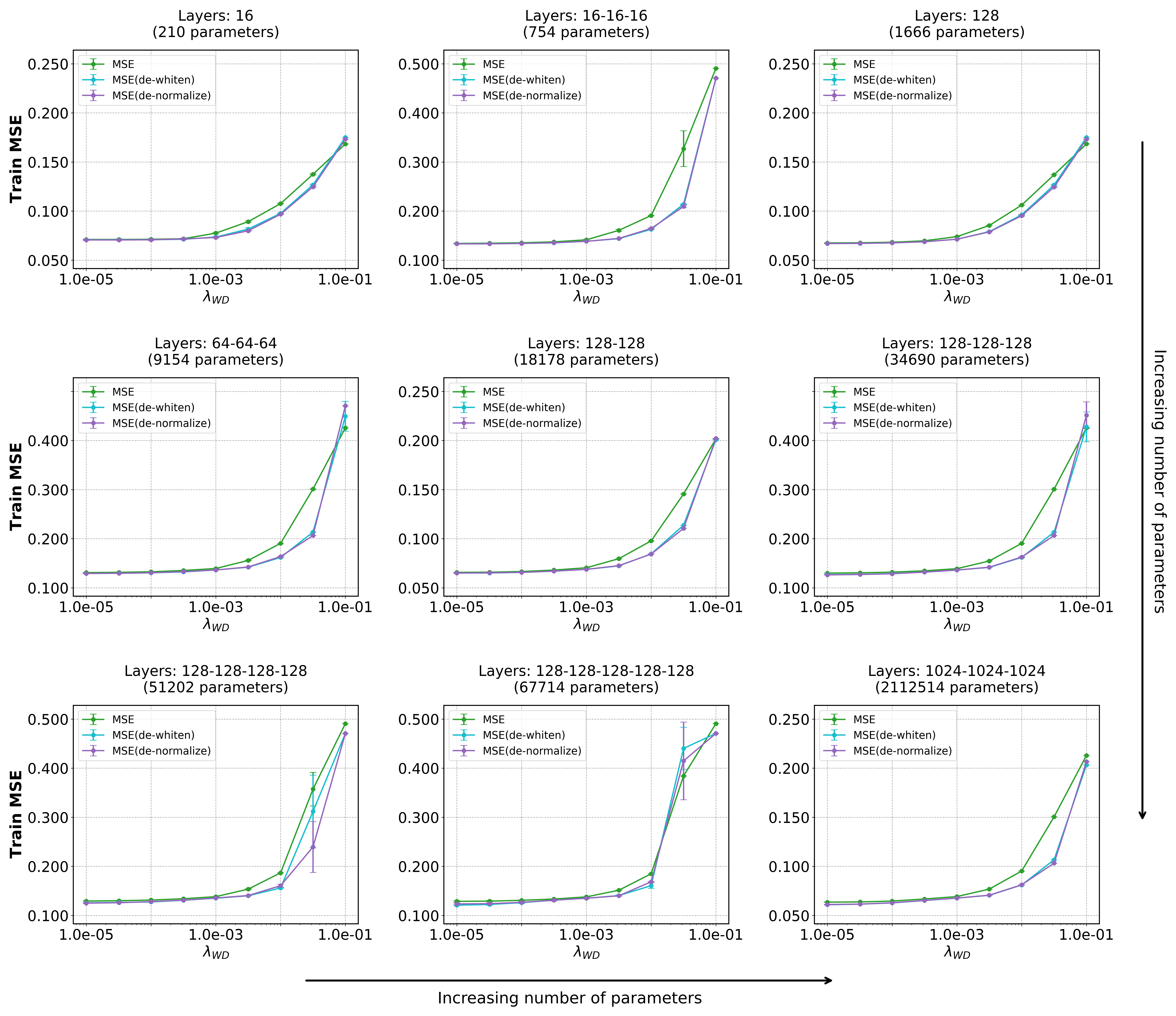}
  \caption{\textbf{Whitening vs.\ normalization vs.\ raw targets (Reacher).}  
           Training‑MSE comparisons for the Reacher environment. Comparison of the effect that target whitening and normalization have on training error for different weight decay values after training with the standard parameter-regularized loss function across different architectures. The green curve (in short MSE) records the training error for different weight decay values after training with the original unprocessed targets. The architectures are denoted by their layer sizes (input and output layers omitted for simplicity). The number of parameters increases from left to right and from top to bottom.}
  \label{fig:whiten_vs_norm_swimmer}
\end{figure}

\begin{figure}[htbp]
  \centering
  \includegraphics[width=1\linewidth]{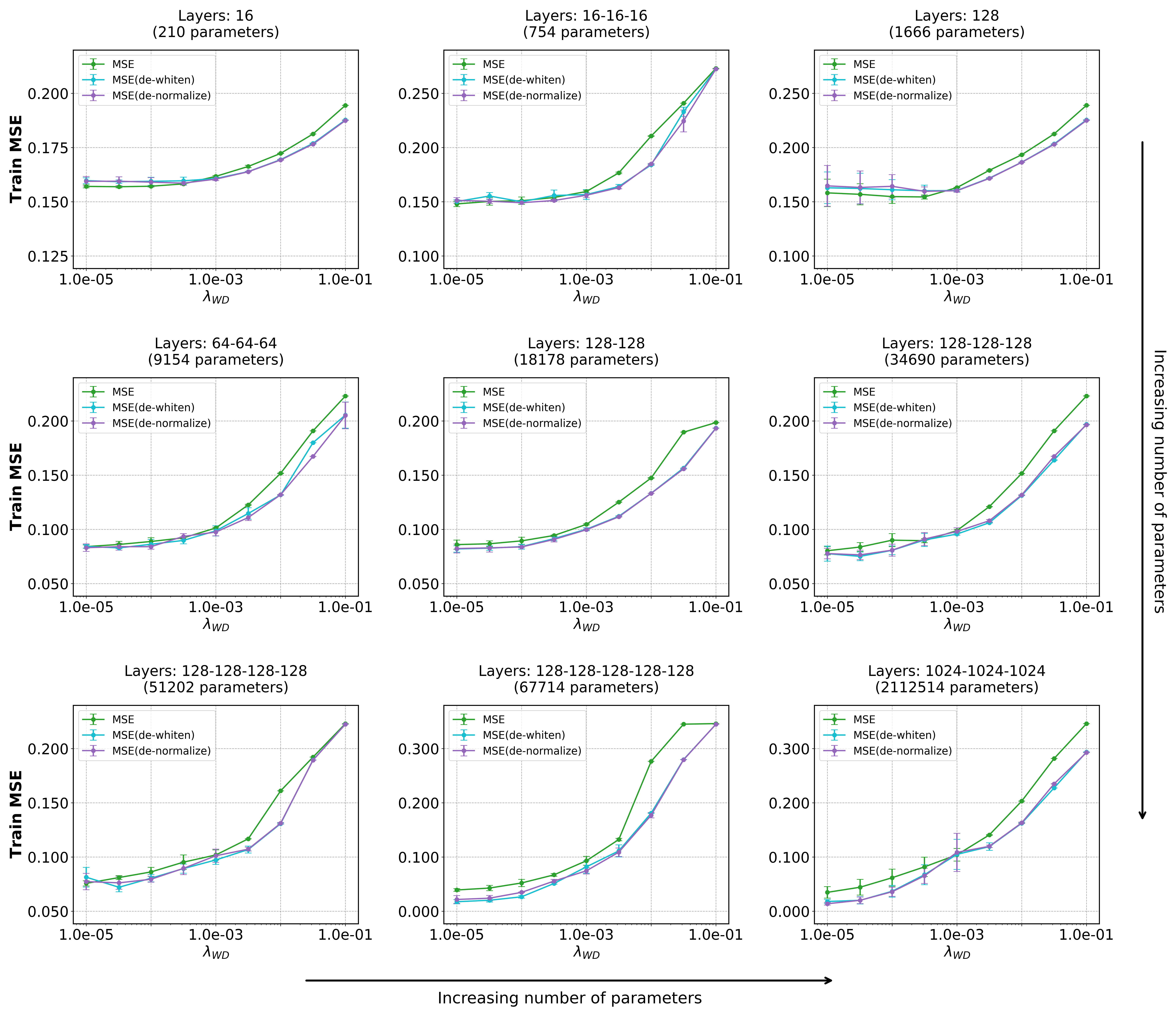}
  \caption{\textbf{Whitening vs.\ normalization vs.\ raw targets (Hopper).}  
           Training‑MSE comparisons for the Reacher environment. Comparison of the effect that target whitening and normalization have on training error for different weight decay values after training with the standard parameter-regularized loss function across different architectures. The green curve (in short MSE) records the training error for different weight decay values after training with the original unprocessed targets. The architectures are denoted by their layer sizes (input and output layers omitted for simplicity). The number of parameters increases from left to right and from top to bottom.}
  \label{fig:whiten_vs_norm_hopper}
\end{figure}
\clearpage

\subsection{Summary of test MSE empirical results}

We have conducted extensive experiments to evaluate test MSE, complementing our theoretical and empirical analysis on training MSE. Regarding the inclusion of test results, these serve two purposes. One, to illustrate that even with zero training MSE, e.g., in the unregularized cases, modest weight decay can still improve generalization, and another, to provide preliminary evidence that our theoretical findings on training MSE do not lead to impractical model behavior. The results are summarized in the figures below. 

For the vast majority of cases, the trends for test MSE align closely with those shown for training MSE in Figures \ref{fig:multi-task-vs-n-single-task-case1} and \ref{fig:case1}:

\begin{itemize}
    \item Figure \ref{fig:test-mse-multi}: Multi-task learning consistently achieves lower test MSE than single-task learning across all datasets and weight decay values, supporting the intuition that, by taking on account task dependencies and learning shared patterns through weights and representation sharing in a single model, multi-task regression improves generalization. For an attempt to explain this theoretically using generalization bounds, we refer the reader to the discussion that follows the derivation of \eqref{2ndgenbound} and \eqref{3rdgenbound} in Appendix \ref{genboundappendix}.
    \item Figure \ref{fig:test-mse-whitening}: The difference between whitening and normalization is minor, in line with Figure \ref{fig:case1} of the main body. The effect depends on the average eigenvalue of the covariance matrix. When the average eigenvalue is $<1$ (e.g., the three MuJoCo datasets), whitening/normalization generally reduces test MSE, except for very small weight decay values. When the average eigenvalue is $>1$ (e.g., CARLA 2D), whitening/normalization increases test MSE, reflecting the same trend observed for training MSE.
\end{itemize}

In summary, the test MSE results reinforce the robustness of our findings: modest regularization improves generalization, multi-task learning offers consistent benefits, and whitening/normalization effects align with the spectral properties of the target data covariance.

\begin{figure*}[htbp]
    \centering
    \begin{tabular}{c c c c c}
        & \textbf{Reacher} & \textbf{Swimmer} & \textbf{Hopper} & \textbf{CARLA 2D} \\
        \raisebox{1.5em}{\rotatebox{90}{\small \textbf{test MSE}}} &
        \includegraphics[width=0.205\linewidth]{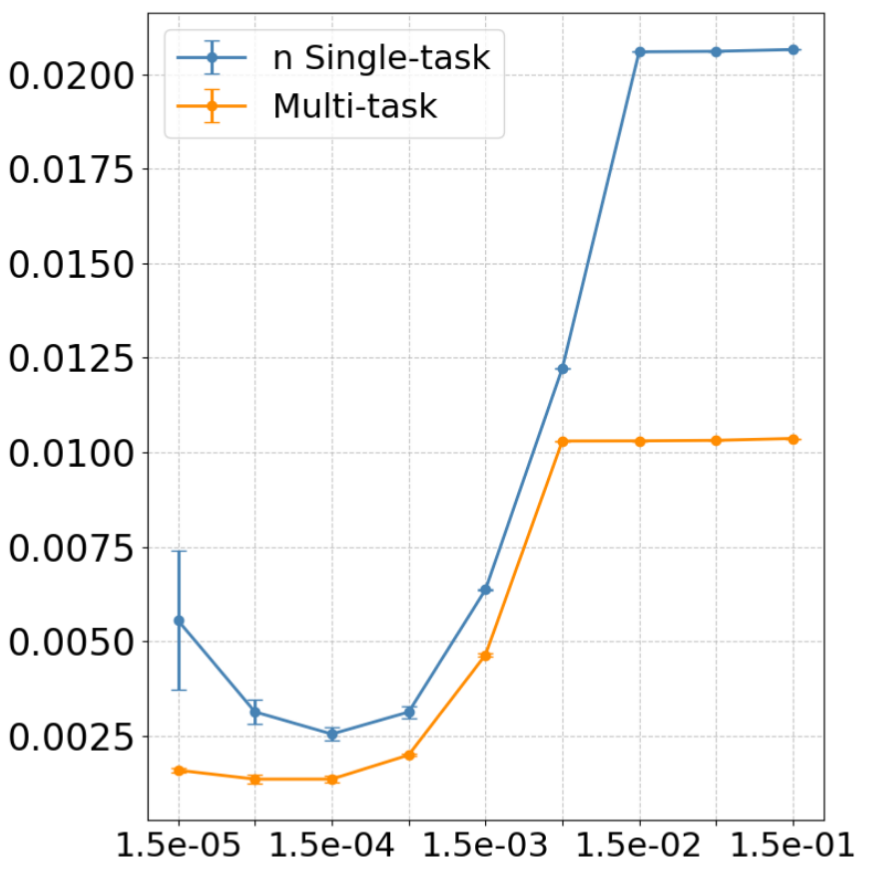} &
        \includegraphics[width=0.205\linewidth]{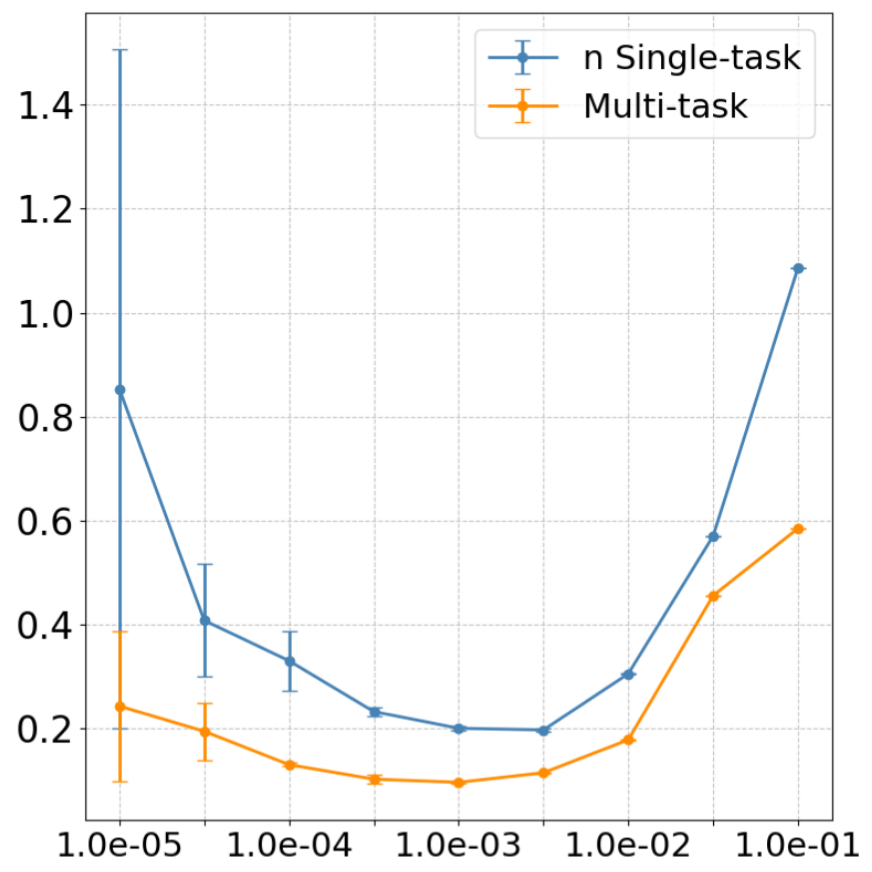} &
        \includegraphics[width=0.205\linewidth]{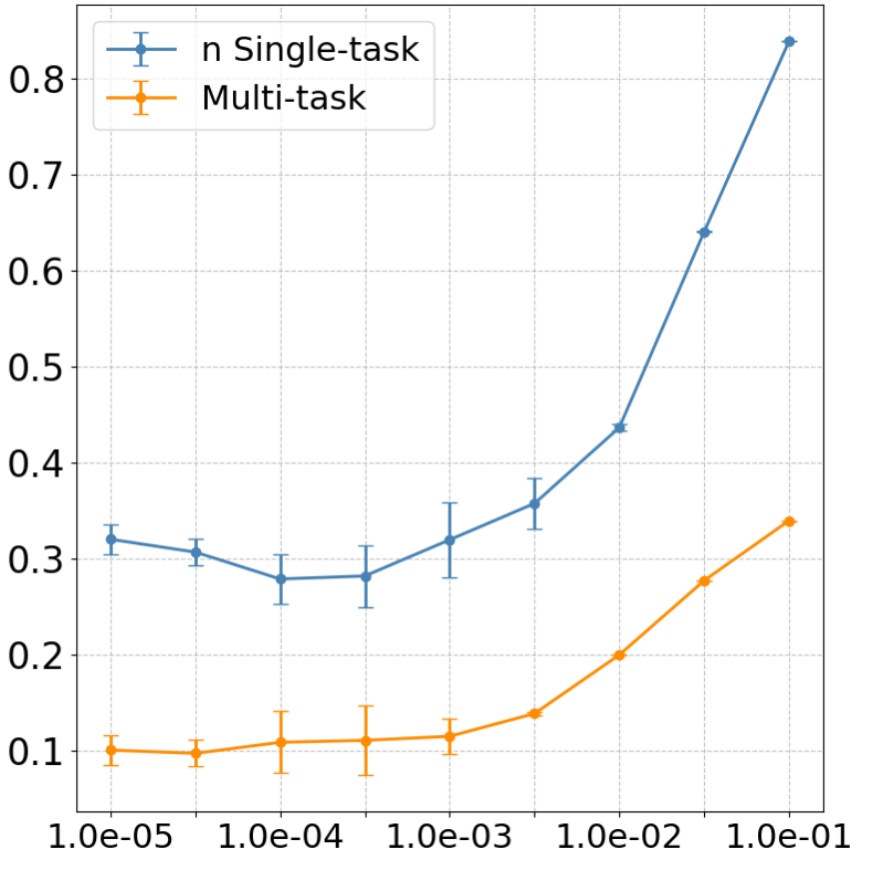} &
        \includegraphics[width=0.205\linewidth]{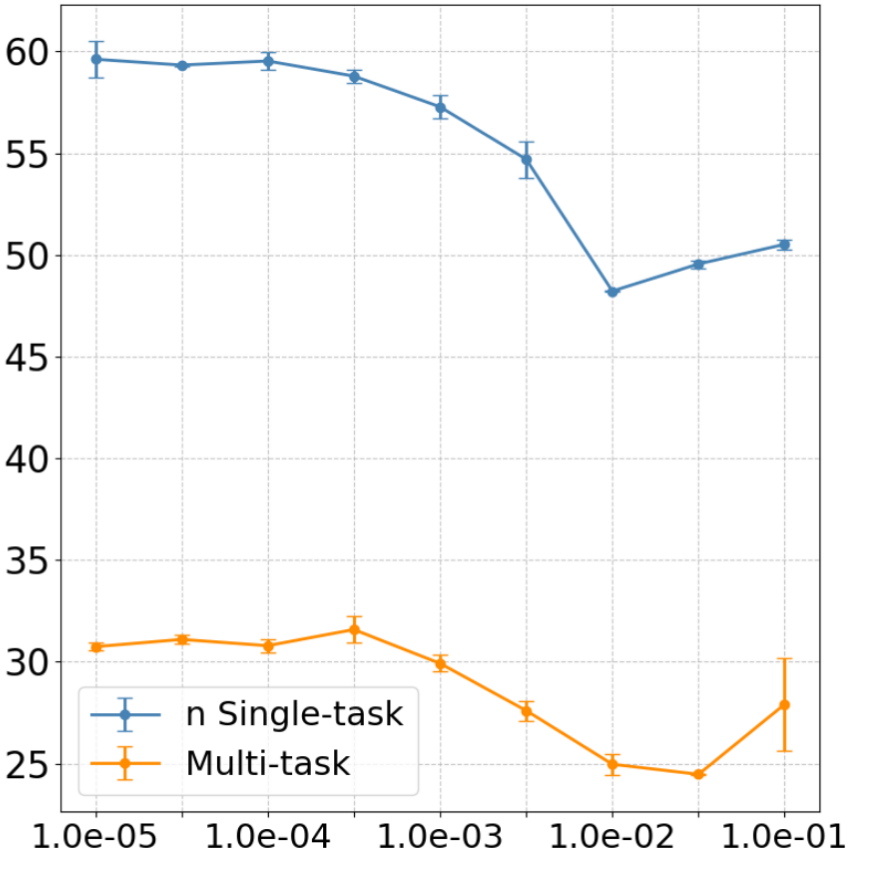}
        \\
        & $\lambda_{WD}$ & $\lambda_{WD}$ & $\lambda_{WD}$ & $\lambda_{WD}$
    \end{tabular}
    \caption{Comparison of the test error of a single multi-task model with that of multiple single task models for different weight decay values after training with the standard parameter-regularized loss function. Values shown as mean±std across random seeds.}
    \label{fig:test-mse-multi}
\end{figure*}

\begin{figure*}[htbp]
    \centering
    \begin{tabular}{c c c c c}
        & \textbf{Reacher} & \textbf{Swimmer} & \textbf{Hopper} & \textbf{CARLA 2D} \\
        \raisebox{1.5em}{\rotatebox{90}{\small \textbf{test MSE}}} &
        \includegraphics[width=0.205\linewidth]{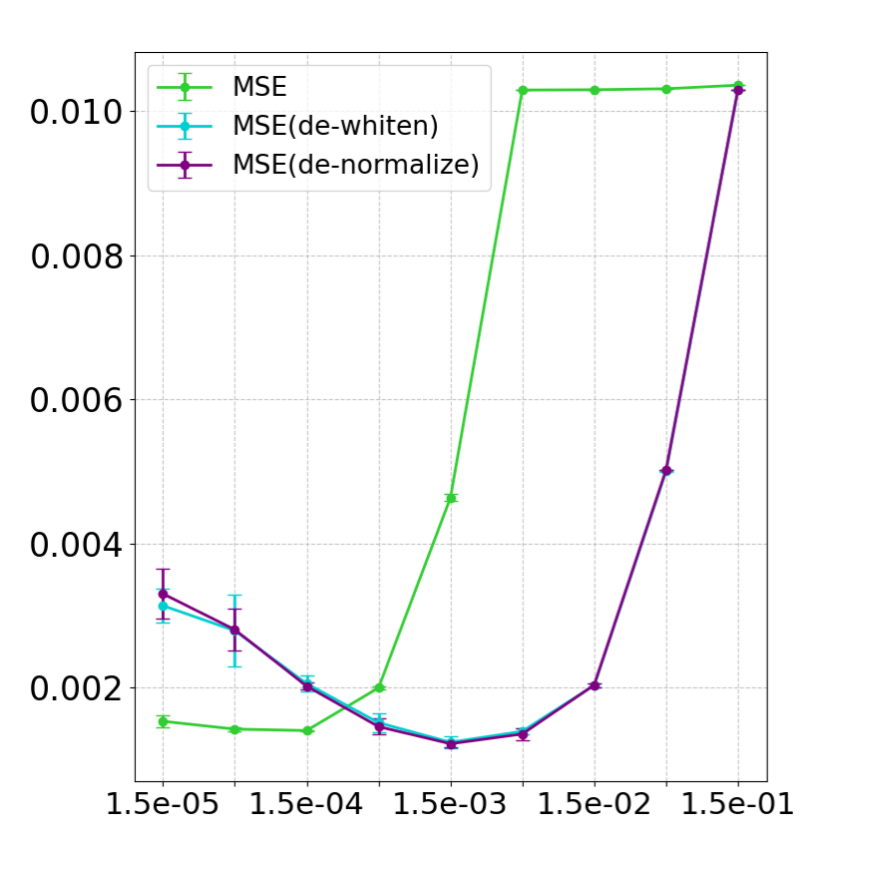} &
        \includegraphics[width=0.205\linewidth]{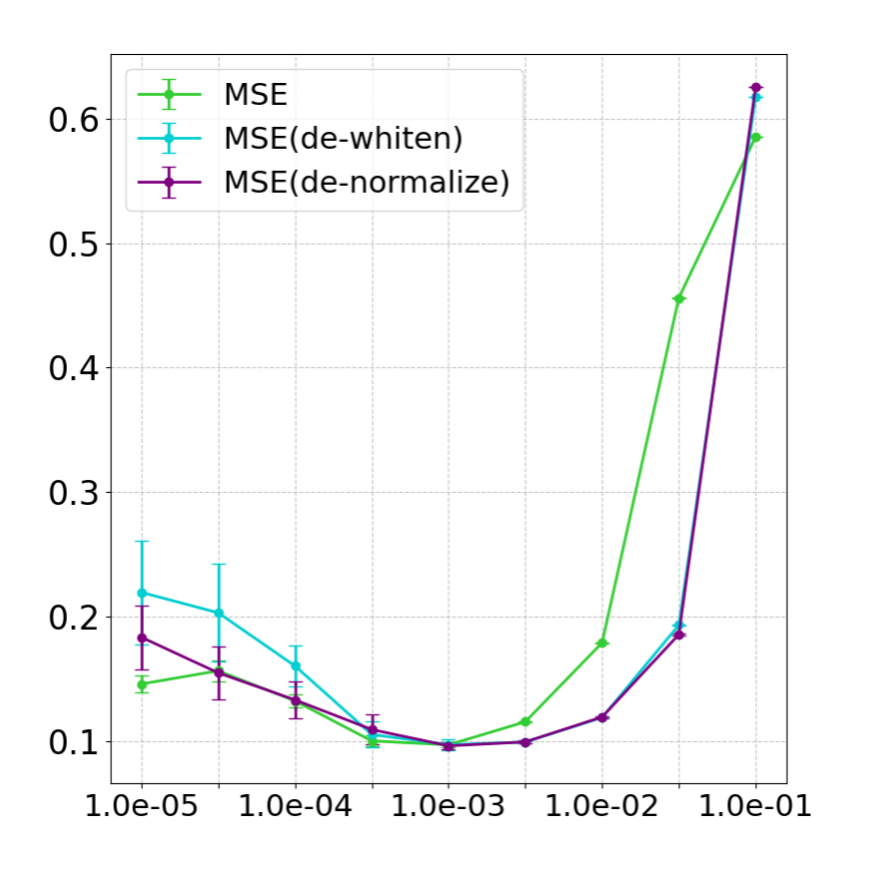} &
        \includegraphics[width=0.205\linewidth]{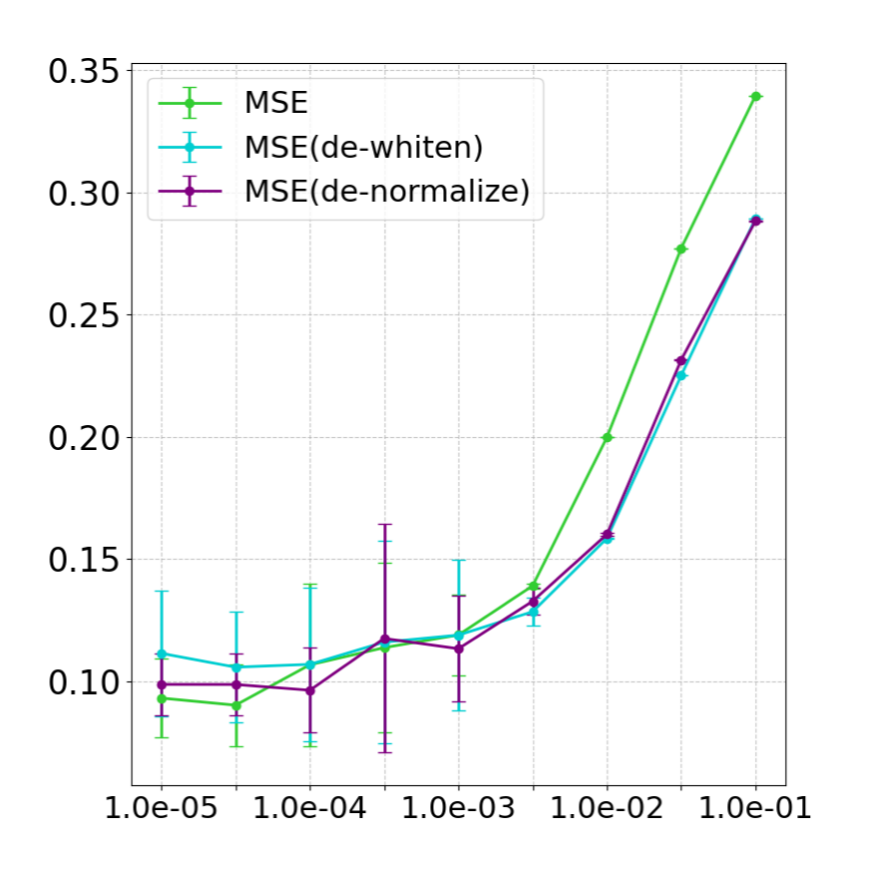} &
        \includegraphics[width=0.205\linewidth]{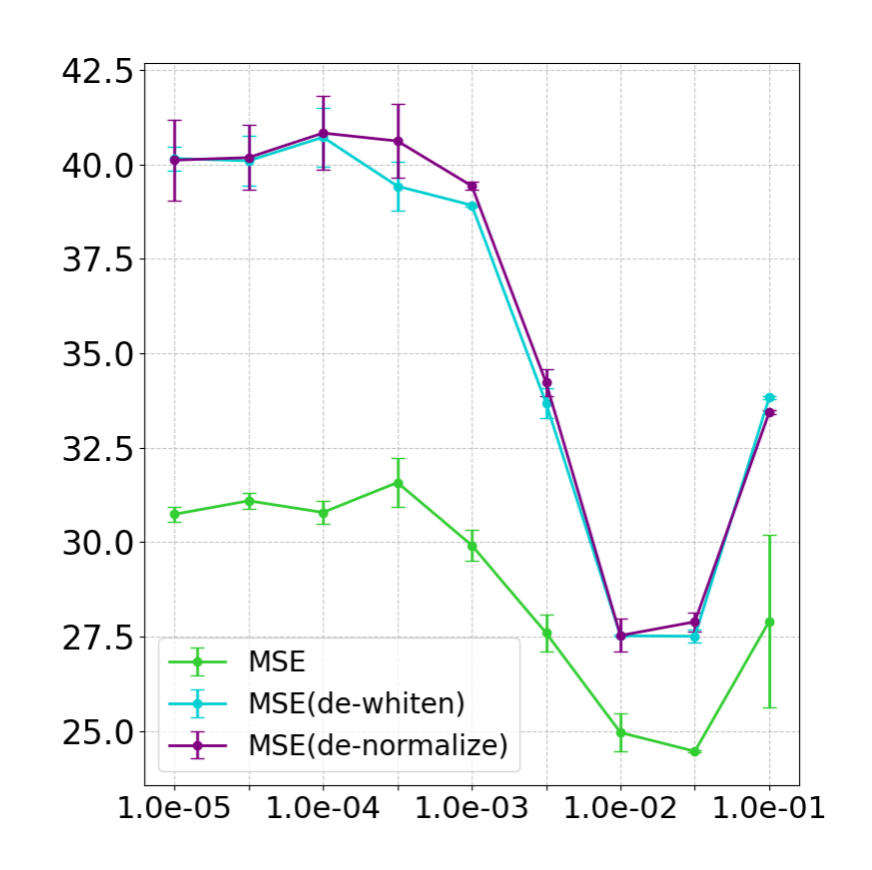}
        \\
        & $\lambda_{WD}$ & $\lambda_{WD}$ & $\lambda_{WD}$ & $\lambda_{WD}$
    \end{tabular}
    \caption{Comparison of the test error of a single multi-task model with that of multiple single task models for different weight decay values after training with the standard parameter-regularized loss function. Values shown as mean±std across random seeds.}
    \label{fig:test-mse-whitening}
\end{figure*}

\section{Proof of Theorem \ref{gendim}} \label{proofthmsecd}

\begin{proof}[Proof of Theorem \ref{gendim}]
    Let $\tilde{\bY}=\bY-\bYb=U \tilde{\bSigma} V^T$ denote the compact SVD of $\tilde{\bY}$, where $U\in \mathbb{R}^{n\times n}$ is orthogonal, $V\in \mathbb{R}^{M\times n}$ is semi-orthogonal, and $\tilde{\bSigma}\in \mathbb{R}^{n\times n}$ is diagonal containing the singular values $\eta_1\ge \eta_2\ge \cdots \ge \eta_n > 0$. 
    Let $(\bH^*, \bW^*, \bb^*)$ be a global minimum of \eqref{formofloss}. By Lemma B.1\citep{zhou2022optimization}, the associated MSE  is 
\begin{equation} \label{minMSE}
\text{MSE}(\bH^*, \bW^*, \bb^*)=\frac{1}{M} \sum_{i=1}^n ([\eta_i-\sqrt{Mc}]_{+}-\eta_i)^2.
\end{equation}
Furthermore, using the SVD of $\tilde{\bY}=U \tilde{\bSigma} V^T$,
\[
\bSigma = \frac{\tilde{\bY}\tilde{\bY}^T}{M} =
U \frac{\tilde{\bSigma}}{\sqrt{M}} V^T V \frac{\tilde{\bSigma}}{\sqrt{M}} U^T= U \left[\frac{\tilde{\bSigma}}{\sqrt{M}}\right]^2 U^T,
\]
from which we have $\bSigma^{1/2} = U \frac{\tilde{\bSigma}}{\sqrt{M}} U^T$. This further yields
\[
\sqrt{M} [ \bSigma^{1/2} - \sqrt{c} \bI_n ] = U \tilde{\bSigma} U^{T}-U \sqrt{ M c} \bI_n U^T
\]
Since $U^T=U^{-1}$,
\begin{equation} \label{eigendec}
\sqrt{M} [\bSigma^{1/2}-\sqrt{c} \bI_n]=U \left[\tilde{\bSigma} -\sqrt{M c}\bI_n\right] U^{-1},
\end{equation}
which implies that the matrices $\sqrt{M} [\bSigma^{1/2}-\sqrt{c} \bI_n]$ and $\tilde{\bSigma} -\sqrt{M c}\bI_n$ are similar. As a result, they have the same eigenvalues.
The $n\times n$ matrix on the left-hand side of \eqref{eigendec} has eigenvalues given by $\sqrt{M \lambda_i}-\sqrt{M c}$, $i=1,..., n$, where $\lambda_i$ is the $i$-th eigenvalue of $\bSigma$, 
whereas the $n\times n$ matrix on the right-hand side of \eqref{eigendec} has eigenvalues
$\eta_i-\sqrt{M c}$, $i=1,..., n$. Since the eigenvalues in these two sets are both arranged in descending order, we have
\begin{equation} \label{eigeneq}
\sqrt{\lambda_i} = \frac{\eta_i}{\sqrt{M}}, \qquad \text{for all } i=1,..., n.
\end{equation}
Correspondingly, by \eqref{minMSE} and \eqref{eigeneq} we obtain
\begin{align*}
\text{MSE}(\bH^*, \bW^*, \bb^*)
&= \frac{1}{M} \sum_{i=1}^n ([\eta_i-\sqrt{Mc}]_{+}-\eta_i)^2
\\
&= \frac{1}{M} \sum_{i=1}^n ([\sqrt{M\lambda_i}-\sqrt{Mc}]_{+}-\sqrt{M\lambda_i})^2
\\
&= j^{*} c + \sum_{i=j^{*}+1}^n \lambda_i
\end{align*}
as desired.
When $n=1$, $\bSigma$ is simply the scalar $\sigma^2$, which together with the equation above completes the proof.
\end{proof}

Next, we give a simple upper bound of the form \eqref{actbound} as a corollary. The bound is explicitly given as the minimum of the  MSEs when $c<\lambda_{\min}$ ($j^{*}=n$) and $c>\lambda_{\max}$ ($j^{*}=0$) respectively. 

\begin{corollary} \label{simplebound}
    \begin{equation} \label{actbound}
    \textnormal{MSE}(\bH, \bW, \bb)\le \min\left\{nc, \sum_{i=1}^n \lambda_i\right\}.
    \end{equation}
\end{corollary}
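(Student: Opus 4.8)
The plan is to work directly from the closed-form expression \eqref{MSE_train} established in Theorem \ref{gendim}, namely $\textnormal{MSE}(\bH^*, \bW^*, \bb^*) = j^* c + \sum_{i=j^*+1}^n \lambda_i$, and to show that this single quantity is dominated by each of the two terms appearing inside the minimum. Since the right-hand side of \eqref{actbound} is the smaller of $nc$ and $\sum_{i=1}^n \lambda_i$, it suffices to establish the two separate inequalities $j^* c + \sum_{i=j^*+1}^n \lambda_i \le nc$ and $j^* c + \sum_{i=j^*+1}^n \lambda_i \le \sum_{i=1}^n \lambda_i$, and then take the smaller of the two bounds.

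Both inequalities will follow immediately from the defining property of $j^* = \max\{j : \lambda_j \ge c\}$, which partitions the eigenvalues at the threshold $c$. For the first bound, I would use that every index $i > j^*$ satisfies $\lambda_i < c$, so the tail sum obeys $\sum_{i=j^*+1}^n \lambda_i \le (n - j^*)c$; adding $j^* c$ to both sides collapses the right-hand side to $nc$. For the second bound, I would instead use that every index $i \le j^*$ satisfies $\lambda_i \ge c$, hence $j^* c \le \sum_{i=1}^{j^*} \lambda_i$; adding the common tail $\sum_{i=j^*+1}^n \lambda_i$ to both sides produces $\sum_{i=1}^n \lambda_i$ on the right. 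Combining the two then gives $\textnormal{MSE}(\bH^*, \bW^*, \bb^*) \le \min\{nc, \sum_{i=1}^n \lambda_i\}$, which is \eqref{actbound}.

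There is essentially no hard step here: the estimate is a direct consequence of splitting the eigenvalue spectrum at $c$ according to the definition of $j^*$. The only point I would treat with a little care is the boundary behavior, to confirm the argument stays valid in the two extreme configurations highlighted in the remark preceding the corollary. When $c < \lambda_{\min}$ we have $j^* = n$, the first tail sum is empty, and the bound $nc$ is attained exactly; when $c > \lambda_{\max}$ we have $j^* = 0$, so the MSE reduces to $\sum_{i=1}^n \lambda_i$. These are precisely the two cases in which one of the two candidate bounds becomes the active one, so the $\min$ in \eqref{actbound} is tight.
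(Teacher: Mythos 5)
Your proposal is correct and follows essentially the same route as the paper: both start from the closed form $j^*c + \sum_{i=j^*+1}^n \lambda_i$ of Theorem \ref{gendim} and split the spectrum at $c$ via the definition of $j^*$ to bound the expression by $nc$ and by $\sum_{i=1}^n \lambda_i$ separately. Your handling of the boundary cases $j^*=n$ and $j^*=0$ is a careful touch (and your use of $\le$ rather than the paper's strict $<$ in the first display is actually the more accurate statement when $j^*=n$), but the argument is the same.
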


\begin{proof}
    Note that since $c\le \lambda_i$, for all $i\le j^{*}$, and $c>\lambda_i$, for all $i>j^{*}$, we have that 
    \[
    \text{MSE}(\bH, \bW, \bb)=j^{*} c +\sum_{i=j^{*}+1}^n \lambda_i < j^{*} c + \sum_{i=j^{*}+1}^n c = n c,
    \]
    \[
     \text{MSE}(\bH^*, \bW^*, \bb^*)=j^{*} c +\sum_{i=j^{*}+1}^n \lambda_i = \sum_{i=1}^{j^{*}} c +  \sum_{i=j^{*}+1}^n \lambda_i \le \sum_{i=1}^{j^{*}} \lambda_i +  \sum_{i=j^{*}+1}^n \lambda_i = \sum_{i=1}^n \lambda_i.
    \]
    The desired result readily follows.
\end{proof}

\section{Proof of Theorem \ref{singlevsmulti}} \label{th4.2}

\begin{proof}[Proof of Theorem \ref{singlevsmulti}]
We begin by using the Schur-Horn theorem to establish some relations between the eigenvalues $\lambda_1,\ldots,\lambda_n$ of $\bSigma$ and the diagonal elements (variances) $\sigma_1^2,\ldots,\sigma_n^2$ of $\bSigma$. Recall that both $\lambda_1,\ldots,\lambda_n$ and $\sigma_1^2,\ldots,\sigma_n^2$ are arranged in descending order. 
By the Schur-Horn theorem, the vector containing the diagonal elements of $\bSigma$ is majorized by the vector that contains the ordered eigenvalues of $\bSigma$, i.e.,
\begin{align}
\sum_{i=1}^k \lambda_i &\ge \sum_{i=1}^k \sigma_i^2,\qquad \mbox{ for all } k=1,\ldots,n-1, \label{schur1}
\\
\sum_{i=1}^n \lambda_i&=\sum_{i=1}^n \sigma_i^2. \label{schur2}
\end{align}
From \eqref{schur1} and \eqref{schur2} we have 
\begin{equation} \label{schur3}
    \lambda_1 \geq \sigma_1^2 \geq \sigma_2^2 \geq \cdots \geq \sigma_n^2 \geq \lambda_n.
\end{equation}
and also the following inequalities for the tail partial sums:
\[
    \sum_{i=k}^n \lambda_i \leq \sum_{i=k}^n \sigma_i^2,\qquad \mbox{ for all } k=1,\ldots,n.
    \label{schur4}
\]
Fix $0 < c < \tilde{c}$. We will consider 6 cases:

\textbf{Case I:} Suppose $c < \tilde{c} < \lambda_n$. By \eqref{schur3}, we have that $\tilde{c}<\sigma_n^2$. Thus, by Theorem \ref{gendim} and Corollary \ref{msensingle}, the two MSEs are given by 
\[
nc = \text{MSE}(\text{multi}, c)<\text{MSE}(\text{n-single}, \tilde{c})=n \tilde{c}.
\]
Observe that when $c=\tilde{c}$, the two MSEs are equal to $nc$.

\textbf{Case II:} Suppose $\tilde{c} \ge c > \lambda_1$. By \eqref{schur3}, we have that $\tilde{c}>\sigma_1^2$. Thus, by Theorem \ref{gendim} and Corollary \ref{msensingle}, the two MSEs are given by  
\[
\text{MSE}(\text{multi},c)=\sum_{i=1}^n \lambda_i = \sum_{i=1}^n \sigma_i^2 = \text{MSE}(\text{n-single},\tilde{c}).
\]

\textbf{Case III:} 
Suppose $c<\lambda_n$, $\tilde{c}>\lambda_1$. From Theorem \ref{gendim} and Corollary \ref{msensingle}, the difference of the two MSEs is given by
\[
\textnormal{MSE}(\text{multi},c) - \textnormal{MSE}(\text{n-single},\tilde{c}) = nc-\sum_{i=1}^n \lambda_i =\sum_{i=1}^n (c-\lambda_i)< 0,
\]
since $c<\lambda_n\le \lambda_i$, for all $i=1,...,n$.

\textbf{Case IV:} 
Suppose $\lambda_n<c<\lambda_1<\tilde{c}$. From Theorem \ref{gendim} and Corollary \ref{msensingle}, the difference of the two MSEs is given by
\begin{align*}
\textnormal{MSE}(\text{multi},c) - \textnormal{MSE}(\text{n-single},\tilde{c}) 
=j^{*}c-\sum_{i=1}^{j^{*}}\lambda_i
=
\sum_{i=1}^{j^{*}}(c-\lambda_i)<0,
\end{align*}
since $c<\lambda_1$ and $c\le \lambda_i$, for all $i\le j^{*}$.

\textbf{Case V:}  
Suppose $c<\lambda_n<\tilde{c}<\lambda_1$. From Theorem \ref{gendim} and Corollary \ref{msensingle}, the difference of the two MSEs is given by
\begin{align*}
\textnormal{MSE}(\text{multi},c) - \textnormal{MSE}(\text{n-single},\tilde{c}) 
&=n c -k^{*}\tilde{c}-\sum_{i=k^{*}+1}^{n}\sigma_i^2
\\
&= \sum_{i=k^{*}+1}^n (c-\sigma_i^2)+k^{*}(c-\tilde{c})
\\
&< \sum_{i=k^{*}+1}^n (\lambda_i-\sigma_i^2)+k^{*}(c-\tilde{c})
\\
&=
\sum_{i=1}^{k^{*}} (\sigma_i^2-\lambda_i) + k^{*}(c-\tilde{c}) < 0,
\end{align*}
where the first inequality holds since $c < \lambda_n\le \lambda_i$, for all $i=1,...,n$, and the last inequality is due to \eqref{schur1} and since $c<\tilde{c}$.

\textbf{Case VI:} 
Suppose $\lambda_n < c \le \tilde{c} < \lambda_1$. Recall that $j^{*} := \max \{j: \lambda_j\ge c\}$ and $k^{*} := \max \{j: \sigma_j^2\ge \tilde{c}\}$. We will consider three subcases.

\textbf{VIa):} Suppose $j^{*} < k^{*}$. From Theorem \ref{gendim} and Corollary \ref{msensingle}, the difference of the two MSEs is given by
\begin{align}
&\textnormal{MSE}(\text{multi},c) - \textnormal{MSE}(\text{n-single},\tilde{c}) \nonumber
\\
&=  \sum_{i=j^{*}+1}^n \lambda_i -\sum_{i=k^{*}+1}^n \sigma_i^2 - (k^{*} - j^{*})c + k^{*}(c-\tilde{c}) \nonumber
\\
&=\sum_{i=j^{*} +1}^{k^{*}} \lambda_i + \sum_{i=k^{*}+1}^n \lambda_i - \sum_{i=k^{*} +1}^n \sigma_i^2 - 
(k^{*}-j^{*}) c + k^{*} (c-\tilde{c}) \nonumber
\\
&= \sum_{i=k^{*} +1}^n (\lambda_i-\sigma_i^2) + \sum_{i=j^{*} +1}^{k^{*}} (\lambda_i-c) + k^{*}(c-\tilde{c})<0.  \label{MSEdifference1}
\end{align}
The first sum in \eqref{MSEdifference1} is non-positive by \eqref{schur4}. 
The second sum is strictly negative since 
$\lambda_i<c$, for all $i>j^{*}$. Therefore, $\text{MSE}(\text{multi}, c) < \text{MSE}(\text{n-single}, \tilde{c})$. 

\textbf{VIb):} Suppose $k^{*} \le j^{*}$. From Theorem \ref{gendim} and Corollary \ref{msensingle}, the difference of the two MSEs is given by
\begin{align}
&\text{MSE}(\text{multi},c)-\text{MSE}(\text{n-single},\tilde{c}) \nonumber 
\\
&=  \sum_{i=k^{*}+1}^{j^{*}} (c-\sigma_i^2) + \sum_{i=j^{*} +1}^n (\lambda_i-\sigma_i^2) + k^{*}(c-\tilde{c})
\nonumber \\
&=  \sum_{i=k^{*}+1}^{j^{*}} (c-\sigma_i^2) + \sum_{i=1}^{j^{*}} (\sigma_i^2 -\lambda_i) + k^{*}(c-\tilde{c})
\nonumber \\
&=
\sum_{i=k^{*} +1}^{j^{*}} (c-\sigma_i^2)+\sum_{i=1}^{k^{*}} (\sigma_i^2-\lambda_i)
+\sum_{i=k^{*} +1}^{j^{*}} (\sigma_i^2-\lambda_i) + k^{*}(c-\tilde{c})
\nonumber \\
&\leq
\sum_{i=k^{*} +1}^{j^{*}} (\lambda_i -\sigma_i^2)+\sum_{i=1}^{k^{*}} (\sigma_i^2-\lambda_i) \label{inequality1}
+\sum_{i=k^{*} +1}^{j^{*}} (\sigma_i^2-\lambda_i) + k^{*}(c-\tilde{c})
\\
&=
\sum_{i=1}^{k^{*}} (\sigma_i^2-\lambda_i) + k^{*}(c-\tilde{c}) \le 0,
\label{inequality2}
\end{align}
where the inequality in \eqref{inequality1} holds due to the fact that $c\le \lambda_i$, for all $i\le j^{*}$, and the inequality \eqref{inequality2} is due to \eqref{schur1}.
Therefore, $\text{MSE}(\text{multi}, c) \le \text{MSE}(\text{n-single}, \tilde{c})$. 
\end{proof}

\begin{remark}
    \textit{Applicability without L2-regularization}: Our analysis relies on L2-regularization to yield non-trivial closed-form results, that is the UFM-approximation of training MSE in Theorem \ref{gendim} holds when the UFM-regularization constant $c>0$. If $c=0$, it is easy to see that, for any $n\times d$ matrix $\bW$ with full-rank $n$, considering the set of $\bH$ that satisfy $||\bW \bH-\bY||_F^2=0$, gives:
    \[
    \bH = \bW^{+} \bY + (\bI_d-\bW^{+} \bW) \bZ, 
    \]
    where $\bW^{+}$ is the pseudoinverse of $\bW$ and $\bZ$ is any $d\times M$ matrix, and this is the well-known solution of the standard least squares problem. Thus, comparing multi-task ($c=0$) with single-tasks ($\tilde{c}=0)$ is trivial as both training MSEs are identical and equal to zero. Such a UFM-inspired approximation agrees with classical notions of overfitting.
\end{remark}

\section{Proofs of Theorems \ref{whitentrain1} and \ref{whitentrain2}} \label{disc}

Before delving into the proof of Theorems \ref{whitentrain1} and \ref{whitentrain2}, we provide a brief introduction to the motivation and key concepts relevant to whitening.  

In statistical analysis, whitening (or sphering) refers to a common pre-processing step to transform random variables to orthogonality. A whitening transformation (or sphering transformation) is a linear transformation that converts a random vector with a known covariance matrix into a new random vector of the same dimension and with covariance matrix given by the identity matrix. 
Orthogonality among random vectors greatly simplifies multivariate data analysis both from a computational as well as from a statistical standpoint. Whitening is employed mostly in pre-processing but is also part of modeling, see for instance \citep{hao2015sparsifying, zuber2009gene}.

Due to rotational freedom there are infinitely many whitening transformations. All produce orthogonal but different sphered random variables. To understand differences between whitening transformations, and to select an optimal whitening procedure for a particular situation, the work of \citep{kessy2018optimal}, provided an overview of the underlying theory and discussed several natural whitening procedures. For example, they identified PCA whitening as the unique procedure that maximizes the compression of all components of the unprocessed vector in each component of the sphered vector. 
Of a particular interest for our work is \textit{ZCA whitening}, ZCA standing for zero-phase component analysis. Rather than dimensionality reduction and data compression, ZCA whitening is useful for retaining maximal similarity between the unprocessed and the transformed variables.

\begin{definition} \label{zcadef}
    ZCA whitening employs the sphering matrix
    $\bW^{ZCA}=\bSigma^{-1/2}$, i.e.,
    \[
    \bY^{ZCA}:=\bSigma^{-1/2} (\bY-\bYb).
    \]
\end{definition}

We make the following observations:

\begin{enumerate}
    \item Clearly, writing $\bW^{ZCA}=\mathbf{Q}_1 \bSigma^{-1/2}$, where $\mathbf{Q}_1$ is an orthogonal matrix, $\bW^{ZCA}$ satisfies 
        \[
        \bW^{ZCA} \bSigma (\bW^{ZCA})^T=\bI_n,
        \]
    thus leading to new (of the infinitely many) whitening transformations. In fact, with $\mathbf{Q}_1=\bI_n$, ZCA whitening is the unique sphering method with a symmetric whitening matrix. 
    \item 
    Breaking the rotational invariance by investigating the cross-covariance between unprocessed (centered) and sphered targets is key to identifying the optimal whitening transformations.
    The sample \textit{cross-covariance} between $\bY^{ZCA}$ and $\bY$ is given by
    \[
    \mathbf{\Phi}:=\frac{\bY^{ZCA} (\bY-\bYb)^T}{M}=\bW^{ZCA} \frac{(\bY-\bYb)(\bY-\bYb)^T}{M}=\bW^{ZCA} \bSigma=\mathbf{Q}_1 \bSigma^{1/2}.
    \]
    Note that $\mathbf{\Phi}$ is in general not symmetric, unless $\mathbf{Q}_1=\bI_n$. Using targets and their (ZCA)-whitened counterparts, the least squares objective is minimized when the trace of the cross-covariance is maximized, e.g., \citep{kessy2018optimal}[eq. (13) and (14)],
    \[
    \frac{1}{M}||\bY^{ZCA}-(\bY-\bYb)||_F^2= n-2\text{tr}(\mathbf{\Phi}) + \sum_{i=1}^n \sigma_i^2= n - 2 \text{tr}(\mathbf{Q}_1 \bSigma^{1/2}) + \sum_{i=1}^n \sigma_i^2. 
    \]
    It can be shown that the minimum of the latter is attained at $\mathbf{Q}_1=\bI_n$. Therefore, not only ZCA whitening is the unique sphering method with a symmetric whitening matrix. It is also the optimal whitening apporach identified by evaluating the objective function of the total squared distance between the unprocessed and the whitened targets, computed from the cross-covariance $\mathbf{\Phi}$.
\end{enumerate}

To summarize, ZCA whitening is the unique procedure used with the aim of making the transformed targets as similar as possible to the unprocessed targets, which is appealing since in many applications it is desirable to remove dependencies with minimal additional adjustments. 

We reformulate Theorem \ref{whitentrain1} to also include information about the structure of the global minima, and consequently the target predictions regarding those.

\begin{theorem} \label{whitentrain}
 Let $c:=\lH \lW$. Any global minimum $(\tilde{\bH}, \tilde{\bW})$ of the regularized UFM-loss
\begin{equation} \label{whiteobj}
\frac{1}{2M} ||\bW \bH -\bY^{ZCA}||_F^2 + \frac{\lH}{2M} ||\bH||_F^2+\frac{\lW}{2}||\bW||_F^2,
\end{equation}
where $\lH$ and $\lW$ are non-negative regularization parameters, takes the following form.

If $0<c<1$, then for any semi-orthogonal matrix $\bR$,
\begin{equation} \label{otherprops}
\tilde{\bW}  =  \left( {\frac{\lH}{\lW}} \right)^{1/4} \tilde{\bA}^{1/2}\bR, \hspace{.2in}
\tilde{\bH}  =   \sqrt{\frac{\lW}{\lH}} \tilde{\bW}^{T}  \bY^{ZCA},
\end{equation}
\[
\tilde{\bW} \tilde{\bH}=(1-\sqrt{c}) \bY^{ZCA},
\]
where $\tilde{\bA}=(1-\sqrt{c}) \bI_n$.

If $c> 1$, then $(\tilde{\bH}, \tilde{\bW})=(\mathbf{0}, \mathbf{0})$. 

Furthermore,
\[
\textnormal{MSE}(\textnormal{de-whiten}) = 
\begin{cases}
c \displaystyle \sum_{i=1}^n \lambda_i, &\text{ if } c < 1,
\\
 \displaystyle \sum_{i=1}^n  \lambda_i, &\text{ if } c \geq 1,
\end{cases}
\]
where $\lambda_i$ is the $i$-th eigenvalue (in descending order) of the original sample covariance matrix $\bSigma$.
\end{theorem}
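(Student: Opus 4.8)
The plan is to recognize that the objective \eqref{whiteobj} is precisely the UFM-loss \eqref{formofloss} applied to the centered targets $\bY^{ZCA}$, with the bias suppressed because $\bY^{ZCA}$ already has zero sample mean. The crucial simplification is that the sample covariance of the whitened targets is the identity: since $M^{-1}\bY^{ZCA}(\bY^{ZCA})^T = \bSigma^{-1/2}\bSigma\bSigma^{-1/2} = \bI_n$, all $n$ eigenvalues of the governing covariance equal $1$. First I would feed this into the spectral bookkeeping of Theorem \ref{gendim}: the threshold index becomes $j^{*}=n$ when $c<1$ (every eigenvalue $1\ge c$) and $j^{*}=0$ when $c>1$ (every eigenvalue $1<c$). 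This single dichotomy in $c$ is exactly what produces the two cases in the statement.

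Next I would import the structural description of the UFM global minimizers. By the stationarity analysis underlying \eqref{formofloss} (as in \citep{zhou2022optimization,andriopoulos2024prevalence}), together with the norm-matching identity $\lW\|\bW\|_F^2 = M^{-1}\lH\|\bH\|_F^2$ valid at any minimizer, a global minimizer of \eqref{whiteobj} satisfies $\tilde{\bH} = \sqrt{\lW/\lH}\,\tilde{\bW}^T\bY^{ZCA}$ together with $\sqrt{\lW/\lH}\,\tilde{\bW}\tilde{\bW}^T = (1-\sqrt c)_{+}\bI_n$, the uniform threshold being $(1-\sqrt c)_{+}$ precisely because every eigenvalue equals $1$. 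Writing $\tilde{\bW} = (\lH/\lW)^{1/4}\tilde{\bA}^{1/2}\bR$ with $\tilde{\bA}=(1-\sqrt c)\bI_n$ and $\bR$ any semi-orthogonal matrix with $\bR\bR^T = \bI_n$, a direct substitution yields $\tilde{\bW}\tilde{\bW}^T = (\lH/\lW)^{1/2}(1-\sqrt c)\bI_n$, hence $\tilde{\bW}\tilde{\bH} = (1-\sqrt c)\bY^{ZCA}$, which is just the residual characterization $\mathbf{E}=-\sqrt c\,\bY^{ZCA}$ specialized to already-whitened targets. When $c>1$ the threshold vanishes and the unique minimizer is $(\tilde{\bH},\tilde{\bW})=(\mathbf{0},\mathbf{0})$.

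Finally I would compute $\text{MSE}(\text{de-whiten})$ by pushing the whitened predictions through the de-whitening map. For $c<1$, the whitened predictions are $\tilde{\bY}=\tilde{\bW}\tilde{\bH}=(1-\sqrt c)\bY^{ZCA}$, so $\hat{\bY}=\bSigma^{1/2}\tilde{\bY}+\bYb=(1-\sqrt c)(\bY-\bYb)+\bYb$, giving $\hat{\bY}-\bY=-\sqrt c\,(\bY-\bYb)$ and therefore $\text{MSE}(\text{de-whiten})=M^{-1}c\,\|\bY-\bYb\|_F^2 = c\,\mathrm{tr}(\bSigma)=c\sum_{i=1}^n\lambda_i$. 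For $c\ge 1$, $\tilde{\bY}=\mathbf{0}$ forces $\hat{\bY}=\bYb$, so $\text{MSE}(\text{de-whiten})=M^{-1}\|\bY-\bYb\|_F^2=\mathrm{tr}(\bSigma)=\sum_{i=1}^n\lambda_i$. Both identities use $M^{-1}\|\bY-\bYb\|_F^2=M^{-1}\mathrm{tr}\big((\bY-\bYb)(\bY-\bYb)^T\big)=\mathrm{tr}(\bSigma)$.

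I expect the only nonroutine step to be pinning down the closed form of the minimizers, namely the balancing constant $(\lH/\lW)^{1/4}$ and the $\tilde{\bA}^{1/2}\bR$ factorization, which rests on the first-order stationarity conditions for \eqref{whiteobj} and the norm-matching identity above. The saving grace is that the identity covariance collapses the per-singular-value thresholding of Theorem \ref{gendim} into a single uniform threshold, so once the generic structure is in hand everything reduces to the scalar comparison of $c$ with $1$, and the de-whitening computation is pure bookkeeping.
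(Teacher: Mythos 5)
Your proposal is correct and follows essentially the same route as the paper: observe that the whitened targets have identity sample covariance, invoke the known closed-form structure of the UFM global minimizers (the paper cites \citep{andriopoulos2024prevalence}[Theorem 4.1] for exactly the factorization $\tilde{\bW}=(\lH/\lW)^{1/4}\tilde{\bA}^{1/2}\bR$, $\tilde{\bH}=\sqrt{\lW/\lH}\,\tilde{\bW}^T\bY^{ZCA}$ that you derive from stationarity plus norm-matching), conclude $\tilde{\bW}\tilde{\bH}=(1-\sqrt{c})\bY^{ZCA}$ for $c<1$ and $(\mathbf{0},\mathbf{0})$ for $c\ge 1$, and then push the predictions through the de-whitening map to get $\hat{\bY}-\bY=-\sqrt{c}(\bY-\bYb)$ and hence $\textnormal{MSE}=c\,\mathrm{tr}(\bSigma)$ (resp.\ $\mathrm{tr}(\bSigma)$). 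The only cosmetic difference is that you sketch the stationarity derivation of the minimizer structure where the paper cites it directly; the substance is identical.
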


Before giving the proof, let us first discuss the nature of $\tilde{\bW}$ and $\tilde{\bH}$ when whitening is applied. 
In light of \eqref{otherprops}, properties that hold are:
\begin{enumerate}
    \item The rows of $\tilde{\bW}$ are orthogonal (due to $\tilde{\bA}=(1-\sqrt{c}) \bI_n$ being diagonal).
    \item The rows of $\tilde{\bW}$ are equinorm. More specifically, 
    \[
    ||\tilde{\bw}_j||_2^2=\lH \left(\frac{1}{\sqrt{c}}-1\right), \qquad j=1,...,n.
    \]
    \item The angles between the columns of $\tilde{\bH}$ are equal to angles between the whitened $\tilde{\by}_i$'s, i.e.,
    \[
    \tilde{\bH}^T \tilde{\bH} = \lW \left(\frac{1}{\sqrt{c}}-1\right) (\bY^{ZCA})^T \bY^{ZCA}.
    \]
\end{enumerate}

\begin{proof}[Proof of Theorem \ref{whitentrain}]
It is easily seen that 
\[
M^{-1} \bY^{ZCA} (\bY^{ZCA})^T=\bI_n.
\]
Thus, the covarance matrix for the whitened targets is the $n \times n$ identity matrix. 
\\
\textbf{Case $c<1$:}
\\
By \citep{andriopoulos2024prevalence}[Theorem 4.1], we have that any global minimum $(\tilde{\bH}, \tilde{\bW)}$ for \eqref{whiteobj}
takes the form:
\[
\tilde{\bW}  =  \left( {\frac{\lH}{\lW}} \right)^{1/4} \tilde{\bA}^{1/2}\bR, \hspace{.2in}
\tilde{\bH}  =   \sqrt{\frac{\lW}{\lH}} \tilde{\bW}^{T}  \bY^{ZCA},
\]
where $\tilde{\bA}=(1-\sqrt{c}) \bI_n$. Note that, under whitening $j^{*}=\max\{j: c\le 1\}=n$. The optimal predictions after whitening (but before de-whitening) are
\[
\tilde{\bW} \tilde{\bH}=\tilde{\bA}^{1/2} \bR \bR^T \tilde{\bA}^{1/2} \bY^{ZCA}
 =(1-\sqrt{c}) \bY^{ZCA}.
\]
Our final de-whitened predictions satisfy
\[
\hat{\bY}= [\bSigma^{1/2}] \tilde{\bW} \tilde{\bH} + \bYb=  (1-\sqrt{c}) (\bY-\bYb)+\bYb.
\]
Therefore,
\[
\hat{\bY}-\bY = -\sqrt{c} (\bY-\bYb),
\]
and
\[
\frac{(\hat{\bY}-\bY) (\hat{\bY}-\bY)^T}{M}=  c \frac{(\bY-\bYb) (\bY-\bYb)^T}{M}=c \bSigma.
\]
The result for $\textnormal{MSE}(\text{de-whiten})$ readily follows by taking traces in both sides.
\\
\textbf{Case $c \geq 1$:}
\\
By \citep{andriopoulos2024prevalence}[Theorem 4.1], we have that the only global minimum is $(\tilde{\bH}, \tilde{\bW})=(\mathbf{0}, \mathbf{0})$. 
Therefore,  
\[
\hat{\bY}=\bYb,
\]
and 
\[
\frac{(\hat{\bY}-\bY) (\hat{\bY}-\bY)^T}{M}=\frac{(\bY-\bYb) (\bY-\bYb)^T}{M}=\bSigma.
\]
The result for $\textnormal{MSE}(\text{de-whiten})$ readily follows by taking traces in both sides.
\end{proof}

\begin{proof}[Proof of Theorem \ref{whitentrain2}]
    
Suppose $0<c\le 1$. By Theorem \ref{gendim} and Theorem \ref{whitentrain1},
    \begin{align*}
    \text{MSE}(\text{de-whiten})-\text{MSE}(\text{multi})
    &=
    c \sum_{i=1}^n \lambda_i - j^{*} c - \sum_{i=j^{*}+1}^n \lambda_i
    \nonumber 
    \\
    &=c \sum_{i=1}^{j^{*}} \lambda_i + c \sum_{i=j^{*}+1}^{n} \lambda_i - j^{*} c - \sum_{i=j^{*}+1}^n \lambda_i
    \nonumber \\
    &=c \left[\sum_{i=1}^{j^{*}} \lambda_i -j^{*}\right] + (c-1) \sum_{i=j^{*}+1}^n \lambda_i<0
    \end{align*}
   if and only if
   \[
   \sum_{i=1}^{j^*} \lambda_i-j^* < c^{-1} (1-c) \sum_{i=j^*+1}^{n} \lambda_i,
   \]
which is condition \eqref{newcond} as postulated in the assumptions of Theorem \ref{whitentrain2}(i).

Suppose $c>1$. By Theorem \ref{gendim} and Theorem \ref{whitentrain1},   
    \begin{align*}
    \text{MSE}(\text{de-whiten})-\text{MSE}(\text{multi})
    &=
    \sum_{i=1}^n \lambda_i - j^{*} c - \sum_{i=j^{*}+1}^n \lambda_i
    \nonumber 
    \\
    &=\sum_{i=1}^{j^{*}} \lambda_i +  \sum_{i=j^{*}+1}^{n} \lambda_i - j^{*} c - \sum_{i=j^{*}+1}^n \lambda_i
    \nonumber \\
    &=\sum_{i=1}^{j^{*}} (\lambda_i - c)\ge 0.
    \end{align*}
    since by definition $\lambda_i\ge c$, for all $i\le j^{*}$.
\end{proof}

\section{Proof of Theorem \ref{stdtrain1}}

We reformulate Theorem \ref{stdtrain1} to also include information about the structure of the global minima, and consequently the target predictions regarding those. 

\begin{theorem} \label{standtrain}
 Let $c:=\lH \lW$. Any global minimum $(\bar{\bH}, \bar{\bW})$ of the regularized UFM-loss
\begin{equation} \label{standobj}
\frac{1}{2M} ||\bW \bH -\bY^{nrm}||_F^2 + \frac{\lH}{2M} ||\bH||_F^2+\frac{\lW}{2}||\bW||_F^2,
\end{equation}
where $\lH$ and $\lW$ are non-negative regularization parameters, takes the following form.

If $0<c<\tilde{\lambda}_{\min}$, then for any semi-orthogonal matrix $\bR$,
\[
\bar{\bW}  =  \left( {\frac{\lH}{\lW}} \right)^{1/4} \bar{\bA}^{1/2}\bR, \hspace{.2in}
\bar{\bH}  =   \sqrt{\frac{\lW}{\lH}} \bar{\bW}^{T}  \bP^{-1/2} \bY^{nrm},
\]
\[
\bar{\bW} \bar{\bH}=[\bI_n-\sqrt{c} \bP^{-1/2}] \bY^{nrm},
\]
where $\bar{\bA}=\bP^{1/2}-\sqrt{c}\bI_n$.

If $c > \tilde{\lambda}_{\max}$, then $(\bar{\bH}, \bar{\bW})=(\mathbf{0}, \mathbf{0})$. 

Furthermore, 
\[
\textnormal{MSE}(\textnormal{de-normalize}) = 
\begin{cases}
c \displaystyle \sum_{i=1}^n \lambda_i, &\text{if } 0<c < \tilde{\lambda}_{\min},
\\
\displaystyle \sum_{i=1}^n \lambda_i, &\text{if } c>\tilde{\lambda}_{\max},
\end{cases}
\]
where $\tilde{\lambda}_{\min}$ and $\tilde{\lambda}_{\max}$ are the min and max eigenvalues of the original sample correlation matrix $\bP$.
\end{theorem}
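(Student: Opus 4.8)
The plan is to reduce the normalized-target problem to the general UFM solution and then push the resulting optimal predictions through the de-normalization map. The first observation I would record is that the sample covariance of the normalized targets is exactly $\bP$: since $\bY^{nrm}=\bV^{-1/2}(\bY-\bYb)$, a direct computation gives $M^{-1}\bY^{nrm}(\bY^{nrm})^T=\bV^{-1/2}\bSigma\bV^{-1/2}=\bP$. Hence solving \eqref{standobj} is structurally identical to the UFM problem with target covariance $\bP$, whose eigenvalues are $\tilde{\lambda}_1\ge\cdots\ge\tilde{\lambda}_n$ with $\tilde{\lambda}_{\max}=\tilde{\lambda}_1$ and $\tilde{\lambda}_{\min}=\tilde{\lambda}_n$. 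This mirrors the whitening proof of Theorem~\ref{whitentrain}, where the covariance was instead $\bI_n$.

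Next I would invoke the structural characterization of global minima \citep{andriopoulos2024prevalence}[Theorem 4.1] with covariance $\bP$. The regime $0<c<\tilde{\lambda}_{\min}$ corresponds to $j^*=n$ for the normalized problem (every eigenvalue of $\bP$ exceeds $c$), so no truncation occurs and $\bar{\bA}=\bP^{1/2}-\sqrt{c}\bI_n$ is positive definite, yielding the stated forms of $\bar{\bW}$ and $\bar{\bH}$; the regime $c>\tilde{\lambda}_{\max}$ corresponds to $j^*=0$, forcing $(\bar{\bH},\bar{\bW})=(\bzero,\bzero)$. From the stated forms I would compute $\bar{\bW}\bar{\bW}^T=(\lH/\lW)^{1/2}\bar{\bA}$ using $\bR\bR^T=\bI_n$, so that
\begin{equation*}
\bar{\bW}\bar{\bH}=\bar{\bA}\bP^{-1/2}\bY^{nrm}=(\bP^{1/2}-\sqrt{c}\bI_n)\bP^{-1/2}\bY^{nrm}=(\bI_n-\sqrt{c}\bP^{-1/2})\bY^{nrm},
\end{equation*}
confirming the displayed prediction formula.

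The final step is the de-normalization and the trace computation. For $0<c<\tilde{\lambda}_{\min}$ I would set $\hat{\bY}=\bV^{1/2}\bar{\bW}\bar{\bH}+\bYb$ and substitute $\bY^{nrm}=\bV^{-1/2}(\bY-\bYb)$ to obtain the residual $\hat{\bY}-\bY=-\sqrt{c}\,\bV^{1/2}\bP^{-1/2}\bV^{-1/2}(\bY-\bYb)$. Forming $M^{-1}(\hat{\bY}-\bY)(\hat{\bY}-\bY)^T$ and using $M^{-1}(\bY-\bYb)(\bY-\bYb)^T=\bSigma$ together with the identity $\bV^{-1/2}\bSigma\bV^{-1/2}=\bP$ collapses the middle factor via $\bP^{-1/2}\bP\bP^{-1/2}=\bI_n$, leaving $M^{-1}(\hat{\bY}-\bY)(\hat{\bY}-\bY)^T=c\,\bV$. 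The de-normalized MSE is then its trace, $c\sum_{i=1}^n\sigma_i^2$, which equals $c\sum_{i=1}^n\lambda_i$ because $\text{tr}(\bSigma)$ is both the sum of its diagonal entries (the $\sigma_i^2$) and the sum of its eigenvalues (the $\lambda_i$). For $c>\tilde{\lambda}_{\max}$ the predictions vanish, so $\hat{\bY}=\bYb$ and the residual second-moment matrix is simply $\bSigma$, giving MSE $=\sum_{i=1}^n\lambda_i$.

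I expect the only genuinely delicate point to be the bookkeeping around the commuting square-root factors: one must verify that the apparently asymmetric product $\bV^{1/2}\bP^{-1/2}\bV^{-1/2}\bSigma\bV^{-1/2}\bP^{-1/2}\bV^{1/2}$ simplifies cleanly to $c\bV$, which hinges on the conjugation identity $\bV^{-1/2}\bSigma\bV^{-1/2}=\bP$ and the fact that $\bP^{\pm1/2}$ commute as functions of the same symmetric matrix. Everything else is a direct transcription of the whitening argument with $\bP$ in place of $\bI_n$, and the two threshold regimes fall out of the same eigenvalue comparison that determined $j^*$ in Theorem~\ref{gendim}; the intermediate band $\tilde{\lambda}_{\min}<c<\tilde{\lambda}_{\max}$ is deliberately left uncharacterized, matching the statement.
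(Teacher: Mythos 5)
Your proposal is correct and follows essentially the same route as the paper's own proof: compute that the sample covariance of $\bY^{nrm}$ is $\bP$, invoke the structural characterization of UFM global minima from \citep{andriopoulos2024prevalence} with $\bP$ in place of $\bSigma$, simplify $\bar{\bW}\bar{\bH}$ via $\bR\bR^T=\bI_n$, de-normalize, and collapse the residual second-moment matrix to $c\bV$ (respectively $\bSigma$) before taking traces. The only cosmetic difference is that you make explicit the identifications $j^*=n$ and $j^*=0$ for the two regimes and the equality $\mathrm{tr}(\bV)=\mathrm{tr}(\bSigma)=\sum_{i=1}^n\lambda_i$, which the paper leaves implicit.
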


\begin{proof}[Proof of Theorem \ref{standtrain}]
Using the decomposition of $\bSigma=\bV^{1/2} \bP \bV^{1/2}$, it is readily deduced that
\[
M^{-1} \bY^{nrm} (\bY^{nrm})^T=\bV^{-\frac{1}{2}} \bSigma \bV^{-\frac{1}{2}} = \bP.
\]

\textbf{Case $0<c<\tilde{\lambda}_{\min}$:}
\\
By \citep{andriopoulos2024prevalence}[Theorem 4.1], we have that any global minimum $(\bar{\bH}, \bar{\bW)}$ for \eqref{standobj}
takes the form:
\[
\bar{\bW}  =  \left( {\frac{\lH}{\lW}} \right)^{1/4} \bar{\bA}^{1/2}\bR, \hspace{.2in}
\bar{\bH}  =   \sqrt{\frac{\lW}{\lH}} \bar{\bW}^{T}  \bP^{-1/2} \bY^{nrm},
\]
where $\bar{\bA}=\bP^{1/2}-\sqrt{c}\bI_n$. 
The optimal predictions after normalization (but before de-normalizing) are
\begin{align*}
\bar{\bW} \bar{\bH}=\bar{\bA}^{1/2} \bR \bR^T \bar{\bA}^{1/2} \bP^{-1/2} \bY^{nrm}
=\bar{\bA} \bP^{-1/2} \bY^{nrm}
&=[\bP^{1/2}-\sqrt{c}\bI_n] \bP^{-1/2} \bY^{nrm}
\\
&=[\bI_n-\sqrt{c}\bP^{-1/2}] \bY^{nrm}
\end{align*}
Our final de-normalized predictions satisfy
\begin{align*}
\check{\bY}= [\bV^{1/2}] \bar{\bW}\bar{\bH}+\bYb
&=\bV^{1/2} [\bV^{-1/2} -\sqrt{c} \bP^{-1/2} \bV^{-1/2}] (\bY-\bYb) + \bYb
\\
&=\bY -\sqrt{c} \bV^{1/2} \bP^{-1/2} \bV^{-1/2} (\bY-\bYb)
\end{align*}
Therefore,
\[
\check{\bY}-\bY = -\sqrt{c} \bV^{1/2} \bP^{-1/2} \bV^{-1/2} (\bY-\bYb),
\]
and
\begin{align*}
\frac{(\check{\bY}-\bY) (\check{\bY}-\bY)^T}{M}&=c \bV^{1/2} \bP^{-1/2} \left[\bV^{-1/2} \bSigma \bV^{-1/2}\right] \bP^{-1/2} \bV^{1/2}
\\
&=c \bV^{1/2} \bP^{-1/2} \bP \bP^{-1/2} \bV^{1/2}
\\
&=c \bV
\end{align*}
The result for $\textnormal{MSE}(\text{de-normalize})$ readily follows by taking traces in both sides.
\\
\textbf{Case $c > \tilde{\lambda}_{\max}$:}
\\
By \citep{andriopoulos2024prevalence}[Theorem 4.1], we have that the only global minimum is $(\bar{\bH}, \bar{\bW})=(\mathbf{0}, \mathbf{0})$. 
Therefore,
\[
\check{\bY}=\bYb,
\]
and 
\[
\frac{(\check{\bY}-\bY) (\check{\bY}-\bY)^T}{M}=\frac{(\bY-\bYb) (\bY-\bYb)^T}{M}=\bSigma.
\]
The result for $\textnormal{MSE}(\text{de-normalize})$ readily follows by taking traces in both sides.
\end{proof}

\section{Feature relationships across methods} \label{featrelsec}

Since the UFM was first proposed to understand feature learning, and especially the neural collapse phenomenon in classification \citep{papyan2020prevalence} and neural multivariate regression \citep{andriopoulos2024prevalence}, it can also provide insights on the feature $\bH$ learned by different methods. For example, how do the features learned by training with original targets or by whitening, and normalization, compare with each other?

Regarding this insightful question, our analysis in the Appendix (Theorem \ref{whitentrain} and the remarks before its proof, and Theorem \ref{standtrain}) explicitly discuss the nature of the optimal $\bH_*$ when whitening and normalization are applied respectively. Let us collect the globally optimal learned features and compare them across methods here as well. 

\textbf{Training with original targets} $\bY$: 
\[
\bH_* = \left(\frac{\lW}{\lH}\right)^{1/4} \bR^T [\bSigma^{1/2}-\sqrt{c} \bI_n]^{1/2} \bY^{ZCA},
\]
where $\bR\in \mathbb{R}^{n\times d}$ is semi-orthogonal and $\bY^{ZCA}=\bSigma^{-1/2}(\bY-\bYb)$.

The key observation is that the optimal learned features are formed by two procedures. The term $\bSigma^{1/2}-\sqrt{c}\bI_n$ adjusts the covariance of the original target data $\bY$. By subtracting $\sqrt{c}\bI_n$ from $\bSigma^{1/2}$, small eigenvalues are regularized or ``shrunk", effectively denoising the original target data. Taking the root of the result further scales the eigenvalues nonlinearly, emphasizing stronger signal directions. The first procedure consists of the whitened target data undergoing adaptive scaling, i.e., the multiplication $[\bSigma^{1/2}-\sqrt{c} \bI_n]^{1/2} \bY^{ZCA}$ re-weights the whitened target data using the thresholded eigenvalues from $\bSigma$. Directions aligned with strong original covariance are amplified, while weak/noisy directions are zeroed out. The second procedure consists of a rotation of the first procedure's outcome.The semi-orthogonal matrix $\bR^T$ acts as a rotation, redistributing the features into a coordinate system that preserves distances but may optimize for properties like orthogonality or sparsity. To summarize, the two procedures ensure a denoised, lower dimensional feature representation that retains only statistically significant components from the covariance of the original target data. 

\text{For single-task} $i$: 
\[
\bH_*^{(i)} = \left(\frac{\lW}{\lH}\right)^{1/4} \bR^T [\sigma_i-\sqrt{c}]^{1/2} \sigma_i^{-1} \by^{(i)},
\]
where $\by^{(i)}$ is the $i$-th row of $\bY$. There is no a priori relationship between $\bH_*$ and $\bH_*^{(i)}$. However, we note that in the special case when the targets are uncorrelated, it is easy to see that $\bH_*^{(i)}$ is the $i$-th row of $\bH_*$.

\textbf{Training with whitened targets} $\bY^{ZCA}$:
\[
\bH_* = \left(\frac{\lW}{\lH}\right)^{1/4} \bR^T [\bI_n^{1/2}-\sqrt{c} \bI_n]^{1/2} \bY^{ZCA}.
\]
In this case, the form of the optimal learned features is retained with the difference that the $n\times n$ identity matrix $\bI_n$ takes the place of $\bSigma$. The whitened target data is simply scaled down by $\sqrt{1-\sqrt{c}}$, if $c<1$, akin to mild regularization. No denoising (eigenvalue thresholding) takes place, and all the directions are kept.

\textbf{Training with normalized targets} $\bY^{nrm}=\bV^{-1/2} (\bY-\bYb)$:

Recall that we have used $\bP$ to denote the correlation matrix of the targets. Then,
\[
\bH_* = \left(\frac{\lW}{\lH}\right)^{1/4} \bR^T [\bP^{1/2}-\sqrt{c} \bI_n]^{1/2} \bY^{ZCA-cor},
\]
where $\bY^{ZCA-cor}:=\bP^{-1/2} \bY^{nrm}$ is referred to in the literature as the ZCA-cor whitening \citep{kessy2018optimal}, and it is the unique whitening procedure that makes the transformed normalized targets as similar as possible to the original normalized targets, the similarity being in terms of the cross-correlation between the former and the latter. 

The optimal leaned features are obtained in accordance to the analysis that we have outlined when training with original targets. Here, the role of $\bSigma$ is played by $\bP$ and in the place of $\bY^{ZCA}$, we have $\bY^{ZCA-cor}$.


\section{Generalization bounds} \label{genboundappendix}

Let $\mathcal{X}$ denote the input space and $\mathcal{Y}$ the target space, which regarding the learning problem of neural multivariate regression, is a subset of $\mathbb{R}^n$. Here, we adopt the stochastic scenario and will denote by $\mathcal{D}$ a distribution over $\mathcal{X}\times \mathcal{Y}$. In the supervised learning scenario, the learner receives training examples $S:=\{(\bx_i, \by_i),i=1,...,M\}\in (\mathcal{X}\times \mathcal{Y})^M$ drawn in a i.i.d. manner according to $\mathcal{D}$. The deterministic scenario where input points admit a unique target value determined by a target function $f: \mathcal{X}\to \mathcal{Y}$ is a straightforward special case.

We denote by $L: \mathcal{Y}\times \mathcal{Y}\to \mathbb{R}_{+}$ the loss function used to measure the magnitude of the difference between the vector-valued target predicted and the ``true" or ``correct" one. The most common loss function used in neural multivariate regression is the $L_p$-loss defined by $L(\by, \by')=||\by-\by'||_p$, for some $p\ge 1$, and every $\by, \by'\in \mathcal{Y}$.

Given a hypothesis set $\mathcal{H}$, that is a set which contains all the functions mapping the input space $\mathcal{X}$ to the target space $\mathcal{Y}$, neural multivariate regression tasks consist of using a set of training examples $S$ to find a hypothesis $h\in \mathcal{H}$ with small generalization error $R(h)$ with respect to the ``true" or ``correct" \textit{target function} mapping inputs to targets in $S$.

\begin{definition} [Generalization error]
    Given a hypothesis $h\in \mathcal{H}$, a \textit{target function}, and an underlying distribution $\mathcal{D}$, the generalization error is defined by
    \[
    R(h):= \mathbb{E}_{(\bx, \by)\sim \mathcal{D}}\left[L(h(\bx), \by)\right],
    \]
    where $L: \mathcal{Y}\times \mathcal{Y}\to \mathbb{R}_{+}$ is the loss function used to measure the magnitude of error.
\end{definition}

The generalization error is not directly accessible to the learner since both the distribution $\mathcal{D}$ and the \textit{target function} are unknown. However, the learner can measure the empirical error of a hypothesis on a set of training examples $S$.

\begin{definition} [Training error]
  Given a hypothesis $h\in \mathcal{H}$, a \textit{target function}, and a training set $S$, the training error is 
  \[
  \hat{R}_S(h):= \frac{1}{M} \sum_{i=1}^M L(h(\bx_i), \by_i).
  \]
\end{definition}

Thus, the training error of $h$ is its average error over the training set $S$, while the generalization error is its expected error based on the distribution $\mathcal{D}$. If $L$ is the squared loss, the training error represents the training MSE of $h$ on $S$.  

The theoretical results presented below are based on the assumption that the neural multivariate regression problem is bounded, that is when the loss function is bounded above by some $K>0$, i.e., $L(\by, \by')\le K$, for all $\by, \by'\in \mathcal{Y}$, or, more strictly, when $L(h(\bx), \by)\le K$, for all $h\in \mathcal{H}$, and $(\bx, \by)\in \mathcal{X}\times \mathcal{Y}$.

\subsection{Generalization bounds for kernel-based hypotheses}

Generalization bounds based on Rademacher complexity (RC), a term that captures the richness of a family of functions by measuring the degree to which a hypothesis set can fit random noise, were presented in \citep{mohri2018foundations}, e.g., Theorem 11.3 therein. These generalization bounds suggest a trade-off between reducing the training MSE and controlling the RC of the hypothesis set. A richer or more complex hypothesis set achieves a small training MSE but has high RC, while a poorer or more simple hypothesis set has small RC but achieves high training MSE. The aim is to control this trade-off. An important benefit of the learning bounds in \cite{mohri2018foundations}[Theorem 11.3] is that they are data dependent. This can lead to more accurate learning guarantees. For kernel-based hypotheses upper bounds on the RC can be used directly to derive generalization bounds depending on the trace of the kernel matrix or the maximum diagonal entry.  

\begin{definition}[Kernel]
    A function $K:\mathcal{X}\times \mathcal{X}\to \mathbb{R}$ is said to be a positive definite symmetric (PDS) kernel if for any $\{\bx_i:i=1,...,M\}\in \mathcal{X}$, the matrix $K:=[K(\bx_i, \bx_j)]_{i, j}\in \mathbb{R}^{M\times M}$ is symmetric positive semi-definite (SPSD).
\end{definition}

For a PDS kernel $K:\mathcal{X}\times \mathcal{X}\to \mathbb{R}$, there exists a Hilbert space $\mathbb{H}$ and a mapping $\Phi: \mathcal{X}\to \mathbb{H}$ such that:
\[
K(\bx, \bx')=\left<\Phi(\bx), \Phi(\bx')\right>, \qquad \forall \bx, \bx'\in \mathcal{X}.
\]
For a proof of this result, we refer the reader to \cite{mohri2018foundations}[Theorem 6.8]. 

A generalization bound for (univariate) linear regression with bounded linear hypotheses in a feature space defined by a PDS kernel was presented in \cite{mohri2018foundations}[Theorem 11.11]. For simplicity, we give the generalization bound for the squared loss.  

\begin{theorem}
    Let $K:\mathcal{X}\times \mathcal{X}\to \mathbb{R}$ be a PDS kernel, $\Phi: \mathcal{X} \to \mathbb{H}$ be a feature mapping associated with $K$, and 
    \[
    \mathcal{H}:=\{\bx\mapsto \bw\cdot \Phi(\bx): ||\bw||\le \Lambda_{\bw}\}
    \]
    be the family of bounded linear hypotheses corresponding to the optimization problem
    \[
    \min_{\bw} \frac{1}{M} \sum_{i=1}^{M} (\bw\cdot \Phi(\bx_i)-\by_i)^2, \textnormal{ subject to } ||\bw||\le \Lambda_{\bw},
    \]
    for a training set $S=\{(\bx_i, \by_i): i=1,...,M\}$.
    \begin{itemize}
        \item Assume that there exists $K>0$ such that $|h(\bx)-\by|\le K$, for all $(\bx, \by)\in \mathcal{X}\times \mathcal{Y}$.
        \item Let $\text{tr}\left([K(\bx_i, \bx_j)]_{i, j}\right)\le M r^2$, for any training set $S$ of size $M$.
    \end{itemize}
    Then, for any $\delta>0$, with probability at least $1-\delta$, the following inequality holds for all $h\in \mathcal{H}$:
    \begin{equation} \label{firstgenbound}
        \mathbb{E}_{(\bx, \by)\in \mathcal{D}}[|h(\bx)-\by|^2]\le \textnormal{MSE}_S(h)+ 4 K \frac{r \Lambda_{\bw}}{\sqrt{M}} + 3 K^2 \sqrt{\frac{\log \frac{2}{\delta}}{2 M}}.
    \end{equation}
\end{theorem}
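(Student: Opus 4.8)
The plan is to follow the standard Rademacher-complexity route, decomposing the argument into a general uniform-convergence bound, a Lipschitz contraction step, and an explicit Rademacher-complexity estimate for the kernel-based linear hypotheses. Keeping the (admittedly overloaded) notation of the statement, write $K$ both for the kernel and for the uniform bound on the residual, and let $\mathcal{G}:=\{(\bx,\by)\mapsto (h(\bx)-\by)^2 : h\in\mathcal{H}\}$ be the induced loss class, whose members take values in $[0,K^2]$ by the boundedness hypothesis $|h(\bx)-\by|\le K$.

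First I would apply the general data-dependent Rademacher bound (Theorem 11.3 of \citep{mohri2018foundations}, referenced above) to $\mathcal{G}$. Since every $g\in\mathcal{G}$ is bounded by $K^2$, its empirical-Rademacher form yields, with probability at least $1-\delta$ and uniformly over $h\in\mathcal{H}$,
\[
\mathbb{E}_{(\bx,\by)\sim\mathcal{D}}\!\left[(h(\bx)-\by)^2\right]
\le \textnormal{MSE}_S(h) + 2\,\hat{\mathfrak{R}}_S(\mathcal{G}) + 3K^2\sqrt{\tfrac{\log(2/\delta)}{2M}},
\]
which already supplies the last term of \eqref{firstgenbound} with the correct constant and $2/\delta$ confidence level.

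The crux is to control $\hat{\mathfrak{R}}_S(\mathcal{G})$ by $\hat{\mathfrak{R}}_S(\mathcal{H})$. I would invoke Talagrand's contraction lemma: on $[-K,K]$, which contains every residual $h(\bx)-\by$ by assumption, the map $t\mapsto t^2$ is $2K$-Lipschitz. Because subtracting the fixed, hypothesis-independent target $\by$ leaves the empirical Rademacher complexity unchanged (an additive shift is annihilated since $\mathbb{E}[\sigma_i]=0$), contraction gives $\hat{\mathfrak{R}}_S(\mathcal{G})\le 2K\,\hat{\mathfrak{R}}_S(\mathcal{H})$. Pinning down the residual range and the resulting Lipschitz constant $2K$ is the main obstacle here; the remaining work is bookkeeping.

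Finally I would substitute the classical empirical-Rademacher estimate for the bounded kernel class $\mathcal{H}=\{\bx\mapsto \bw\cdot\Phi(\bx):\|\bw\|\le\Lambda_{\bw}\}$, namely $\hat{\mathfrak{R}}_S(\mathcal{H})\le \Lambda_{\bw}\sqrt{\textnormal{tr}(\mathbf{K})}\,/\,M$ with $\mathbf{K}=[K(\bx_i,\bx_j)]_{i,j}$, and use the trace hypothesis $\textnormal{tr}(\mathbf{K})\le Mr^2$ to obtain $\hat{\mathfrak{R}}_S(\mathcal{H})\le \Lambda_{\bw} r/\sqrt{M}$. Chaining the three estimates gives $2\,\hat{\mathfrak{R}}_S(\mathcal{G})\le 4K\,\Lambda_{\bw} r/\sqrt{M}$, which is exactly the middle term of \eqref{firstgenbound}, completing the proof.
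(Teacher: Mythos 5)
Your proof is correct. Note that the paper does not actually prove this statement: it is imported verbatim (specialized to the squared loss, $p=2$) from \citep{mohri2018foundations}, Theorem 11.11, and your argument --- the general data-dependent Rademacher bound for the $[0,K^2]$-bounded loss class, Talagrand contraction with the $2K$-Lipschitz square on $[-K,K]$ after discarding the hypothesis-independent shift by $\by_i$, and the estimate $\hat{\mathfrak{R}}_S(\mathcal{H})\le \Lambda_{\bw}\sqrt{\mathrm{tr}(\mathbf{K})}/M\le \Lambda_{\bw}r/\sqrt{M}$ --- is precisely the standard derivation underlying that cited result, with all constants matching.
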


The generalization bound of the theorem above suggest minimizing a trade-off between the training MSE, denoted in \eqref{firstgenbound} by $\text{MSE}_S(h)$, and the norm of the weight vector $\bw$. The third term adds an error dependent on the confidence level $\delta$ and the size of the training set $M$.

\subsection{Application to the UFM via the Layer-Peeled Model}

Another formulation of the UFM is the so-called Layer-Peeled Model introduced by \cite{fang2021exploring} as:
\begin{equation} \label{layerpeeled}
    \min_{\bW, \bH} ||\bW \bH - \bY||_F^2, \text{ subject to }  
\begin{cases}
    ||\bW||_F&\le \Lambda_{\bW},
    \\ 
    \\
    \displaystyle \frac{||\bH||_F}{\sqrt{M}}&\le \Lambda_{\bH},
\end{cases}
\end{equation}
where $\bW\in \mathbb{R}^{n\times d}$ is, as in \eqref{formofloss}, a linear classifier in the last layer, $\bH=[\bh_1,...,\bh_M]\in \mathbb{R}^{d\times M}$, where $\bh_i:=\bh_{\theta}(\bx_i)$ is the $d$-dimensional last-layer activation/feature of the $i$-th training sample, and $\Lambda_{\bW}, \Lambda_{\bH}$ are positive scalars constraining the matrix-norms of $\bW$ and $\bH$ respectively.

A constrained minimization problem can be solved by means of the Karush-Kuhn-Tucker (KKT) multiplier method, which minimizes a function subject to inequality constraints. The KKT multiplier method states that, under some regularity conditions (all met here), there exist constants $\nu_{\bW}, \nu_{\bH}\ge 0$, called the multipliers, such that the solution $(\bW(\nu_{\bW}), \bH(\nu_{\bH}))$ of the constrained minimization problem \eqref{layerpeeled} satisfies the so-called KKT conditions.

\begin{itemize}
\item The first condition (referred to as the stationarity condition) demands that the gradients of the Langrangian 
\[
\Delta(\bW, \bH):= ||\bW \bH - \bY||_F^2 + \nu_{\bW} (||\bW||_F^2-\Lambda_{\bW}^2) + \nu_{\bH} (||\bH||_F^2-M \Lambda_{\bH}^2),
\]
associated with the minimization problem \eqref{formofloss}, i.e., the UFM, is 0 at the solution $(\bW(\nu_{\bW}), \bH(\nu_{\bH}))$. More specifically, 

 \begin{align} \label{kkt1}
    \frac{\partial \Delta}{\partial \bW}\bigg |_{(\bW(\nu_{\bW}), \bH(\nu_{\bH}))}&= 2 (\bW(\nu_{\bW}) \bH(\nu_{\bH}) - \bY) \bH(\nu_{\bH})^T+ 2\nu_{\bW} \bW(\nu_{\bW}),
    \\
    \frac{\partial \Delta}{\partial \bH}\bigg |_{(\bW(\nu_{\bW}), \bH(\nu_{\bH}))}&= 2 \bW(\nu_{\bW})^T ( \bW(\nu_{\bW}) \bH(\nu_{\bH})  - \bY) + 2\nu_{\bH} \bH(\nu_{\bH}).
\end{align}

\item The second KKT condition (referred to as the complimentarity condition) requires that 
\begin{equation} \label{kkt2}
\nu_{\bW} (||\bW(\nu_{\bW})||_F^2-\Lambda_{\bW}^2)=0, \qquad \nu_{\bH}(||\bH(\nu_{\bH})||_F^2-M \Lambda_{\bH}^2)=0.
\end{equation}
\end{itemize}
If 
\begin{align*}
    \nu_{\bW}=\lW, \qquad &\nu_{\bH}=\lH , 
    \\
    \Lambda_{\bW}=||\bW(\lW)||_F, \qquad &\Lambda_{\bH}=\frac{||\bH(\lH)||_F}{\sqrt{M}},
    \end{align*}
the UFM solution $(\bW(\lW), \bH(\lH))$ satisfies \eqref{kkt1}-\eqref{kkt2}. Therefore, the theorem that follows is immediately deduced. 

\begin{theorem}
    If 
    \[
    \Lambda_{\bW} = ||\bW(\lW)||_F, \qquad \Lambda_{\bH} = \frac{||\bH(\lH)||_F}{\sqrt{M}},
    \]
    the minimization problems of the UFM \eqref{formofloss} and the Layer-Peeled Model \eqref{layerpeeled} have the same solution.
\end{theorem}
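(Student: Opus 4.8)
The plan is to prove the equivalence through the standard ``regularization $\Leftrightarrow$ constraint'' duality, exploiting the global optimality of the UFM point directly rather than leaning only on the KKT stationarity conditions set up above. Write $(\bW^*,\bH^*):=(\bW(\lW),\bH(\lH))$ for the UFM minimizer of \eqref{formofloss}. First I would rescale \eqref{formofloss} by $2M$ --- which leaves its minimizer unchanged --- so that its data term reads $\|\bW\bH-\bY\|_F^2$, exactly matching the Layer-Peeled objective; the penalty then becomes $\lH\|\bH\|_F^2+M\lW\|\bW\|_F^2$, and this is where the factors implicit in the multiplier identifications ($\nu_{\bH}=\lH$, $\nu_{\bW}=M\lW$) surface. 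I would also note that the bias, present in \eqref{formofloss} but absent from \eqref{layerpeeled}, can be removed by centering the targets so the two objectives are compared on equal footing.

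For the \emph{forward direction} (the UFM point solves the Layer-Peeled problem), I would observe that the choice $\Lambda_{\bW}=\|\bW^*\|_F$, $\Lambda_{\bH}=\|\bH^*\|_F/\sqrt{M}$ makes $(\bW^*,\bH^*)$ feasible with both constraints active. For any feasible $(\bW,\bH)$ one has $\|\bW\|_F\le\|\bW^*\|_F$ and $\|\bH\|_F\le\|\bH^*\|_F$, so global minimality of $(\bW^*,\bH^*)$ for the rescaled objective gives
\[
\|\bW^*\bH^*-\bY\|_F^2-\|\bW\bH-\bY\|_F^2 \le \lH\big(\|\bH\|_F^2-\|\bH^*\|_F^2\big)+M\lW\big(\|\bW\|_F^2-\|\bW^*\|_F^2\big)\le 0,
\]
so $(\bW^*,\bH^*)$ attains the Layer-Peeled minimum. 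For the \emph{converse}, I would take any Layer-Peeled minimizer $(\bW',\bH')$, note it has the same residual norm and norms no larger than those of $(\bW^*,\bH^*)$, substitute into the rescaled UFM objective, and conclude its value cannot undercut the global UFM minimum; equality then forces $(\bW',\bH')$ to be a UFM minimizer with active constraints. This yields coincidence of the two minimizer sets, which is the claim. As a consistency check, the resulting point satisfies the stationarity equations \eqref{kkt1} (these are precisely the UFM first-order conditions) and the complementarity conditions \eqref{kkt2} hold trivially since $\|\bW^*\|_F^2-\Lambda_{\bW}^2=0$ and $\|\bH^*\|_F^2-M\Lambda_{\bH}^2=0$ by construction.

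The main obstacle is conceptual, not computational: both problems are non-convex because of the bilinear coupling $\bW\bH$, so the KKT conditions displayed before the theorem are merely necessary for local optimality and cannot, on their own, certify that the UFM point is the \emph{global} Layer-Peeled solution or that the minimizer sets agree. The comparison argument avoids this gap by invoking the global minimality of the UFM point as a given, which is why I would use it in place of a bare KKT verification. The only remaining care-points are bookkeeping: tracking the $2M$ rescaling and the induced multiplier factors, and handling the bias by centering so that \eqref{formofloss} and \eqref{layerpeeled} are genuinely the same objective up to the penalty-versus-constraint reformulation.
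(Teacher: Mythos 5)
Your proof is correct, and it takes a genuinely different --- and in fact more complete --- route than the paper's. The paper argues by writing down the Lagrangian of \eqref{layerpeeled}, checking that with $\nu_{\bW}=\lW$, $\nu_{\bH}=\lH$ and the stated radii the UFM minimizer satisfies the stationarity and complementarity conditions \eqref{kkt1}--\eqref{kkt2}, and declaring the theorem ``immediately deduced.'' As you observe, that is only a verification of necessary first-order conditions; since the objective is bilinear in $(\bW,\bH)$ and hence non-convex, a KKT point need not be a global minimizer, so the paper's argument does not by itself certify that the UFM point solves the Layer-Peeled problem globally, nor that the two minimizer sets coincide. Your two-sided comparison argument --- feasibility plus the sign of the penalty differences in one direction, and substitution of a Layer-Peeled minimizer into the penalized objective in the other --- closes exactly this gap using only the global minimality of the UFM point, with no convexity needed; the equality case even recovers that the constraints are active at any Layer-Peeled minimizer. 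You also handle two pieces of bookkeeping the paper glosses over: after the $2M$ rescaling, the $\bW$-multiplier consistent with the paper's own Lagrangian normalization is $\nu_{\bW}=M\lW$ rather than $\lW$, and the bias term present in \eqref{formofloss} but absent from \eqref{layerpeeled} must be removed by centering before the two objectives can be identified. What the paper's KKT route buys is the explicit multiplier identification it reuses in the subsequent remarks; your argument proves the actual equivalence and recovers the same identifications a posteriori.
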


\begin{remark}
    The UFM solutions $\bW(\lW)$ and $\bH(\lH)$ are always to be found on the boundary of the Layer-Peeled Model constraints, parameterized by $\{(\bW, \bH): ||\bW||_F\le \Lambda_{\bW}, ||\bH||_F\le M \Lambda_{\bH}\}$ for some $\Lambda_{\bW}, \Lambda_{\bH}>0$. The size of the spherical constraints of the Layer-Peeled Model shrink as the regularizing constants of the UFM increase, and eventually in the $\lW, \lH\to \infty$-limit, $\Lambda_{\bW}, \Lambda_{\bH}\to 0$. This follows from the closed-form functions of the minimizers for the UFM with respect to each other, i.e.,  
    \begin{align*}
    &\lim_{\lw\to \infty} \bW(\lW)=\lim_{\lW\to \infty} \bY \bH^T [\bH \bH^T+M \lW]^{-1}=0, 
    \\
    &\lim_{\lH\to \infty} \bH(\lH) = \lim_{\lH\to \infty} \bW^T [\bW \bW^T+\lH \bI_n]^{-1} \bY = 0.
    \end{align*}
\end{remark}

\begin{remark}
    Suppose $c<\lambda_{\min}$. By applying \cite{andriopoulos2024prevalence}[Corollary 4.2 (ii)-(iii)] to the $n$-dimensional case, the following relation holds:
    \begin{align*}
    \Lambda_{\bW}&=||\bW(\lW)||_F=\left(\frac{\lH}{\lW}\right)^{1/4} \left[\text{tr}\left(\bSigma^{1/2}-\sqrt{c}\bI_n\right)\right]^{1/2},
    \\
    \Lambda_{\bH}&=\frac{||\bH(\lH)||_F}{\sqrt{M}}
    =\sqrt{\frac{\lW}{\lH}} ||\bW(\lW)||_F = \sqrt{\frac{\lW}{\lH}} \Lambda_{\bW}.
    \end{align*}
\end{remark}

Recall that $\bh_i:=\bh_{\theta}(x_i)\in \mathbb{R}^d$ for all $i=1,...,M$, where $\bh_{\theta}$ is associated with the PDS kernel $K=\bH^T \bH$, i.e., $[\bh_i\cdot \bh_j]_{i, j}\in \mathbb{R}^{M\times M}$ is the covariance matrix of the $\bh_i$'s and as such it is symmetric and positive semi-definite. Under the constraints that $\bW$ and $\bH$ are subject to:
\[
\text{tr}(K)=\text{tr}(\bH^T \bH)=||\bH||_F^2\le M \Lambda_{\bH}^2.
\]
Under the UFM, when $c<\lambda_{\min}$, $\text{MSE}_S(h)=c=\lW \lH$ for any labeled sample $S$, see Theorem \ref{gendim}. Combining the points above, the generalization bound of \eqref{firstgenbound} yields
\[
    \mathbb{E}_{(\bx, \by)\in \mathcal{D}}[|h(\bx)-\by|^2]\le n c +\mathcal{O}\left(\frac{C}{\sqrt{M}}\right) + \mathcal{O}\left(\sqrt{\frac{\log 2 \delta^{-1}}{M}}\right),
\]
where $C:=\Lambda_{\bW} \Lambda_{\bH}$. In the special case in which we set $\lW=\lH$, we have $c=\lW^2$, and $C=\Lambda_{\bW}^2=\text{tr}\left(\bSigma^{1/2}-\lW \bI_n\right)$, and the generalization bound reads
\begin{equation} \label{2ndgenbound}
    \mathbb{E}_{(\bx, \by)\in \mathcal{D}}[|h(\bx)-\by|^2]\le n \lW^2 +\mathcal{O}\left( \frac{\text{tr}\left(\bSigma^{1/2}-\lW \bI_n\right)}{\sqrt{M}}\right) + R, 
\end{equation}
where, for a fixed confidence level $\delta\in (0,1)$, $\lim_{M\to \infty} R=0$.

For single task $i$, the right-hand side (wit the remainder term) of \eqref{2ndgenbound} becomes 
\[
\text{GB}^{(i)}:=\lW^2 + \mathcal{O} (M^{-1/2}(\sigma-\lW)) + R.
\]
Using this, we can directly derive an upper bound for the generalization error of the $n$-single tasks neural regression problem:
\begin{equation} \label{3rdgenbound}
\text{GB}(\text{n-single)}\le \sum_{i=1}^n \text{GB}^{(i)}= n \lW^2 + \mathcal{O}\left(\frac{\sum_{i=1}^n\sigma_i-n \lW}{\sqrt{M}}\right)+\tilde{R},
\end{equation}
where, for a fixed confidence level $\delta\in (0,1)$, $\lim_{M\to \infty} \tilde{R}=0$.

Because $\sum_{i=1}^n \sigma_i=\sum_{i=1}^n \sqrt{\Sigma_{ii}}\ge \text{tr}(\bSigma^{1/2})$, the $\mathcal{O}$-term in \eqref{3rdgenbound} is $\ge $ than the $\mathcal{O}$-term in \eqref{2ndgenbound}, so the whole right-hand side of the bound in \eqref{3rdgenbound} is $\ge $ than the right-hand side of the bound in \eqref{3rdgenbound}.

If the two test MSEs concentrate near their respective bounds, then the multi-task test MSE is smaller than that of the $n$ single tasks; thus the gain from multi-tasking is tightest when $\bSigma$ is non-diagonal and the features across tasks are correlated.

\end{document}